\newcommand{\mb}{\mathbf}
\newcommand{\bs}{\boldsymbol}
\newtheorem{theorem}{Theorem}
\newtheorem{lemma}{Lemma}
\newtheorem{definition}{Definition}
\newtheorem{condition}{Condition}
\newcommand{\mc}{\mathcal}
\newcommand{\mr}{\mathrm}
\newcommand{\E}{\mathbb{E}}
\newcommand{\tr}{\mathrm{tr}}
\newcommand{\tb}{\textbf}
\newcommand{\ti}{\textit}
\newcommand{\spann}{\mathrm{span}}
\newcommand{\cond}{\mathrm{cond}}
\newcommand{\nn}{\nonumber}
\title{Orthogonality-Promoting Distance Metric Learning: Convex Relaxation and Theoretical Analysis}
\author{
\alignauthor Pengtao Xie\textsuperscript{$\dag$*}, Wei Wu\textsuperscript{*}, Yichen Zhu\textsuperscript{\S} and Eric P. Xing\textsuperscript{$\dag$}\\
\affaddr{
\textsuperscript{$\dag$}Petuum Inc.\\
\textsuperscript{*}School of Computer Science, Carnegie Mellon University\\
\textsuperscript{\S}School of Mathematical Sciences, Peking University
} \\
\email{\{pengtao.xie, eric.xing\}@petuum.com, weiwu2@cs.cmu.edu, acezyc@pku.edu.cn}
}
\author[$\dag$*]{Pengtao Xie}
\author[*]{Wei Wu}
\author[$\S$]{Yichen Zhu}
\author[$\dag$]{Eric P. Xing}
\affil[$\dag$]{Petuum Inc.}
\affil[*]{School of Computer Science, Carnegie Mellon University}
\affil[$\S$]{School of Mathematical Sciences, Peking University}
\affil[ ]{\{pengtao.xie, eric.xing\}@petuum.com, weiwu2@cs.cmu.edu, acezyc@pku.edu.cn}
\date{}
\begin{document}

% If your paper is accepted and the title of your paper is very long,
% the style will print as headings an error message. Use the following
% command to supply a shorter title of your paper so that it can be
% used as headings.
%
%\runningtitle{I use this title instead because the last one was very long}

% If your paper is accepted and the number of authors is large, the
% style will print as headings an error message. Use the following
% command to supply a shorter version of the authors names so that
% they can be used as headings (for example, use only the surnames)
%
%\runningauthor{Surname 1, Surname 2, Surname 3, ...., Surname n}

%\twocolumn[

\maketitle

\begin{abstract}
Distance metric learning (DML), which learns a distance metric from labeled ``similar'' and ``dissimilar'' data pairs, is widely utilized. Recently, several works investigate orthogonality-promoting regularization (OPR), which encourages the projection vectors in DML to be close to being orthogonal, to achieve three effects: (1) high balancedness -- achieving comparable performance on both frequent and infrequent classes; (2) high compactness -- using a small number of projection vectors to achieve a ``good'' metric; (3) good generalizability -- alleviating overfitting to training data. While showing promising results, these approaches suffer three problems. First, they involve solving non-convex optimization problems where achieving the global optimal is NP-hard. Second, it lacks a theoretical understanding why OPR can lead to balancedness. Third, the current generalization error analysis of OPR is not directly on the regularizer. In this paper, we address these three issues by (1) seeking convex relaxations of the original nonconvex problems so that the global optimal is guaranteed to be achievable; (2) providing a formal analysis on OPR's capability of promoting balancedness; (3)  providing a theoretical analysis that directly reveals the relationship between OPR and generalization performance. Experiments on various datasets demonstrate that our convex methods are more effective in promoting balancedness, compactness, and generalization, and are computationally more efficient, compared with the nonconvex methods.
\end{abstract}

\section{Introduction}

Given data pairs labeled as either ``similar" or ``dissimilar", distance metric learning~\cite{xing2002distance,weinberger2005distance,davis2007information} learns a distance measure in such a way that
similar examples are placed close to each other while dissimilar ones are separated apart. The learned distance metrics are important to many downstream tasks, such as retrieval~\cite{chen2017diversity}, classification~\cite{weinberger2005distance} and clustering~\cite{xing2002distance}.
One commonly used distance metric between two examples $\mb{x},\mb{y}\in\mathbb{R}^D$ is: $\|\mb{A}\mb{x}-\mb{A}\mb{y}\|_{2}$~\cite{weinberger2005distance,xie2015learning,chen2017diversity}, which is parameterized by $R$ projection vectors (in $\mb{A}\in\mathbb{R}^{R\times D})$.

Many works~\cite{wang2012semi,xie2015learning,wang2015deep,raziperchikolaeilearning2016,chen2017diversity} have proposed orthogonality-promoting DML to learn distance metrics that are (1) \ti{balanced}: performing equally well on data instances belonging to frequent and infrequent classes; (2) \ti{compact}: using a small number of projection vectors to achieve a ``good" metric, (i.e., capturing well the relative distances of the data pairs); (3) \ti{generalizable}: reducing the overfitting to training data. Regarding balancedness, under many circumstances, the frequency of classes, defined as the number of examples belonging to each class, can be highly imbalanced. 
Classic DML methods are sensitive to the skewness of the frequency of the classes: they perform favorably on frequent classes whereas less well on infrequent classes --- a phenomenon also confirmed in our experiments in Section \ref{sec:exp}. However, infrequent classes are of crucial importance in many applications, and should not be ignored. 
For example, in a clinical setting, many diseases occur infrequently, but are life-threatening. 
Regarding compactness, the number of the projection vectors $R$ entails a tradeoff between performance and computational complexity~\cite{ge2014graph,xie2015learning,raziperchikolaeilearning2016}. On one hand, more projection vectors bring in more expressiveness in measuring distance. 
On the other hand, a larger $R$ incurs a higher computational overhead since the number of weight parameters in $\mb{A}$ grows linearly with $R$. It is therefore desirable to keep $R$ small without hurting much ML performance. Regarding generalization performance, in the case where the sample size is small but the size of $\mb{A}$ is large, overfitting can easily happen.

To address these three issues, many studies~\cite{xing2002distance,weinberger2005distance,davis2007information,guillaumin2009you,ying2012distance,kostinger2012large,zadehgeometric} propose to regularize the projection vectors to be close to being orthogonal. For balancedness, they argue that, without orthogonality-promoting regularization (OPR), the majority of projection vectors learn latent features for frequent classes since these classes have dominant signals in the dataset; through OPR, the projection vectors uniformly ``spread out'', giving both infrequent and frequent classes a fair treatment and thus leading to a more balanced distance metric (see \cite{xienear} for details). For compactness, they claim that: ``diversified'' projection vectors bear less redundancy and are mutually complementary; as a result, a small number of such vectors are sufficient to achieve a ``good'' distance metric. For generalization performance, they posit that OPR imposes a structured constraint on the function class of DML, hence reduces model complexity.

While these orthogonality-promoting DML methods have shown promising results, they have three problems. First, they involve solving non-convex optimization problems where the global solution is extremely difficult, if not impossible, to obtain. 
Second, no formal analysis is conducted regarding why OPR can promote balancedness. Third, while the generalization error (GE) analysis of OPR has been studied in \cite{xienear}, it is incomplete. In this analysis, they first show that the upper bound of GE is a function of cosine similarity (CS), then show that CS and the regularizer are somewhat aligned in shape. They did not establish a direct relationship between the GE bound and the regularizer. 

In this paper, we aim at addressing these problems by making the following contributions:
\begin{itemize}[leftmargin=*]
    \item We relax the nonconvex, orthogonality-promoting DML problems into convex problems and develop efficient proximal gradient descent algorithms. The algorithms only run once with a single initialization, and hence are much more efficient than existing non-convex methods.
    \item We perform theoretical analysis which formally reveals the relationship between OPR and balancedness: stronger OPR leads to more balancedness.
    \item We perform generalization error (GE) analysis which shows that reducing the convex orthogonality-promoting regularizers can reduce the upper bound of GE.
    \item We apply the learned distance metrics for information retrieval to healthcare, texts, images, and sensory data. Compared with non-convex baseline methods, our approaches achieve higher computational efficiency and are more capable of improving balancedness, compactness and generalizability.
\end{itemize}

\section{Related Works}
\subsection{Distance Metric Learning}
Many studies \cite{xing2002distance,weinberger2005distance,davis2007information,guillaumin2009you,ying2012distance,kostinger2012large,zadehgeometric} have investigated DML. Please refer to~\cite{kulis2013metric,wang2015survey} for a detailed review. Xing et al.~\cite{xing2002distance} learn a Mahalanobis distance by minimizing the sum of distances of all similar data pairs subject to the constraint that the sum of all dissimilar pairs is no less than 1. Weinberger et al.~\cite{weinberger2005distance} propose large margin metric learning, which is applied for k-nearest neighbor classification. For each data example $\mb{x}_i$, they first obtain $l$ nearest neighbors based on Euclidean distance. Among the $l$ neighbors, some (denoted by $\mb{S}=\{\mb{x}_j\}$) have the same class label with $\mb{x}_i$ and others (denoted by $D=\{\mb{x}_k\}$) do not. Then a projection matrix $\mb{L}$ is learned such that $\|\mb{L}(\mb{x}_i-\mb{x}_k)\|_2^2-\|\mb{L}(\mb{x}_i-\mb{x}_j)\|_2^2\geq 1$ where $\mb{x}_j\in \mb{S}$ and $\mb{x}_k\in \mb{D}$. Davis et al.~\cite{davis2007information} learn a Mahalanobis distance such that the distance between similar pairs is no more than a threshold $s$ and the distance between dissimilar pairs is no greater than a threshold $t$. Guillaumin et al.~\cite{guillaumin2009you} define a probability of the similarity/dissimilarity label conditioned on the Mahalanobis distance: $p(y_{ij}|(\mb{x}_i-\mb{x}_j)^\top\mb{M}(\mb{x}_i-\mb{x}_j))=1/(1+\exp((\mb{x}_i-\mb{x}_j)^\top\mb{M}(\mb{x}_i-\mb{x}_i)))$, where the binary variable $y_{ij}$ equals to 1 if $\mb{x}_i$ and $\mb{x}_j$ have the same class label. $\mb{M}$ is learned by maximizing the conditional likelihood of the training data. Kostinger et al.~\cite{kostinger2012large} learn a Mahalanobis distance metric from equivalence constraints based on likelihood ratio test. The Mahalanobis matrix is computed in one shot, without going through an iterative optimization procedure. Ying and Li~\cite{ying2012distance} formulate DML as an eigenvalue optimization problem. Zadeh et al.~\cite{zadehgeometric} propose a geometric mean metric learning approach, based on the Riemannian geometry of positive definite
matrices. Similar to \cite{kostinger2012large}, the Mahalanobis matrix has a closed form solution without iterative optimization.

To avoid overfitting in DML, various regularization approaches have been explored. Davis et al.~\cite{davis2007information} regularize the Mahalanobis matrix to be close to another matrix that encodes prior information, where the closeness is measured using log-determinant divergence. Qi et al.~\cite{qi2009efficient} use $\ell_{1}$ regularization to learn sparse distance metrics for high-dimensional, small-sample problems. Ying et al.~\cite{ying2009sparse} use $\ell_{2,1}$ norm to simultaneously encourage low-rankness and sparsity. Trace norm is leveraged to encourage low-rankness in~\cite{niu2012information,liu2015low}. Qian et al.~\cite{qian2014distance} apply dropout to DML. Many works~\cite{wang2012semi,ge2014graph,xie2015learning,yao2016diversity,raziperchikolaeilearning2016,chen2017diversity} study diversity-promoting regularization in DML or hashing. They define regularizers based on squared Frobenius norm~\cite{wang2012semi,fu2014nokmeans,ge2014graph,chen2017diversity} or angles~\cite{xie2015learning,yao2016diversity} to encourage the projection vectors to approach orthogonal. Several works~\cite{liu2008output,weiss2009spectral,gong2013iterative,ji2014batch,wang2015deep} impose strict orthogonal constraint on the projection vectors. As observed in previous works~\cite{wang2012semi,fu2014nokmeans} and our experiments, strict orthogonality hurts performance. Isotropic hashing~\cite{kong2012isotropic,ge2014optimized} encourages the variances of different projected dimensions to be equal to achieve balance. Carreira-Perpin{\'a}n and Raziperchikolaei~\cite{carreira2016ensemble} propose a diversity hashing method which first trains hash functions independently and then introduces diversity among them based on classifier ensembles.

\subsection{Orthogonality-Promoting Regularization}

Orthogonality-promoting regularization has been studied in other problems as well, including ensemble learning, latent variable modeling, classification and multitask learning. In ensemble learning, many studies \cite{kuncheva2003measures,banfield2005ensemble,partalas2008focused,yu2011diversity} promote orthogonality among the coefficient vectors of base classifiers or regressors, with the aim to improve generalization performance and reduce computational complexity. Recently, several works \cite{zou2012priors,bao2013incoherent,cogswell2015reducing,xie2015diversifying} study orthogonality-promoting regularization of latent variable models (LVMs), which encourages the components in LVMs to be mutually orthogonal, for the sake of capturing infrequent patterns and reducing the number of components without sacrificing modeling power. In these works, various orthogonality-promoting regularizers have been proposed, based on determinantal point process~\cite{kulesza2012determinantal,zou2012priors} and cosine similarity~\cite{yu2011diversity,bao2013incoherent,xie2015diversifying}. In multi-way classification, Malkin and Bilmes~\cite{malkin2008ratio} propose to use the determinant of a covariance matrix to encourage orthogonality among classifiers. Jalali et al.~\cite{jalali2015variational} propose a class of \ti{variational Gram functions} (VGFs) to promote pairwise orthogonality among vectors. While these VGFs are convex, they can only be applied to non-convex DML formulations. As a result, the overall regularized DML is non-convex and is not amenable for convex relaxation.

In the sequel, we review two families of orthogonality-promoting regularizers.
\paragraph{Determinantal Point Process}
\cite{zou2012priors} employed the determinantal point process (DPP) \cite{kulesza2012determinantal} as a prior to induce orthogonality in latent variable models. DPP is defined over $K$ vectors: $p(\{\mb{a}_i\}_{i=1}^{K})\propto \textrm{det}(\mb{L})$, where $\mb{L}$ is a $K\times K$ kernel matrix with $L_{ij}=k(\mb{a}_i,\mb{a}_j)$ and $k(\cdot,\cdot)$ as a kernel function. $\textrm{det}(\cdot)$ denotes the determinant of a matrix. A configuration of $\{\mb{a}_i\}_{i=1}^{K}$ with larger probability is deemed to be more orthogonal. The underlying intuition is that: $\textrm{det}(\mb{L})$ represents the volume of the parallelepiped formed by vectors in the kernel-induced feature space. If these vectors are closer to being orthogonal, the volume is larger, which results in a larger $p(\{\mb{a}_i\}_{i=1}^{K})$. The shortcoming of DPP is that it is sensitive to vector scaling. Enlarging the magnitudes of vectors results in larger volume, but does not essentially affects the orthogonality of vectors.

\paragraph{Pairwise Cosine Similarity}
Several works define orthogonality-promoting regularizers based on the pairwise cosine similarity among component vectors: if the cosine similarity scores are close to zero, then the components are closer to being orthogonal. Given $K$ component vectors, the cosine similarity $s_{ij}$ between each pair of components $\mb{a}_i$ and $\mb{a}_j$ is computed: $s_{ij}=\mb{a}_i\cdot\mb{a}_j/(\|\mb{a}_i\|_2\|\mb{a}_j\|_2)$. Then these scores are aggregated as a single score. In \cite{yu2011diversity}, these scores are aggregated as $\sum_{1\leq i< j\leq K}(1-s_{ij})$. In \cite{bao2013incoherent}, the aggregation is performed as $-\log(\frac{1}{K(K-1)}\sum_{1\leq i< j\leq K}\beta |s_{ij}|)^{\frac{1}{\beta}}$ where $\beta>0$. In \cite{xie2015diversifying}, the aggregated score is defined as mean of $\arccos(|s_{ij}|)$ minus the variance of $\arccos(|s_{ij}|)$.

\section{Preliminaries}
We review a DML method~\cite{xienear} that uses BMD~\cite{kulis2009low} to promote orthogonality. 

\paragraph{Distance Metric Learning}
Given data pairs labeled either as ``similar'' $\mathcal{S}=\{(\mb{x}_{i},\mb{y}_{i})\}_{i=1}^{|\mathcal{S}|}$ or ``dissimilar'' $\mathcal{D}=\{(\mb{x}_{i},\mb{y}_{i})\}_{i=1}^{|\mathcal{D}|}$, DML \cite{xing2002distance,weinberger2005distance,davis2007information} aims to learn a distance metric under which similar examples are close to each other and dissimilar ones are separated far apart. There are many ways to define a distance metric. Here, we present two popular choices. One is based on linear projection~\cite{weinberger2005distance,xie2015learning,chen2017diversity}. Given two examples $\mb{x},\mb{y}\in\mathbb{R}^D$, a linear projection matrix $\mb{A}\in\mathbb{R}^{R\times D}$ can be utilized to map them into a $R$-dimensional latent space. The distance metric is then defined as their squared Euclidean distance in the latent space: $\|\mb{Ax}-\mb{Ay}\|_2^2$. $\mb{A}$ can be learned by minimizing~\cite{xing2002distance}: $\frac{1}{|\mathcal{S}|}\sum_{(\mb{x},\mb{y})\in \mathcal{S}}\|\mb{Ax}-\mb{Ay}\|_2^2+\frac{1}{|\mathcal{D}|} \sum_{(\mb{x},\mb{y})\in \mathcal{D}}\textrm{max}(0,\tau-\|\mb{Ax}-\mb{Ay}\|_2^2)$,
which aims at making the distances between similar examples as small as possible while separating dissimilar examples with a margin $\tau$ using a hinge loss. We call this formulation as \ti{projection matrix-based DML} (PDML). PDML is a non-convex problem where the global optimal is difficult to achieve. Moreover, one needs to manually tune the number of projection vectors, typically via cross-validation, which incurs substantial computational overhead.

The other popular choice of distance metric is $(\mb{x}-\mb{y})^\top\mb{M}(\mb{x}-\mb{y})$, which is cast from $\|\mb{Ax}-\mb{Ay}\|_2^2$ by replacing $\mb{A}^\top\mb{A}$ with a positive semidefinite (PSD) matrix $\mb{M}$. This is known as the Mahalanobis distance~\cite{xing2002distance}. Correspondingly, the PDML formulation can be transformed into a \ti{Mahalanobis distance-based DML} (MDML) problem: $\textrm{min}_{\mb{M}\succeq 0}\; \frac{1}{|\mathcal{S}|}\sum_{(\mb{x},\mb{y})\in \mathcal{S}}(\mb{x}-\mb{y})^\top\mb{M}(\mb{x}-\mb{y})+\frac{1}{|\mathcal{D}|} \sum_{(\mb{x},\mb{y})\in \mathcal{D}}\textrm{max}(0, \tau-(\mb{x}-\mb{y})^\top\mb{M}(\mb{x}-\mb{y}))$,
which is a convex problem where the global solution is guaranteed to be achievable. It also avoids tuning the number of projection vectors. However, the drawback of this approach is that, in order to satisfy the PSD constraint, one needs to perform eigen-decomposition of $\mb{M}$ in each iteration, which incurs $O(D^3)$ complexity.

\paragraph{Orthogonality-Promoting Regularization}
Among the various orthogonality-promoting regularizers, we choose the BMD~\cite{kulis2009low} regularizer~\cite{xienear} in this study since it is amenable for convex relaxation and facilitates theoretical analysis.

To encourage orthogonality between two vectors $\mb{a}_i$ and $\mb{a}_j$, one can make their inner product $\mb{a}_i^\top \mb{a}_j$ close to zero and their $\ell_2$ norm $\|\mb{a}_i\|_2$, $\|\mb{a}_j\|_2$ close to one. For a set of vectors $\{\mb{a}_i\}_{i=1}^{R}$, their near-orthogonality can be achieved by computing the Gram matrix $\mb{G}$ where $G_{ij}=\mb{a}_i^\top \mb{a}_j$, then encouraging $\mb{G}$ to be close to an identity matrix. Off the diagonal of $\mb{G}$ and $\mb{I}$ are $\mb{a}_i^\top \mb{a}_j$ and zero, respectively. On the diagonal of $\mb{G}$ and $\mb{I}$ are $\|\mb{a}_i\|_2^2$ and one, respectively. Making $\mb{G}$ close to $\mb{I}$ effectively encourages $\mb{a}_i^\top \mb{a}_j$ to be close to zero and $\|\mb{a}_i\|_2$ close to one, which therefore encourages $\mb{a}_i$ and $\mb{a}_j$ to be close to orthogonal.

BMDs can be used to measure the ``closeness'' between two matrices. Let $\mb{S}^n$ denote real symmetric $n\times n$ matrices. Given a strictly convex, differentiable function $\phi: \mb{S}^n\to\mathbb{R}$, a BMD is defined as $\Gamma_{\phi}(\mb{X},\mb{Y})=\phi(\mb{X})-\phi(\mb{Y})-\text{tr}((\bigtriangledown\phi(\mb{Y}))^\top(\mb{X}-\mb{Y}))$, where $\text{tr}(\mb{A})$ denotes the trace of matrix $\mb{A}$. Different choices of $\phi(\mb{X})$ lead to different divergences. When $\phi(\mb{X})=\|\mb{X}\|_F^2$, the BMD is specialized to the \ti{squared Frobenius norm} (SFN) $\|\mb{X}-\mb{Y}\|_F^2$. If $\phi(\mb{X})=\text{tr}(\mb{X}\log \mb{X}-\mb{X})$, where $\log \mb{X}$ denotes the matrix logarithm of $\mb{X}$, the divergence becomes $\Gamma_{vnd}(\mb{X},\mb{Y})=\text{tr}(\mb{X}\log \mb{X}-\mb{X}\log \mb{Y}-\mb{X}+\mb{Y})$, which is referred to as \ti{von Neumann divergence} (VND)~\cite{tsuda2005matrix}. If $\phi(\mb{X})=-\log\det \mb{X}$ where $\det(\mb{X})$ denotes the determinant of $\mb{X}$, we get the \ti{log-determinant divergence} (LDD)~\cite{kulis2009low}: $\Gamma_{ldd}(\mb{X},\mb{Y})= \text{tr}(\mb{XY}^{-1})-\log\det (\mb{XY}^{-1})-n$.

In PDML, to encourage orthogonality among the projection vectors (row vectors in $\mb{A}$), Xie et al. \cite{xienear} define a family of regularizers $\Omega_{\phi}(\mb{A})=\Gamma_{\phi}(\mb{AA}^\top,\mb{I})$ which encourage the BMD  between the Gram matrix $\mb{AA}^\top$ and an identity matrix $\mb{I}$ to be small.
$\Omega_{\phi}(\mb{A})$ can be specialized to different instances, based on the choices of $\Gamma_{\phi}(\cdot,\cdot)$. Under SFN, $\Omega_{\phi}(\mb{A})$ becomes $\Omega_{sfn}(\mb{A})=\|\mb{AA}^\top-\mb{I}\|_F^2$, which is used in \cite{wang2012semi,fu2014nokmeans,ge2014graph,chen2017diversity} to promote orthogonality. Under VND, $\Omega_{\phi}(\mb{A})$ becomes $
\Omega_{vnd}(\mb{A})=\text{tr}(\mb{AA}^\top\log (\mb{AA}^\top)-\mb{AA}^\top)+R$. Under LDD, $\Omega_{\phi}(\mb{A})$ becomes $\Omega_{ldd}(\mb{A})=\text{tr}(\mb{AA}^\top)-\log\det (\mb{AA}^\top)-R
$.

\section{Convex Relaxation}
The PDML-BMD problem is non-convex, where the global optimal solution of $\mb{A}$ is very difficult to achieve.
We seek a convex relaxation and solve the relaxed problem instead. The basic idea is to transform PDML into MDML and approximate the BMD regularizers with convex functions.

\subsection{Convex Approximations of the BMD Regularizers}
The approximations are based on the properties of eigenvalues. Given a full-rank matrix $\mb{A}\in\mathbb{R}^{R\times D}$ ($R<D$), we know that $\mb{AA}^\top\in\mathbb{R}^{R\times R}$ is a full-rank matrix with $R$ positive eigenvalues $\lambda_1,\cdots,\lambda_R$ and $\mb{A}^\top \mb{A}\in\mathbb{R}^{D\times D}$ is a rank-deficient matrix with $D-R$ zero eigenvalues and $R$ positive eigenvalues that equal to $\lambda_1,\cdots,\lambda_R$. For a general positive definite matrix $\mb{Z}\in\mathbb{R}^{R\times R}$ whose eigenvalues are $\gamma_1,\cdots,\gamma_R$, we have $\|\mb{Z}\|_{F}^2=\sum_{j=1}^{R}\gamma_j^2$, $\text{tr}(\mb{Z})=\sum_{j=1}^{R}\gamma_j$ and $\log\det \mb{Z}=\sum_{j=1}^{R}\log \gamma_j$.
Next, we leverage these facts to seek convex relaxations of the BMD regularizers.
\paragraph{A convex SFN regularizer}
The eigenvalues of $\mb{AA}^\top-\mb{I}_R$ are $\lambda_1-1,\cdots,\lambda_R-1$ and those of $\mb{A}^\top\mb{A}-\mb{I}_D$ are $\lambda_1-1,\cdots,\lambda_R-1,-1,\cdots,-1$. Then $\|\mb{A}^\top \mb{A}-\mb{I}_D\|_F^2
=\sum_{j=1}^{R}(\lambda_{j}-1)^2+\sum_{j=R+1}^{D}(-1)^2
=\|\mb{AA}^\top-\mb{I}_R\|_F^2+D-R
$. Therefore, the SFN regularizer $\|\mb{AA}^\top-\mb{I}_R\|_F^2$ equals to $\|\mb{A}^\top\mb{A}-\mb{I}_D\|_F^2-D+R=\|\mb{M}-\mb{I}_D\|_F^2-D+R$, where $\mb{M}=\mb{A}^\top\mb{A}$ is a Mahalanobis matrix and $R=\textrm{rank}(\mb{A}^\top \mb{A})=\textrm{rank}(\mb{M})$. It is well-known that the trace norm of a matrix is a convex envelope of its rank~\cite{srebro2005rank}. We use $\text{tr}(\mb{M})$ to approximate $\textrm{rank}(\mb{M})$ and get $\|\mb{AA}^\top-\mb{I}_R\|_F^2\approx\|\mb{M}-\mb{I}_D\|_F^2+\textrm{tr}(\mb{M})-D$, where the right hand side is a convex function. Dropping the constant, we get the convex SFN (CSFN) regularizer defined over $\mb{M}$:
\begin{equation}
\label{eq:cvx_fro}
\widehat{\Omega}_{sfn}(\mb{M})=\|\mb{M}-\mb{I}_D\|_F^2+\textrm{tr}(\mb{M})
\end{equation}
\paragraph{A convex VND regularizer}
Given the eigen-decomposition $\mb{AA}^\top=\mb{U}\bs\Lambda \mb{U}^\top$ where the eigenvalue $\Lambda_{jj}$ equals to $\lambda_j$, based on the property of the matrix logarithm, we have $\log (\mb{AA}^\top)=\mb{U}\widehat{\bs\Lambda} \mb{U}^\top$ where $\widehat{\Lambda}_{jj}=\log\Lambda_{jj}$. Then $(\mb{AA}^\top)\log (\mb{AA}^\top)-(\mb{AA}^\top)=\mb{U}(\bs\Lambda\widehat{\bs\Lambda}-\bs\Lambda)\mb{U}^\top$, where the eigenvalues are $\{\lambda_j\log\lambda_j-\lambda_j\}_{j=1}^{R}$. Then $\Omega_{vnd}(\mb{A})=\sum_{j=1}^R(\lambda_j\log \lambda_j-\lambda_j)+R$. Now we consider a matrix $\mb{A}^\top \mb{A}+\epsilon \mb{I}_D$, where $\epsilon>0$ is a small scalar. Using similar calculation, we have $\Gamma_{vnd}(\mb{A}^\top \mb{A}+\epsilon \mb{I}_D,\mb{I}_D)
=\sum_{j=1}^R((\lambda_j+\epsilon)\log (\lambda_j+\epsilon)-(\lambda_j+\epsilon))+(D-R)(\epsilon\log \epsilon-\epsilon)+D
$.
Performing certain algebra (see Appendix~\ref{sec:approx}), we get $\Omega_{vnd}(\mb{A})\approx \Gamma_{vnd}(\mb{A}^\top \mb{A}+\epsilon \mb{I}_D,\mb{I}_D)+R-D
$. Replacing $\mb{A}^\top \mb{A}$ with $\mb{M}$, approximating $R$ with $\text{tr}(\mb{M})$ and dropping constant $D$, we get the convex VND (CVND) regularizer:
\begin{equation}
\label{eq:cvx_vnm}
\begin{array}{lll}
\widehat{\Omega}_{vnd}(\mb{M})&=&\Gamma_{vnd}(\mb{M}+\epsilon \mb{I}_D,\mb{I}_D)+\text{tr}(\mb{M})\\
&\propto&\text{tr}((\mb{M}+\epsilon \mb{I}_D)\log (\mb{M}+\epsilon \mb{I}_D))
\end{array}
\end{equation}
whose convexity is shown in \cite{nielsen2000quantum}.

\paragraph{A convex LDD regularizer}

We have $
\Omega_{ldd}(\mb{A})=\sum_{j=1}^{R}\lambda_j-\sum_{j=1}^{R}\log\lambda_j-R
$ and $\Gamma_{ldd}(\mb{A}^\top\mb{A}+\epsilon\mb{I}_D,\mb{I}_D)=\sum_{j=1}^{R}\lambda_j+D\epsilon
-(D-R)\log\epsilon-\sum_{j=1}^{R}\log(\lambda_j+\epsilon)$. Certain algebra shows that $\Omega_{ldd}(\mb{A})\approx \Gamma_{ldd}(\mb{A}^\top\mb{A}+\epsilon\mb{I}_D,\mb{I}_D)-(1+\log\epsilon)R +D\log\epsilon$. After replacing $\mb{A}^\top\mb{A}$ with $\mb{M}$, approximating $R$ with $\textrm{tr}(\mb{M})$ and discarding constants, we obtain the convex LDD (CLDD) regularizer:
\begin{equation}
\label{eq:cvx_logdet}
\begin{array}{lll}
\widehat{\Omega}_{ldd}(\mb{M})&=&\Gamma_{ldd}(\mb{M}+\epsilon\mb{I}_D,\mb{I}_D)-(1+\log\epsilon)\textrm{tr}(\mb{M})\\
&\propto&-\textrm{logdet}(\mb{M}+\epsilon\mb{I}_D)+(\log\frac{1}{\epsilon})\textrm{tr}(\mb{M})
\end{array}
\end{equation}
where the convexity of $\textrm{logdet}(\mb{M}+\epsilon\mb{I}_D)$ is proved in \cite{boyd2004convex}. Note that in \cite{davis2007information,qi2009efficient}, an information theoretic regularizer based on log-determinant divergence $\Gamma_{ldd}(\mb{M},\mb{I})=-\textrm{logdet}(\mb{M})+\textrm{tr}(\mb{M})$ is applied to encourage the Mahalanobis matrix to be close to the identity matrix. This regularizer requires $\mb{M}$ to be full rank; in contrast, by associating a large weight $\log\frac{1}{\epsilon}$ to the trace norm $\textrm{tr}(\mb{M})$, our CLDD regularizer encourages $\mb{M}$ to be low-rank. Since $\mb{M}=\mb{A}^\top\mb{A}$, reducing the rank of $\mb{M}$ reduces the number of projection vectors in $\mb{A}$.

We discuss the errors in convex approximation, which are from two sources: one is the approximation of $\Omega_{\phi}(\mb{A})$ using $\Gamma_{\phi}(\mb{A}^\top \mb{A}+\epsilon \mb{I}_D,\mb{I}_D)$ where the error is controlled by $\epsilon$ and can be arbitrarily small (by setting $\epsilon$ to be very small); the other is the approximation of the matrix rank using the trace norm. Though the error of the second approximation can be large, it has been both empirically and theoretically~\cite{candes2012exact} demonstrated that decreasing the trace norm can effectively reduce rank. We empirically verify that 
decreasing the convexified CSFN, CVND and CLDD regularizers can decrease the original non-convex counterparts SFN, VND and LDD (see Appendix~\ref{sec:res}). A rigorous analysis is left for future study.

\subsection{DML with a Convex BMD Regularization}
\label{sec:cvxbmd}
Given these convex BMD (CBMD) regularizers (denoted by $\widehat{\Omega}_{\phi}(\mb{M})$), we relax the non-convex PDML-BMD problems into convex MDML-CBMD formulations by replacing $\|\mb{Ax}-\mb{Ay}\|_2^2$ with $(\mb{x}-\mb{y})^\top\mb{M}(\mb{x}-\mb{y})$ and replacing the non-convex BMD regularizers $\Omega_{\phi}(\mb{A})$ with $\widehat{\Omega}_{\phi}(\mb{M})$:
\begin{equation}
\label{eq:bmd_dml_cvx}
\begin{array}{ll}
\textrm{min}_{\mb{M}\succeq 0}&\frac{1}{|\mathcal{S}|}\sum\limits_{(\mb{x},\mb{y})\in \mathcal{S}}(\mb{x}-\mb{y})^\top\mb{M}(\mb{x}-\mb{y})+\gamma\widehat{\Omega}_{\phi}(\mb{M})+\frac{1}{|\mathcal{D}|} \sum\limits_{(\mb{x},\mb{y})\in \mathcal{D}}\textrm{max}(0, \tau-(\mb{x}-\mb{y})^\top \mb{M} (\mb{x}-\mb{y}))\\
\end{array}
\end{equation}

\section{Optimization}
We use stochastic proximal subgradient descent algorithm~\cite{parikh2014proximal} to solve the MDML-CBMD problems. The algorithm iteratively performs the following steps until convergence: (1) randomly sampling a mini-batch of data pairs, computing the subgradient $\bigtriangleup \mb{M}$ of the data-dependent loss (the first and second term in the objective function) defined on the mini-batch, then performing a subgradient descent update: $\widetilde{\mb{M}}= \mb{M}-\eta \bigtriangleup\mb{M}$, where $\eta$ is a small stepsize; and (2) applying proximal operators associated with the regularizers $\widetilde{\Omega}_{\phi}(\mb{M})$ to $\widetilde{\mb{M}}$. The gradient of the CVND regularizer is $\log(\mb{M}+\epsilon \mb{I}_D)+ \mb{I}_D$. To compute $\log(\mb{M}+\epsilon \mb{I}_D)$, we first perform an eigen-decomposition: $\mb{M}+\epsilon \mb{I}_D=\mb{U}\bs\Lambda\mb{U}^\top$, then take the log of every eigenvalue in $\bs\Lambda$ which gets us a new diagonal matrix $\widetilde{\bs\Lambda}$, and finally compute $\log(\mb{M}+\epsilon \mb{I}_D)$ as $\mb{U}\widetilde{\bs\Lambda}\mb{U}^\top$. In the CLDD regularizer, the gradient of $\textrm{logdet}(\mb{M}+\epsilon\mb{I}_D)$ is $(\mb{M}+\epsilon\mb{I}_D)^{-1}$, which can also be computed by eigen-decomposition. Next, we present the proximal operators.
\subsection{Proximal Operators}
Given the regularizer $\widetilde{\Omega}_{\phi}(\mb{M})$, the associated proximal operator $\text{prox}(\widetilde{\mb{M}})$ is defined as: $\text{prox}(\widetilde{\mb{M}})=\text{argmin}_{\mb{M}}\;\frac{1}{2\eta}\|\mb{M}-\widetilde{\mb{M}}\|_2^2+\gamma\widetilde{\Omega}_{\phi}(\mb{M})$, subject to $\mb{M}\succeq 0$.
Let $\{\tilde{\lambda}_j\}_{j=1}^{D}$ be the eigenvalues of $\widetilde{\mb{M}}$ and $\{x_j\}_{j=1}^{D}$ be the eigenvalues of $\mb{M}$, then the above problem can be equivalently written as:
\begin{equation}
\label{eq:po2}
\begin{array}{ll}
\text{min}_{\{x_j\}_{j=1}^{D}}&\frac{1}{2\eta}\sum\limits_{j=1}^{D}(x_j-\tilde{\lambda}_j)^2+\gamma\sum\limits_{j=1}^{D}h_{\phi}(x_j)\\
s.t.&\forall j=1,\cdots,D, \quad x_j\geq 0
\end{array}
\end{equation}
where $h_{\phi}(x_j)$ is a regularizer-specific scalar function. This problem can be decomposed into $D$ independent ones: (P) $\text{min}_{x_j}\;f(x_j)=\frac{1}{2\eta}(x_j-\tilde{\lambda}_j)^2+\gamma h_{\phi}(x_j)$, subject to $x_j\geq 0$, for $j=1,\cdots,D$, which can be solved individually.

\paragraph{SFN}
For SFN where $\widetilde{\Omega}_{\phi}(\mb{M})=\|\mb{M}-\mb{I}_D\|_F^2+\textrm{tr}(\mb{M})$ and $h_{sfn}(x_j)=(x_j-1)^2+x_j$, the problem (P) is simply a quadratic programming problem. The optimal solution is $x_j^*=\textrm{max}(0,\frac{\tilde{\lambda}_j+\eta\gamma}{1+2\eta\gamma})$

\paragraph{VND}
For VND where $\widetilde{\Omega}_{\phi}(\mb{M})=\text{tr}((\mb{M}+\epsilon \mb{I}_D)\log (\mb{M}+\epsilon \mb{I}_D))$ and $h_{\phi}(x_j)=(x_j+\epsilon)\log(x_j+\epsilon)$, by taking the derivative of the objective function $f(x_j)$ in problem (P) w.r.t $x_j$ and setting the derivative to zero, we get $\eta\gamma\log(x_j+\epsilon)+x_j+\eta\gamma-\tilde{\lambda}_j=0$. The root of this equation is: $\eta\gamma\omega(\frac{\epsilon-\eta\gamma+\tilde{\lambda}_j}{\eta\gamma}-\log(\eta\gamma))-\epsilon$, where $\omega(\cdot)$ is the Wright omega function \cite{gorenflo2007analytical}. If this root is negative, then the optimal $x_j$ is 0; if this root is positive, then the optimal $x_j$ could be either this root or 0. We pick the one that yields the lowest $f(x_j)$. Formally, $x_j^*=\textrm{argmin}_{x_j}\: f(x_j)$, where $x\in\{\textrm{max}(\eta\gamma\omega(\frac{\epsilon-\eta\gamma+\tilde{\lambda}_j}{\eta\gamma}-\log(\eta\gamma))-\epsilon,0),0\}$.

\paragraph{LDD} For LDD where $\widetilde{\Omega}_{\phi}(\mb{M})=-\textrm{logdet}(\mb{M}+\epsilon\mb{I}_D)+(\log\frac{1}{\epsilon})\textrm{tr}(\mb{M})$ and $h_{\phi}(x_j)=-\log(x_j+\epsilon)+x_j\log\frac{1}{\epsilon}$, by taking the derivative of $f(x_j)$ w.r.t $x_j$ and setting the derivative to zero, we get a quadratic equation: $x_j^2+ax_j+b=0$, where $a=\epsilon-\tilde{\lambda}_j-\eta\gamma\log\epsilon$ and $\eta\gamma(1-\epsilon\log\epsilon)$. The optimal solution is achieved either at the positive roots (if any) of this equation or 0. We pick the one that yields the lowest $f(x_j)$. Formally, $x_j^*=\textrm{argmin}_{x_j}\: f(x_j)$, where $x\in\{\textrm{max}(\frac{-b+\sqrt{b^2-4ac}}{2a},0),\textrm{max}(\frac{-b-\sqrt{b^2-4ac}}{2a},0),0\}$.

\paragraph{Computational Complexity} In this algorithm, the major computation workload is eigen-decomposion of $D$-by-$D$ matrices, with a complexity of $O(D^3)$. In our experiments, since $D$ is no more than 1000, $O(D^3)$ is not a big bottleneck. Besides, these matrices are symmetric, the structures of which can thus be leveraged to speed up eigen-decomposition. In implementation, we use the MAGMA\footnote{\url{http://icl.cs.utk.edu/magma/}} library that supports the efficient eigen-decomposition of symmetric matrices on GPU. Note that the unregularized MDML also requires the eigen-decomposition (of $\mb{M}$), hence adding these CBMD regularizes does not substantially increase additional computation cost.

\section{Theoretical Analysis}
In this section, we present theoretical analysis of balancedness and generalization error.

\subsection{Analysis of Balancedness}

In this section, we analyze how the nonconvex BMD regularizers that promote orthogonality affect the balancedness of the distance metrics learned by PDML-BMD\footnote{The analysis of convex BMD regularizers in MDML-CBMD will be left for future work.}. Specifically, the analysis focuses on the following projection matrix: $\mb{A}^*  = \arg\min_{\mb{A}}\;\; \E_{\mc{S}, \mc{D}} [ \frac{1}{|\mathcal{S}|}\sum_{(\mb{x},\mb{y})\in \mathcal{S}}\|\mb{Ax}-\mb{Ay}\|_2^{2}+\frac{1}{|\mathcal{D}|} \sum_{(\mb{x},\mb{y})\in \mathcal{D}}\textrm{max}(0, \tau-\|\mb{Ax}-\mb{Ay}\|_2^{2}) +\gamma\Omega_{\phi}(\mb{A})]$.
We assume there are $K$ classes, where class $k$ has a distribution $p_k$ and the corresponding expectation is $\bs\mu_k$. Each data sample in $\mc{S}$ and $\mc{D}$ is drawn from the distribution of one specific class. We define $\xi_k=\mathbb{E}_{\mb{x}\sim p_k}[\sup_{\lVert \mb{v} \rVert_2 = 1} |\mb{v}^\top (\mb{x}-\bs\mu_k) |]$ and $\xi = \max_k \xi_k$. Further, we assume $\mb{A}^*$ has full rank $R$ (which is the number of the projection vectors), and let $\mb{U}\mb{\Lambda}\mb{U}^\top$ denote the eigen-decomposition of $\mb{A}^*\mb{A}^{*\top}$, where $\mb{\Lambda} = \mathrm{diag}(\lambda_1, \lambda_2, \cdots \lambda_R)$ with $\lambda_1 \ge \lambda_2 \ge \cdots \ge \lambda_R$.

We define an \ti{imbalance factor} (IF) to measure the (im)balancedness. For each class $k$, we use the corresponding expectation $\bs\mu_k$ to characterize this class.

We define the Mahalanobis distance between two classes $j$ and $k$ as: $d_{jk}=(\bs\mu_j - \bs\mu_k)^\top \mb{A}^{*\top} \mb{A}^* (\bs\mu_j - \bs\mu_k)$. We define the IF among all classes as:
\begin{equation}
   \eta = \frac{\max_{j\ne k} d_{jk}}{\min_{j\ne k} d_{jk}}.
\end{equation}
The motivation of such a definition is: for two frequent classes, since they have more training examples and hence contributing more in learning $\mb{A}^*$, DML intends to make their distance $d_{jk}$ large; whereas for two infrequent classes, since they contribute less in learning (and DML is constrained by similar pairs which need to have small distances), their distance may end up being small. Consequently, if classes are imbalanced, some between-class distances can be large while others small, resulting in a large IF. The following theorem shows the upper bounds of IF.
\begin{theorem}\label{thm:inb}
Let $C$ denote the ratio between $\max_{j \ne k} \lVert \bs\mu_j - \bs\mu_k \rVert_2^2$ and $\min_{j\ne k} \lVert \bs\mu_j - \bs\mu_k \rVert_2^2$ and assume $\max_{j,k} \|\bs\mu_j - \bs\mu_k \|_2 \le B_0$. Suppose the regularization parameter $\gamma$ and distance margin $\tau$ are sufficiently large: $\gamma\ge\gamma_0$ and   $\tau\ge\tau_0$, where $\gamma_0$ and $\tau_0$ depend on $\{p_k\}_{k=1}^K$ and $\{\bs\mu_k\}_{k=1}^K$. If $R \ge K - 1$ and $\xi \le (-B_0 + \sqrt{B_0^2 + \lambda_{K-1}\beta_{K-1} / (2\text{tr}(\bs\Lambda)})/4$, then we have the following bounds for the IF\footnote{Please refer to Appendix~\ref{sec:sketch} for the definition of $\beta_{K-1}$ and the detailed proof.}.
\begin{itemize}[leftmargin=*]
   \item For the VND regularizer $\Omega_{vnd}(\mb{A}^*)$, if $\Omega_{vnd}(\mb{A}^*)\leq 1$, the following bound of the IF $\eta$ holds:
$$\eta \le Cg(\Omega_{vnd}(\mb{A}^*))$$
where $g(\cdot)$ is an increasing function defined in the following way. Let $f(c) = c^{1/(c+1)}(1+1/c)$, which is strictly increasing on $(0,1]$ and strictly decreasing on $[1, \infty)$ and let $f^{-1}(c)$ be the inverse function of $f(c)$ on $[1,\infty)$, then $g(c)=f^{-1}(2-c)$ for $c <1$.
\item For the LDD regularizer $\Omega_{ldd}(\mb{A}^*)$, we have
$$\eta \le  4Ce^{\Omega_{ldd}(\mb{A}^*)}$$
\end{itemize}
\end{theorem}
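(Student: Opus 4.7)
The plan is to reduce the imbalance factor $\eta$ to a spectral ratio of $\mb{M}^* := \mb{A}^{*\top}\mb{A}^*$ and then translate bounds on each regularizer into a bound on that ratio. Writing
\[
d_{jk} \;=\; (\bs\mu_j - \bs\mu_k)^{\top}\mb{M}^*(\bs\mu_j - \bs\mu_k)
\]
and expanding in the eigenbasis of $\mb{M}^*$, the goal is to sandwich
\[
\lambda_{K-1}\,\|\bs\mu_j - \bs\mu_k\|_2^2 \;\le\; d_{jk} \;\le\; \lambda_1\,\|\bs\mu_j - \bs\mu_k\|_2^2
\]
for every pair $j\ne k$, and to deduce $\eta \le C \cdot \lambda_1/\lambda_{K-1}$. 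The two stated bounds then reduce to explicit eigenvalue-ratio estimates in terms of $\Omega_{vnd}(\mb{A}^*)$ and $\Omega_{ldd}(\mb{A}^*)$.

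For the sandwich, I would first replace the sample terms in the population loss by their class-mean surrogates: under the concentration assumption on $\xi$, each $\|\mb{A}(\mb{x}-\mb{y})\|_2^2$ equals $\|\mb{A}(\bs\mu_{c(\mb{x})}-\bs\mu_{c(\mb{y})})\|_2^2$ plus cross and variance residuals controlled by $\xi$ and $\tr(\bs\Lambda)$. Since $\tau\ge\tau_0$ is large enough to push every between-class squared distance past the hinge, and $\gamma\ge\gamma_0$ keeps the eigenvalues of $\mb{M}^*$ near one, the first-order optimality condition of $\mb{A}^*$ forces the top $K-1$ eigenvectors of $\mb{M}^*$ to approximately span $\spann\{\bs\mu_j-\bs\mu_k : j\ne k\}$ modulo a perturbation controlled by $\xi$. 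The quantity $\beta_{K-1}$ in the precondition on $\xi$ plays the role of a geometric separation constant for the $K$ class means in their $(K-1)$-dimensional embedding; the upper bound on $\xi$ is exactly what makes the residuals smaller than this separation, cleanly yielding the sandwich above.

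For the eigenvalue-ratio step in the VND case, write $\Omega_{vnd}(\mb{A}^*)=\sum_{i=1}^{R} h_{vnd}(\lambda_i)$ with $h_{vnd}(\lambda)=\lambda\log\lambda-\lambda+1\ge 0$, so in particular $h_{vnd}(\lambda_1)+h_{vnd}(\lambda_R)\le\Omega_{vnd}(\mb{A}^*)$. Set $\alpha=\lambda_1/\lambda_R$ and parameterize $\lambda_R=x$, $\lambda_1=\alpha x$. Minimizing $h_{vnd}(\alpha x)+h_{vnd}(x)$ in $x>0$ at fixed $\alpha$ is a one-variable calculus problem solved by $x=\alpha^{-\alpha/(\alpha+1)}$, with envelope value $2-f(\alpha)$ for $f(\alpha)=\alpha^{1/(\alpha+1)}(1+1/\alpha)$. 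Hence $2-f(\alpha)\le\Omega_{vnd}(\mb{A}^*)$, and inverting $f$ on its decreasing branch $[1,\infty)\to[1,2]$ yields $\alpha\le f^{-1}(2-\Omega_{vnd}(\mb{A}^*))=g(\Omega_{vnd}(\mb{A}^*))$; the hypothesis $\Omega_{vnd}(\mb{A}^*)\le 1$ is precisely what keeps $2-\Omega_{vnd}(\mb{A}^*)$ inside the invertible range. Combined with the sandwich, this produces $\eta\le C\,g(\Omega_{vnd}(\mb{A}^*))$. For the LDD bound, a parallel but more elementary argument on $h_{ldd}(\lambda)=\lambda-\log\lambda-1$ gives $-\log\lambda_R\le\Omega_{ldd}(\mb{A}^*)+1$ and a matching upper estimate on $\log\lambda_1$, so that $\log(\lambda_1/\lambda_R)\le\Omega_{ldd}(\mb{A}^*)+\log 4$, i.e.\ $\lambda_1/\lambda_R\le 4\,e^{\Omega_{ldd}(\mb{A}^*)}$.

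The main obstacle is the alignment claim underlying the sandwich. The objective is nonsmooth in $\mb{A}$ through the hinge and nonconvex through the bilinear dependence inside the quadratic form, so showing that the $R$ non-zero eigendirections of $\mb{M}^*$ are routed into the class-mean subspace rather than wasted on orthogonal noise requires the combined leverage of $\gamma\ge\gamma_0$, $\tau\ge\tau_0$, and the $\xi$-bound, and is where $\beta_{K-1}$ and the explicit thresholds must enter. Once this alignment is established, the remainder is purely scalar calculus on the VND and LDD Bregman potentials.
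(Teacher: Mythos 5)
Your overall architecture matches the paper's: (i) show the span $\mc{G}=\spann\{\bs\mu_j-\bs\mu_k\}$ lies in the row space of $\mb{A}^*$, (ii) deduce $\eta\le C\cdot\cond(\mb{A}^*\mb{A}^{*\top})$, (iii) bound the condition number by scalar calculus on the Bregman potentials. Step (iii) is carried out correctly and is essentially identical to the paper's Lemma on $\cond(\mb{A}^*\mb{A}^{*\top})$: your VND computation (sum the potential over the extreme eigenvalues, optimize out the scale, invert $f$ on its decreasing branch, with $\Omega_{vnd}\le 1$ guaranteeing $2-\Omega_{vnd}$ lands in the range of $f$) and your LDD computation (which, once the two eigenvalue terms are combined and $2\log\lambda_1-\lambda_1\le 2\log 2-2$ is used, gives $\log(\lambda_1/\lambda_R)\le\Omega_{ldd}+2\log 2$) are exactly the paper's. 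Step (ii) is also right, though note the lower end of your sandwich should be $\lambda_R$, not $\lambda_{K-1}$: the alignment lemma only puts $\mc{G}$ inside $\spann(\mb{A}^{*\top})$, not inside the span of the top $K-1$ eigenvectors, so a mean-difference vector can load on the smallest nonzero eigenvalue. This costs nothing since the regularizer bounds control $\lambda_1/\lambda_R$ anyway.

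The genuine gap is step (i), which you correctly identify as the main obstacle but do not prove, and the route you gesture at --- extracting alignment from ``the first-order optimality condition of $\mb{A}^*$'' --- is not the one that works, for the reasons you yourself note (the hinge is nonsmooth and the dependence on $\mb{A}$ is nonconvex, so stationarity gives little). The paper instead uses a zeroth-order comparison argument that sidesteps differentiability entirely: assuming $\mc{G}\not\subset\spann(\mb{A}^{*\top})$, it builds an explicit competitor $\tilde{\mb{A}}=\mb{U}\sqrt{\bs\Lambda}\mb{W}^\top$ whose right singular vectors are the ``feature vectors'' of $\mc{G}$ (an orthonormal basis greedily maximizing the weighted dispersion $\alpha(\mb{u})=\sum_{j\ne k}p_{jk}(\mb{u}^\top(\bs\mu_j-\bs\mu_k))^2$). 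The crucial structural fact making this competitor admissible is that the VND and LDD regularizers depend on $\mb{A}$ only through the spectrum of $\mb{A}\mb{A}^\top$ (the paper's Condition 1), so $\Omega_\phi(\tilde{\mb{A}})=\Omega_\phi(\mb{A}^*)$ and the comparison reduces to the data loss alone; a variational/majorization argument over feature values (Lemmas on rotation invariance and on $\sum_l\gamma_l\alpha(\mb{u}_l)\le\sum_l\gamma_l\beta_l$) then shows $\tilde{\mb{A}}$ strictly decreases the expected loss by at least $\tfrac12\lambda_{K-1}\beta_{K-1}$ once $\tau\ge\tau_0$ keeps all between-mean distances inside the hinge and the $\xi$-bound keeps the noise perturbation of the loss below that margin. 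This invariance condition is also exactly why the theorem is stated for VND and LDD but not SFN. Without this competitor construction (or an equivalent substitute), your sandwich has no foundation, so as written the proposal proves the easy half of the theorem and asserts the hard half.
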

As can be seen, the bounds are increasing functions of the BMD regularizers $\Omega_{vnd}(\mb{A}^*)$ and $\Omega_{ldd}(\mb{A}^*)$. Decreasing these regularizers would reduce the upper bounds of the imbalance factor, hence leading to more balancedness. For SFN, such a bound cannot be derived.

\subsection{Analysis of Generalization Error}\label{subsec:generr}
In this section, we analyze how the convex BMD regularizers affect the generalization error in MDML-CBMD problems. Following~\cite{verma2015sample}, we use \ti{distance-based error} to measure the quality of a Mahalanobis distance matrix $\mb{M}$. Given the sample $\mc{S}$ and $\mc{D}$ where the total number of data pairs is $m=|S|+|D|$, the empirical error is defined as $\widehat{L}(\mb{M}) = \frac{1}{|\mathcal{S}|}\sum_{(\mb{x},\mb{y})\in \mathcal{S}}(\mb{x}-\mb{y})^\top\mb{M}(\mb{x}-\mb{y}) + \frac{1}{|\mathcal{D}|}\sum_{(\mb{x},\mb{y})\in \mathcal{D}}\textrm{max}(0, \tau-(\mb{x}-\mb{y})^\top\mb{M}(\mb{x}-\mb{y}))$ and the expected error is $L(\mb{M}) = \E_{\mc{S}, \mc{D}} [\widehat{L}(\mb{M})]$. Let $\widehat{\mb{M}}^*$ be optimal matrix learned by minimizing the empirical error: $\widehat{\mb{M}}^*=\textrm{argmin}_{\mb{M}} \widehat{L}(\mb{M})$. We are interested in how well $\widehat{\mb{M}}^*$ performs on unseen data. The performance is measured using generalization error: $\mc{E}=L(\widehat{\mb{M}}^*)-\widehat{L}(\widehat{\mb{M}}^*)$. To incorporate the impact of the CBMD regularizers $\Omega_{\phi}(\mb{M})$, we define the hypothesis class of $\mb{M}$ to be $\mc{M} = \{\mb{M}\succeq 0: \Omega_{\phi}(\mb{M}) \le C\}$. The upper bound $C$ controls the strength of regularization. A smaller $C$ entails stronger promotion of orthogonality. $C$ is controlled by the regularization parameter $\gamma$ in Eq.(\ref{eq:bmd_dml_cvx}). Increasing $\gamma$ reduces $C$. 
For different CBMD regularizers, we have the following generalization error bound.  
\begin{theorem}\label{thm:gen vnd}
Suppose $\sup_{\|\mb{v}\|_2\le 1, (\mb{x},\mb{y})\in \mc{S}} |\mb{v}^\top (\mb{x}-\mb{y})| \le B$, then with probability at least $1-\delta$, we have:
\begin{itemize}[leftmargin=*]
    \item For the CVND regularizer, 
    \begin{equation*}
    \begin{array}{l}
\mc{E} \le (4B^2C+\max(\tau, B^2C)\sqrt{2\log(1/\delta)})\frac{1}{\sqrt{m}}.
\end{array}
\end{equation*}
    \item For the CLDD regularizer,
\begin{equation*}
\begin{array}{l}
\mc{E} \le   
(\frac{4B^2C}{\log(1/\epsilon)-1}+\max(\tau,\frac{C-D\epsilon}{\log(1/\epsilon)-1})\sqrt{2\log(1/\delta)})
\frac{1}{\sqrt{m}}.
\end{array}
\end{equation*} 
    \item For the CSFN regularizer,
\begin{align*}
    \begin{array}{l}
\mc{E}\leq  (2B^2\min(2C, \sqrt{C})+ \max(\tau, C) \sqrt{2\log(1/\delta)})\frac{1}{\sqrt{m}}.
\end{array}
\end{align*} 
\end{itemize}
\end{theorem}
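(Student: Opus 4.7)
The overall plan is to follow the standard Rademacher-complexity route for proving such generalization bounds. First I would establish the concentration piece: by McDiarmid's inequality applied to the function $(\mc{S}, \mc{D}) \mapsto \sup_{\mb{M}\in\mc{M}} (L(\mb{M}) - \widehat{L}(\mb{M}))$, the maximum change in this supremum when a single data pair is replaced is controlled by the worst-case value of the per-pair loss. The similar-pair loss $(\mb{x}-\mb{y})^\top \mb{M}(\mb{x}-\mb{y}) = \langle \mb{M}, (\mb{x}-\mb{y})(\mb{x}-\mb{y})^\top\rangle$ is bounded by $B^2 \tr(\mb{M})$ since $\mb{M}\succeq 0$ and $\|(\mb{x}-\mb{y})(\mb{x}-\mb{y})^\top\|_{op} \le B^2$, while the hinge term is bounded by $\tau$. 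So the McDiarmid step will produce the $\max(\tau, \cdot)\sqrt{2\log(1/\delta)/m}$ term, where the second argument of $\max$ is an appropriate upper bound on $B^2 \tr(\mb{M})$ for $\mb{M}\in\mc{M}$.

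Next I would handle the Rademacher complexity. Standard symmetrization reduces the expected supremum deviation to $2 \E_{\bs\sigma}[\sup_{\mb{M}\in\mc{M}} \frac{1}{m}\sum_i \sigma_i \ell_i(\mb{M})]$. Because each loss term is a $1$-Lipschitz (for the hinge) or linear function of $\langle \mb{M}, \mb{Z}_i\rangle$ with $\mb{Z}_i = (\mb{x}_i-\mb{y}_i)(\mb{x}_i-\mb{y}_i)^\top$, I would apply Talagrand's contraction lemma to strip the loss and reduce to bounding the linear Rademacher complexity $\E_{\bs\sigma}[\sup_{\mb{M}\in\mc{M}} \langle \mb{M}, \frac{1}{m}\sum_i \sigma_i \mb{Z}_i\rangle]$. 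Using the PSD cone together with Hölder's inequality in the trace-operator pairing, this is at most $\sup_{\mb{M}\in\mc{M}} \tr(\mb{M}) \cdot \E_{\bs\sigma}\|\frac{1}{m}\sum_i \sigma_i \mb{Z}_i\|_{op}$; the latter operator-norm expectation is of order $B^2/\sqrt{m}$ by a Khintchine/Jensen argument, yielding the $4B^2 \cdot (\text{trace bound}) / \sqrt{m}$ factor. For CSFN one can alternatively bound $\|\mb{M}\|_F$ and pair it against $\|\mb{Z}_i\|_F \le B^2$, which is exactly the origin of the $\min(2C, \sqrt{C})$ in that case: the trace bound $\tr(\mb{M}) \le C$ is compared with the Frobenius bound $\|\mb{M}\|_F \le \sqrt{C} + 1$ coming from $\|\mb{M}-\mb{I}\|_F^2 \le C$.

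The remaining and most delicate task is to convert the nonlinear constraint $\widehat{\Omega}_\phi(\mb{M})\le C$ into a linear bound on $\tr(\mb{M})$, eigenvalue by eigenvalue. For CLDD the goal is a scalar inequality of the form $-\log(\lambda+\epsilon) + \log(1/\epsilon)\lambda \ge (\log(1/\epsilon)-1)\lambda - \epsilon$ for all $\lambda\ge 0$, which follows from examining the critical point of the difference and checking nonnegativity there; summing over eigenvalues gives $\tr(\mb{M}) \le (C - D\epsilon)/(\log(1/\epsilon)-1)$ (up to a sign convention matching the stated form). For CVND the analogous step uses that the function $(\lambda+\epsilon)\log(\lambda+\epsilon)$ dominates $\lambda$ for $\lambda$ away from zero, giving $\tr(\mb{M}) \le C$ directly. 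I expect this eigenvalue-level comparison to be the main obstacle, because the bound must be tight enough to produce the clean constants appearing in the theorem; the symmetrization, contraction, and McDiarmid pieces, while technical, are standard once the right norm control on $\mb{M}$ is in hand.
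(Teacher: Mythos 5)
Your proposal follows essentially the same route as the paper: a McDiarmid/symmetrization step producing the $\max(\tau,\cdot)\sqrt{2\log(1/\delta)/m}$ deviation term (the paper's Lemma~\ref{lem:bound lip}), a Rademacher-complexity bound paired against $\tr(\mb{M})$ (Lemma~\ref{lem:rad trace}) or the Frobenius norm for CSFN (Lemma~\ref{lem:rad fro}), and an eigenvalue-level conversion of each regularizer constraint into a trace bound (Lemmas~\ref{lem:tr vn} and~\ref{lem:ldd tr}). Two small calibration points where your sketch would not land exactly on the stated constants: for CSFN the paper pairs the linear form against $\|\mb{M}-\mb{I}_D\|_F\le\sqrt{C}$ by first subtracting the identity (whose Rademacher term has zero mean), whereas your bound $\|\mb{M}\|_F\le\sqrt{C}+1$ leaves a spurious additive constant; and for CLDD your scalar inequality $\log(\lambda+\epsilon)\le\lambda+\epsilon$ is too weak by the additive $1$ per eigenvalue needed to cancel the $-D(1-\epsilon)$ constant inside $\widehat{\Omega}_{ldd}$ --- you need $\log(\lambda+\epsilon)\le\lambda+\epsilon-1$, which is what the paper's use of $x-1\le x\log x$ and $\log x\le x-1$ accomplishes. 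Otherwise the argument is sound, and your direct trace-norm/operator-norm H\"older pairing after contraction is a slightly cleaner (and marginally tighter) way to obtain the paper's $\mc{R}(\mc{M})\le 2B^2\tr(\mb{M})/\sqrt{m}$, which the paper instead derives by a spectral decomposition of $\mb{M}$ followed by contraction applied to $t\mapsto t^2$ with Lipschitz constant $2B$.
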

From these generalization error bounds (GEBs), we can see two major implications. First, CBMD regularizers can effectively control the GEBs. Increasing the strength of CBMD regularization (by enlarging $\gamma$) reduces $C$, which decreases the GEBs since they are all increasing functions of $C$. Second, the GEBs converge with rate $O(1/\sqrt{m})$, where $m$ is the number of training data pairs. This rate matches with that in \cite{bellet2015robustness,verma2015sample}.

\begin{table}[t]
\scriptsize
\captionsetup{font=footnotesize}
\centering
\caption{ Dataset Statistics}\label{table:retrieval_20news}
\begin{tabular}{lcccc}
\hline
 &\#Train& \#Test& Dim.& \#Class \\
 \hline
MIMIC & 40K& 18K & 1000&2833 \\
EICU &53K&39K&1000&2175\\
Reuters&4152&1779&1000&49\\
News & 11307 & 7538 & 1000 & 20\\
Cars & 8144 & 8041 & 1000& 196 \\
Birds & 9000 & 2788 & 1000& 200 \\
Act& 7352& 2947& 561 &6\\
\hline
\end{tabular}
\label{tb:dt_stats}
\end{table}

\begin{table*}
\scriptsize
\centering
\caption{On the three imbalanced datasets -- MIMIC, EICU, Reuters, we show the mean AUC (averaged on 5 random train/test splits) on all classes (A-All) and infrequent classes (A-IF)  and the balance score. On the rest 4 balanced datasets, A-All is shown. The AUC on frequent classes and the standard errors are in Appendix~\ref{sec:res}.
}
\begin{tabular}{l||ccc|ccc|ccc||c|c|c|c}
\hline
&\multicolumn{3}{c|}{MIMIC}  &\multicolumn{3}{c|}{EICU}  &\multicolumn{3}{c||}{Reuters}&News&Cars&Birds&Act  \\
\cline{2-14}
&A-All & A-IF & BS &A-All & A-IF & BS &A-All & A-IF & BS &A-All&A-All&A-All& A-All \\
\hline
PDML  & 0.634 & 0.608 & 0.070 & 0.671   & 0.637  & 0.077 &0.949 &0.916 &0.049 &0.757&0.714&0.851&  0.949\\
MDML  & 0.641 & 0.617 & 0.064 & 0.677   &  0.652 & 0.055 &0.952 &0.929 &0.034 &0.769&0.722&0.855 &0.952\\
\hline
LMNN & 0.628 & 0.609 & 0.054 &  0.662 & 0.633  & 0.066 &0.943 &0.913 & 0.040 &0.731&0.728&0.832 &0.912\\
LDML &0.619  & 0.594  & 0.068 & 0.667  & 0.647  & 0.046 &0.934 &0.906 &0.042 &0.748&0.706& 0.847&0.937\\
MLEC & 0.621  & 0.605 & 0.045 &0.679   & 0.656  &0.053 &0.927 &0.916 &0.021 &0.761&0.725& 0.814&0.917\\
GMML & 0.607 & 0.588 & 0.053 & 0.668  &  0.648 & 0.045 &0.931 & 0.905&0.035 &0.738&0.707&0.817&0.925\\
ILHD & 0.577 & 0.560 & 0.051 & 0.637  & 0.610  & 0.064&0.905 &0.893 &0.028 &0.711&0.686&0.793&0.898\\
\hline
MDML-$\ell_2$ & 0.648 & 0.627 & 0.055 &0.695 & 0.676  & 0.042  &0.955 &0.930 & 0.037& 0.774&0.728&0.872&0.958 \\
MDML-$\ell_1$ & 0.643 &  0.615& 0.074 & 0.701  & 0.677  &0.053 &0.953 &0.948 & 0.020&0.791&0.725&0.868& 0.961\\
MDML-$\ell_{2,1}$ &0.646  & 0.630 & 0.043 & 0.703  &  0.661 &0.091 &0.963 &0.936 &0.035 &0.783&0.728&0.861&0.964 \\
MDML-Tr & 0.659 & 0.642 & 0.044 & 0.696  & 0.673  & 0.051 &0.961 &0.934 &0.036 &0.785&0.731&0.875&0.955\\
MDML-IT &0.653 & 0.626 & 0.070 &0.692   & 0.668  &0.053 &0.954 &0.920 & 0.046&0.771&0.724& 0.858& 0.967\\
MDML-Drop &0.647  & 0.630 & 0.045 & 0.701& 0.670  & 0.067  &0.959 &0.937 &0.032 & 0.787&0.729&0.864&0.962 \\
MDML-OS &0.649  & 0.626  & 0.059   &      0.689 & 0.679 &  0.045  &    0.957&  0.938&      0.031 &   0.779 & 0.732 &0.869 & 0.963\\
\hline
PDML-DC  & 0.652 & 0.639 & 0.035 & 0.706 & 0.686  & 0.044  & 0.962&0.943 &0.034 & 0.773&0.736&0.882&0.964\\
PDML-CS  & 0.661 & 0.641 & 0.053 &0.712 & 0.670  &  0.089 & 0.967&0.954 &0.020 & 0.803&0.742&0.895&0.971\\
PDML-DPP &0.659 & 0.632 & 0.069 &0.714 &  0.695 & 0.041  &0.958 & 0.937&0.036 & 0.797&0.751&0.891& 0.969\\
PDML-IC  & 0.660  & 0.642 & 0.047 &0.711 & 0.685  &  0.057 & 0.972&0.954 &0.030 & 0.801&0.740&0.887 &0.967\\
PDML-DeC &  0.648 & 0.625 & 0.063 &0.698 & 0.675  & 0.050  &0.965 &0.960 &0.017 &0.786 &0.728& 0.860&0.958\\
PDML-VGF &0.657  & 0.634 & 0.059 & 0.718& 0.697  & 0.045  & 0.974& 0.952&0.036 &0.806 &0.747&0.894 &\tb{0.974}\\
PDML-MA & 0.659  & 0.644 & 0.040 &0.721  & 0.703  & 0.038  &0.975 &0.959 &0.024 &0.815 &0.743&0.898&0.968\\
PDML-OC  &0.651   & 0.636 & 0.041 & 0.705 & 0.685  & 0.043  &0.955 &0.931 & 0.036&0.779 &0.727&0.875&0.956\\
PDML-OS & 0.639&	0.614&	0.067 & 0.675&	0.641 &	0.072 & 0.951& 0.928 &0.038 & 0.764 & 0.716&0.855& 0.950 \\
PDML-SFN  & 0.662  & 0.642 & 0.051 &0.724  & 0.701  & 0.045  & 0.973&0.947 &0.038 & 0.808&0.749&0.896&0.970\\
PDML-VND  &0.667  & 0.655 & 0.032 &0.733& 0.706  & 0.057  &0.976 &0.971 &0.012 & 0.814&0.754&0.902 &0.972\\
PDML-LDD  & 0.664  & 0.651 & 0.035 &0.731& 0.711  & 0.043  &0.973 &0.964 &0.017 & 0.816&0.751&0.904 &0.971\\
\hline
MDML-CSFN  & 0.668 & 0.653 & 0.039 &0.728 & 0.705  & 0.049  &0.978 & 0.968&0.023 & 0.813&0.753&0.905& 0.972\\
MDML-CVND  & \tb{0.672}  & \tb{0.664} & \tb{0.022} & 0.735& 0.718  & \tb{0.035}  & \tb{0.984}& \tb{0.982} &0.012 & \tb{0.822}&0.755&0.908&0.973 \\
MDML-CLDD  & 0.669 & 0.658 & 0.029 & \tb{0.739} & \tb{0.719}  & 0.042  &0.981 & 0.980&\tb{0.011} & 0.819&\tb{0.759}& \tb{0.913} &0.971\\
\hline
\end{tabular}
\label{tb:auc}
\end{table*}

\section{Experiments}
\label{sec:exp}

\paragraph{Datasets}
We used 7 datasets in the experiments: two electronic health record datasets MIMIC (version III)~\cite{johnson2016mimic} and EICU (version 1.1)~\cite{goldberger2000physiobank}; two text datasets Reuters\footnote{\url{http://www.daviddlewis.com/resources/testcollections/reuters21578/}} and 20-Newsgroups (News)\footnote{\url{http://qwone.com/~jason/20Newsgroups/}}; two image datasets Stanford-Cars (Cars) \cite{krause20133d} and Caltech-UCSD-Birds (Birds) \cite{welinder2010caltech}; and one sensory dataset 6-Activities (Act)~\cite{anguita2012human}. The MIMIC-III dataset contains 58K hospital admissions of 47K patients who stayed within the intensive care units (ICU). Each admission has a primary diagnosis (a disease), which acts as the class label of this admission. There are 2833 unique diseases. We extract 7207-dimensional features from demographics, clinical notes, and lab tests. The EICU dataset contains 92K ICU admissions diagnosed with 2175 unique diseases. 3743-dimensional features are extracted from demographics, lab tests, vital signs, and past medical history. For the Reuters datasets, after removing documents that have more than one labels and removing classes that have less than 3 documents, we are left with 5931 documents and 48 classes. Documents in Reuters and News are represented with \ti{tfidf} vectors where the vocabulary size is 5000. For the two image datasets Birds and Cars, we use the VGG16 \cite{simonyan2014very} convolutional neural network trained on the ImageNet \cite{deng2009imagenet} dataset to extract features, which are the 4096-dimensional outputs of the second fully-connected layer. The 6-Activities
dataset contains sensory recordings of 30 subjects performing 6 activities (which are the class labels). The features are 561-dimensional sensory signals. For the first six datasets, the features are normalized using min-max normalization along each dimension and the feature dimension is reduced to 1000 using PCA. Since there is no standard split of the training/test set, we perform five random splits and average the results of the five runs. Dataset statistics are summarized in Table \ref{tb:dt_stats}.
More details of the datasets and feature extraction are deferred to Appendix~\ref{sec:data}.

\paragraph{Experimental Settings}
Two examples are considered as similar if they belong to the same class and dissimilar if otherwise. The learned distance metrics are applied for retrieval (using each test example to query the rest of the test examples) whose performance is evaluated using the Area Under precision-recall Curve (AUC)~\cite{manning2008introduction} which is the higher, the better. Note that the learned distance metrics can also be applied to other tasks such as clustering and classification. Due to the space limit, we focus on retrieval. We apply the proposed convex regularizers CSFN, CVND, CLDD to MDML. We compare them with two sets of baseline regularizers.
The first set aims at promoting orthogonality, which are based on  determinant of covariance (DC) \cite{malkin2008ratio}, cosine similarity (CS) \cite{yu2011diversity}, determinantal point process (DPP) \cite{kulesza2012determinantal,zou2012priors}, InCoherence (IC) \cite{bao2013incoherent}, variational Gram function (VGF)~\cite{zhou2011hierarchical,jalali2015variational}, decorrelation (DeC)~\cite{cogswell2015reducing}, mutual angles (MA)~\cite{xie2015diversifying}, squared Frobenius norm (SFN)~\cite{wang2012semi,fu2014nokmeans,ge2014graph,chen2017diversity}, von Neumann divergence (VND)~\cite{xienear}, log-determinant divergence (LDD)~\cite{xienear}, and orthogonal constraint (OC) $\mb{AA}^\top=\mb{I}$~\cite{liu2008output,wang2015deep}. All these regularizers are applied to PDML. The other set of regularizers are not designed particularly for promoting orthogonality but are commonly used, including $\ell_2$ norm, $\ell_1$ norm~\cite{qi2009efficient}, $\ell_{2,1}$ norm~\cite{ying2009sparse}, trace norm (Tr)~\cite{liu2015low}, information theoretic (IT) regularizer $-\textrm{logdet}(\mb{M})+\textrm{tr}(\mb{M})$~\cite{davis2007information}, and Dropout (Drop)~\cite{srivastava2014dropout}. All these regularizers are applied to MDML. One common way of dealing with class-imbalance is \textit{over-sampling} (OS)~\cite{galar2012review}, which repetitively draws samples from the empirical distributions of infrequent classes until all classes have the same number of samples. We apply this technique to PDML and MDML. In addition, we compare with vanilla Euclidean distance (EUC) and other distance learning methods including large margin nearest neighbor (LMNN) metric learning, information theoretic metric learning (ITML) \cite{davis2007information}, logistic discriminant metric learning (LDML) \cite{guillaumin2009you}, metric learning from equivalence constraints (MLEC)~\cite{kostinger2012large}, geometric mean metric learning (GMML) \cite{zadehgeometric}, and independent Laplacian hashing with diversity (ILHD)~\cite{carreira2016ensemble}. The PDML-based methods except PDML-OC are solved with stochastic subgradient descent (SSD). PDML-OC is solved using the algorithm proposed in~\cite{wen2013feasible}. The MDML-based methods are solved with proximal SSD.
The learning rate is set to 0.001. The mini-batch size is set to 100 (50 similar pairs and 50 dissimilar pairs). We use 5-fold cross validation to tune the regularization parameter among $\{10^{-3},\cdots,10^{0}\}$ and the number of projection vectors (of the PDML methods) among $\{50,100,200,\cdots,500\}$. In CVND and CLDD, $\epsilon$ is set to be $1e-5$. The margin $t$ is set to be 1. In the MDML-based methods, after the Mahalanobis matrix $\mb{M}$ (rank $R$) is learned, we factorize it into $\mb{M}=\mb{L}^\top\mb{L}$ where $\mb{L}\in\mathbb{R}^{R\times D}$ (see Appendix~\ref{sec:appen-set}), then perform retrieval based on $\|\mb{Lx}-\mb{Ly}\|_2^2$, which is more efficient than that based on $(\mb{x}-\mb{y})^\top\mb{M}(\mb{x}-\mb{y})$. Each method is implemented on top of GPU using the MAGMA library. The experiments are conducted on a GPU-cluster with 40 machines.

\begin{table}
\scriptsize
\centering
\caption{ Training time (hours) on seven datasets. The training time of other baseline methods are deferred to Appendix~\ref{sec:res}.}
\begin{tabular}{lcccccccc}
\hline
&MIMIC & EICU & Reuters &News &Cars &Birds  &Act \\
\hline
PDML  & 62.1 &66.6 &5.2 &11.0 &  8.4  &  10.1  &  3.4 \\
MDML   & 3.4 & 3.7 &0.3 & 0.6& 0.5   &  0.6  & 0.2 \\
\hline
PDML-DC  & 424.7  & 499.2&35.2& 65.6&  61.8  &  66.2  & 17.2 \\
PDML-CS  & 263.2 &284.8 & 22.6&47.2&  34.5  &  42.8  &14.4\\
PDML-DPP &  411.8 &479.1 &36.9 &61.9&  64.2  &  70.5  &16.5\\
PDML-IC  & 265.9 &281.2 & 23.4&46.1&  37.5  & 45.2   &15.3\\
PDML-DeC & 458.5 & 497.5& 41.8 &78.2& 78.9   & 80.7  &19.9 \\
PDML-VGF & 267.3 &284.1 & 22.3 &48.9 & 35.8   & 38.7  &15.4 \\
PDML-MA & 271.4 &282.9 & 23.6& 50.2&30.9   &  39.6  &17.5\\
PDML-OC&104.9&118.2&9.6&14.3&14.8&17.0&3.9\\
PDML-SFN & 261.7 & 277.6&22.9 & 46.3&36.2   &  38.2 &15.9\\
PDML-VND   &401.8 &488.3& 33.8&62.5& 67.5   &  73.4 &17.1\\
PDML-LDD   &407.5 & 483.5& 34.3&60.1& 61.8   & 72.6  &17.9\\
\hline
MDML-CSFN   &41.1 &43.9 &3.3 &7.3 &6.5    & 6.9  &1.8\\
MDML-CVND    &43.8 & 46.2 & 3.6& 8.1& 6.9&7.8  &2.0    \\
MDML-CLDD   & 41.7&44.5 & 3.4& 7.5& 6.6  & 7.2  & 1.8\\
\hline
\end{tabular}
\label{tb:rt}
\end{table}

\paragraph{Results}
The training time taken by different methods to reach convergence is shown in Table \ref{tb:rt}. For the non-convex, PDML-based methods, we report the total time taken by the following computation: tuning the regularization parameter (4 choices) and the number of projection vectors (NPVs, 6 choices) on a two-dimensional grid via 3-fold cross validation ($4\times 6\times3=72$ experiments in total); for each of the 72 experiments, the algorithm restarts 5 times\footnote{Our experiments show that for non-convex methods, multiple re-starts are of great necessity to improve performance. For example, for PDML-VND on MIMIC with 100 projection vectors, the AUC is non-decreasing with the number of re-starts: the AUC after $1,2,...,5$ re-starts are 0.651, 0.651, 0.658, 0.667, 0.667.}, each with a different initialization, and picks the one yielding the lowest objective value. In total, the number of runs is $72\times 5=360$. For the MDML-based methods, there is no need to restart multiple times or tune the NPVs. The total number of runs is $4\times 3=12$. As can be seen from the table, the proposed convex methods are much faster than the non-convex ones, due to the greatly reduced number of experimental runs, although for each single run the convex methods are less efficient than the non-convex methods due to the overhead of eigen-decomposition. The unregularized MDML takes the least time to train since it has no parameters to tune and runs only once. On average, the time of each single run in MDML-(CSFN,CVND,CLDD) is close to that in the unregularized MDML, since an eigen-decomposition is required anyway regardless of the presence of the regularizers.

\begin{table*}
\scriptsize
\centering
\captionsetup{font=footnotesize}
\caption{ Number of projection vectors (NPV) and compactness score (CS,$\times 10^{-3}$). }
\begin{tabular}{l|cc|cc|cc|cc|cc|cc|cc}
\hline
&\multicolumn{2}{c|}{MIMIC}& \multicolumn{2}{c|}{EICU} & \multicolumn{2}{c|}{Reuters}&\multicolumn{2}{c|}{News}  &\multicolumn{2}{c|}{Cars}& 	\multicolumn{2}{c|}{Birds}&\multicolumn{2}{c}{Act} 		\\
\hline
 &NPV&CS&NPV&CS&NPV&CS&NPV&CS&NPV&CS&NPV&CS&NPV&CS\\
\hline
PDML &300&2.1&400&1.7&300&3.2&300&2.5&300&2.4&500&1.7&200&4.7\\
MDML &247&2.6&318&2.1&406&2.3&336&2.3&376&1.9&411&2.1&168&5.7\\
\hline
LMNN &200&3.1&400&1.7&400&2.4&300&2.4&400&1.8&500&1.7&300&3.0\\
LDML &300&2.1&400&1.7&400&2.3&200&3.7&300&2.4&400&2.1&300&3.1\\
MLEC &487&1.3&493&1.4&276&3.4&549&1.4&624&1.2&438&1.9&327&2.8\\
GMML &1000&0.6&1000&0.7&1000&0.9&1000&0.7&1000&0.7&1000&0.8&1000&0.9\\
ILHD &100&5.8&100&6.4&\tb{50}&18.1&100&7.1&100&6.9&100&7.9&\tb{50}&18.0\\
\hline
MDML-$\ell_2$&269&2.4&369&1.9&374&2.6&325&2.4&332&2.2&459&1.9&179&5.4\\
MDML-$\ell_1$ &341&1.9&353&2.0&417&2.3&317&2.5&278&2.6&535&1.6&161&6.0\\
MDML-$\ell_{2,1}$ &196&3.3&251&2.8&288&3.3&316&2.5&293&2.5&326&2.6&135&7.1\\
MDML-Tr &148&4.5&233&3.0&217&4.4&254&3.1&114&6.4&286&3.1&129&7.4\\
MDML-IT &1000&0.7&1000&0.7&1000&1.0&1000&0.8&1000&0.7&1000&0.9&1000&1.0\\
MDML-Drop &183&3.5&284&2.5&315&3.0&251&3.1&238&3.1&304&2.8&147&6.5\\
\hline
PDML-DC &	100&6.5&300&2.4&100&9.6&200&3.9&	200&3.7&	300&2.9&100&9.6\\
PDML-CS &	200&3.3&200&3.6&200&4.8&100&8.0&100&7.4	&200 &4.5&\tb{50}& \tb{19.4}	\\
PDML-DPP &100&6.6&200&3.6&100&9.6&100&8.0	&200&3.8	&	200&4.5&100&9.7\\
PDML-IC &	200&3.3& 200 &3.6&200 &4.9&100 &8.0&200&3.7&	100&8.9&100&9.7\\
PDML-DeC &200&3.2&300&2.3&200&4.8&200	&3.9&200	&3.6&	200&4.3&100&9.6\\
PDML-VGF 	&200&3.3 &200&3.6&200&4.9&100&8.1 &200	&3.7& 200&4.5&100&	9.7\\
PDML-MA 	&200&3.3&200&3.6&100&9.8&100&8.2&100	&7.4&200&4.5&\tb{50}&	\tb{19.4}\\
PDML-SFN &100&6.6&200&3.6&100&9.7&100&8.1	&100&7.5	& 200&4.5&\tb{50}&	\tb{19.4}\\
PDML-OC &100&6.5&100&7.1&\tb{50}&19.1&50&15.6	&100&7.3	&100 &8.8&\tb{50}&	19.1\\
\hline
\hline
PDML-VND&100&6.7	&100&7.3&\tb{50}&\tb{19.5}&100&8.1&	100&7.5&100&9.0 &\tb{50}	&\tb{19.4}\\
PDML-LDD& 100&6.6&200&3.7&100&9.7&100&	8.2&100&7.5	&100&9.0 &\tb{50}&	\tb{19.4}\\
\hline
MDML-CSFN& 143&4.7&	209&3.5&174&5.6&87&9.3&\tb{62}&\tb{12.1}	& 139&6.5&64&	15.2\\
MDML-CVND&	\tb{53}&\tb{12.7}&\tb{65}&\tb{11.3}&61&16.0&63&13.0&127	&5.9& 92&9.9&68&	14.3\\
MDML-CLDD&	76&8.8&128&5.8&85&11.5&\tb{48}&\tb{17.1}&91&8.3	& \tb{71}&\tb{12.9}&55&	17.7\\
\hline
\end{tabular}
\label{tb:num_com}
\end{table*}

Next, we verify whether CSFN, CVND and CLDD are able to learn more balanced distance metrics. On three datasets MIMIC, EICU and Reuters where the classes are imbalanced, we consider a class as ``frequent'' if it contains more than 1000 examples, and ``infrequent'' if otherwise. 
We measure AUCs on all classes (A-All), infrequent classes (A-IF) and frequent classes (A-F), then define a \ti{balance score} (BS) as $|\frac{\textrm{A-IF}}{\textrm{A-F}}-1|$. A smaller BS indicates more balancedness. As shown in Table \ref{tb:auc}, MDML-(CSFN,CVND,CLDD) achieve the highest A-All on 6 datasets and the highest A-IF on all 3 imbalanced datasets. In terms of BS, our convex methods outperform all baseline DML methods.  
These results demonstrate our methods can learn more balanced metrics. By encouraging the projection vectors to be close to being orthogonal, our methods can reduce the redundancy among vectors. Mutually complementary vectors can achieve a broader coverage of latent features, including those associated with infrequent classes. As a result, these vectors improve the performance on infrequent classes and lead to better balancedness. Thanks to their convexity nature, our methods can achieve the global optimal solution and outperform the non-convex ones that can only achieve a local optimal and hence a sub-optimal solution. Comparing (PDML,MDML)-OS with the unregularized PDLM/MDML, we can see that over-sampling indeed improves balancedness. However, this improvement is less significant than that achieved by our methods. 
In general, the orthogonality-promoting (OP) regularizers outperform the non-OP regularizers, suggesting the effectiveness of promoting orthogonality. The orthogonal constraint (OC)~\cite{liu2008output,wang2015deep} imposes strict orthogonality, which may be too restrictive that hurts performance. ILHD~\cite{carreira2016ensemble} learns binary hash codes, which makes retrieval more efficient, however, it achieves lower AUCs due to the quantization errors. MDML-(CSFN,CVND,CLDD) outperform popular DML approaches including LMNN, LDML, MLEC and GMML, demonstrating their competitive standing in the DML literature.

Next we verify whether the learned distance metrics by MDML-(CSFN,CVND,CLDD) are compact. Table \ref{tb:num_com} shows the numbers of the projection vectors (NPVs) that achieve the AUCs in Table \ref{tb:auc}. For MDML-based methods, the NPV equals to the rank of the Mahalanobis matrix since $\mb{M}=\mb{A}^\top\mb{A}$. We define a \ti{compactness score} (CS) which is the ratio between A-All (given in Table~\ref{tb:auc}) and NPV. A higher CS indicates achieving higher AUC by using fewer projection vectors. From Table \ref{tb:num_com}, we can see that on 5 datasets, MDML-(CSFN,CVND,CLDD) achieve larger CSs than the baseline methods, demonstrating their better capability in learning compact distance metrics. Similar to the observations in Table \ref{tb:auc}, CSFN, CVND and CLDD perform better than non-convex regularizers, and CVND, CLDD perform better than CSFN. The reduction of NPV improves the efficiency of retrieval since the computational complexity grows linearly with this number. Together, these results demonstrate that MDML-(CSFN,CVND,CLDD) outperform other methods in terms of learning both compact and balanced distance metrics.

As can be seen from Table~\ref{tb:auc}, our methods MDML-(CVND,CLDD) achieve the best AUC-All. In Table \ref{tb:gap} (Appendix~\ref{sec:res}), it is shown that MDML-(CVND,CLDD) have the smallest gap between training and testing AUC. This indicates that our methods are better capable of reducing overfitting and improving generalization performance.

\section{Conclusions}
In this paper, we have attempted to address three issues of existing orthogonality-promoting DML methods, which include computational inefficiency and lacking theoretical analysis in balancedness and generalization. To address the computation issue, we perform a convex relaxation of these regularizers and develop a proximal gradient descent algorithm to solve the convex problems. To address the analysis issue, we define an imbalance factor (IF) to measure (im)balancedness and prove that decreasing the Bregman matrix divergence regularizers (which promote orthogonality) can reduce the upper bound of the IF, hence leading to more balancedness. We provide a generalization error (GE) analysis showing that decreasing the convex regularizers can reduce the GE upper bound. Experiments on datasets from different domains demonstrate that our methods are computationally more efficient and are more capable of learning balanced, compact and generalizable distance metrics than other approaches.

\appendix

\section{Convex Approximations of BMD Regularizers}
\label{sec:approx}

\paragraph{Approximation of VND regularizer}
Given $\mb{AA}^\top=\mb{U}\bs\Lambda \mb{U}^\top$, according to the property of matrix logarithm, $\log (\mb{AA}^\top)=\mb{U}\widehat{\bs\Lambda} \mb{U}^\top$, where $\widehat{\Lambda}_{jj}=\log\lambda_j$. Then $(\mb{AA}^\top)\log (\mb{AA}^\top)-(\mb{AA}^\top)=\mb{U}(\bs\Lambda\widehat{\bs\Lambda}-\bs\Lambda)\mb{U}^\top$, where the eigenvalues are $\{\lambda_j\log\lambda_j-\lambda_j\}_{j=1}^{R}$. Since $\text{tr}(\mb{M})=\sum_{j=1}^{R}\lambda_j$, we have $\Omega_{vnd}(\mb{A})=\sum_{j=1}^R(\lambda_j\log \lambda_j-\lambda_j)+R$. Now we consider a matrix $\mb{A}^\top \mb{A}+\epsilon \mb{I}_D$, where $\epsilon>0$ is a small scalar. The eigenvalues of this matrix are $\lambda_1+\epsilon,\cdots, \lambda_R+\epsilon,\epsilon,\cdots,\epsilon$. Then we have
\begin{equation}
\begin{array}{ll}
\Gamma_{vnd}(\mb{A}^\top \mb{A}+\epsilon \mb{I}_D, \mb{I}_D)\\
=\text{tr}((\mb{A}^\top \mb{A}+\epsilon \mb{I}_D)\log (\mb{A}^\top \mb{A}+\epsilon \mb{I}_D)-(\mb{A}^\top \mb{A}+\epsilon \mb{I}_D))+D\\
=\sum_{j=1}^R((\lambda_j+\epsilon)\log (\lambda_j+\epsilon)-(\lambda_j+\epsilon))
+\sum_{j=R+1}^D(\epsilon\log \epsilon-\epsilon)+D\\
=\sum_{j=1}^R((\lambda_j+\epsilon)(\log\lambda_j+\log(1+\frac{\epsilon}{\lambda_j}))-(\lambda_j+\epsilon))
+(D-R)(\epsilon\log \epsilon-\epsilon)+D\\
=\sum_{j=1}^R(\lambda_j\log\lambda_j-\lambda_j+\lambda_j\log(1+\frac{\epsilon}{\lambda_j})
+\epsilon(\log\lambda_j+\log(1+\frac{\epsilon}{\lambda_j}))-\epsilon)
+(D-R)(\epsilon\log \epsilon-\epsilon)+D\\
=\Omega_{vnd}(\mb{A})-R+\sum_{j=1}^R(\lambda_j\log(1+\frac{\epsilon}{\lambda_j})
+\epsilon(\log\lambda_j+\log(1+\frac{\epsilon}{\lambda_j}))-\epsilon)
+(D-R)(\epsilon\log \epsilon-\epsilon)+D\\
\end{array}
\end{equation}

Since $\epsilon$ is small, we have $\log(1+\frac{\epsilon}{\lambda_j})\approx \frac{\epsilon}{\lambda_j}$. Then $\lambda_j\log(1+\frac{\epsilon}{\lambda_j})\approx \epsilon$ and the last line in the above equation can be approximated with $\Omega_{vnd}(\mb{A})-R+D+O(\epsilon)$, and therefore
\begin{equation}
\begin{array}{ll}
\Omega_{vnd}(\mb{A})\approx \Gamma_{vnd}(\mb{A}^\top \mb{A}+\epsilon \mb{I}_D,\mb{I}_D)+R-D
\end{array}
\end{equation}
where $O(\epsilon)$ is small since $\epsilon$ is small, and is hence dropped.

\paragraph{Approximation of LDD regularizer}
\begin{equation}
\begin{array}{l}
\Gamma_{ldd}(\mb{A}^\top \mb{A}+\epsilon \mb{I}_D,\mb{I}_D)\\
=\sum\limits_{j=1}^{R}\lambda_j +D\epsilon -(D-R)\log\epsilon-\sum\limits_{j=1}^{R}\log(\lambda_j+\epsilon)\\
=\sum\limits_{j=1}^{R}\lambda_j +D\epsilon -(D-R)\log\epsilon-\sum\limits_{j=1}^{R}(\log\lambda_j+\log(1+\frac{\epsilon}{\lambda_j}))\\
\approx\sum\limits_{j=1}^{R}(\lambda_j-\log\lambda_j)+R\log\epsilon-\epsilon\sum\limits_{j=1}^{R}\frac{1}{\lambda_j}+D\epsilon-D\log\epsilon\\
=\Omega_{ldd}(\mb{A})+R+R\log\epsilon+O(\epsilon)-D\log\epsilon
\end{array}
\end{equation}
Dropping $O(\epsilon)$, we obtain
\begin{equation}
\Omega_{ldd}(\mb{A})=\Gamma_{ldd}(\mb{A}^\top \mb{A}+\epsilon \mb{I}_D,\mb{I}_D)-(\log\epsilon+1)R+D\log\epsilon
\end{equation}

\section{Proof of Theorem \ref{thm:inb}}
\subsection{Proof Sketch}
\label{sec:sketch}

We make the following two assumptions. 
\begin{itemize}
\item The size of similar and dissimilar set $|\mc{S}|$ and $|\mc{D}|$ are fixed.
\item $\mb{A}^*$ has full row rank $R$.
\end{itemize}
Denote the $K$ classes as $\mathcal{C}_1, \mathcal{C}_2, \cdots \mathcal{C}_K$. The probability that a sample is drawn from the $k$th class is $p_k$, and $\sum_{k=1}^K p_k = 1$. Denote the class membership of example $\mb{x}$ as $c(\mb{x})$. Denote the probability that $\mb{x}\in \mc{C}_j, \mb{y} \in \mc{C}_k$ where $(\mb{x}, \mb{y}) \in \mc{D}$ as $p_{jk} = p_j p_k / (1 - \sum_{l=1}^K p_l^2)$.
Define the SVD of matrix $\mb{A}^*$ as $\mb{U}\sqrt{\bs\Lambda}\mb{V}^\top$ where $\mb{U}\in\mathbb{R}^{R\times R}$, $\bs\Lambda\in\mathbb{R}^{R\times R}$, and $\mb{V}\in\mathbb{R}^{D\times R}$. $\bs\Lambda = \mathrm{diag}(\lambda_1, \lambda_2, \cdots \lambda_R)$. then $\mb{A}^{*\top}\mb{A}^* = \mb{V}\bs\Lambda\mb{V}^\top$. Denote $\mb{V} = [\mb{v}_1, \mb{v}_2, \cdots \mb{v}_R]$. Then $\forall \mb{z} = \mb{x} - \mb{y}$, we have $\mb{z}^\top \mb{A}^{*\top} \mb{A}^* \mb{z} = \sum_{r=1}^R \lambda_r (\mb{v}_r^\top \mb{z})^2$. We see $\mb{z}^\top \mb{A}^{*\top} \mb{A}^* \mb{z}$ can be written as a sum of $R$ terms. Inspired by this, we define a vector function $\alpha(\cdot)$ as $\alpha(\mb{u}) = \sum_{j,k=1}^K p_{jk} (\mb{u}^\top (\bs\mu_j-\bs\mu_k))^2$. This function measures the weighted sum of $(\mb{u}^\top (\bs\mu_j - \bs\mu_k))^2$ across all classes. Define $\mc{G} = \spann\{\bs\mu_j - \bs\mu_k: j \ne k\}$. 

\begin{definition} [feature values and feature vectors]
For a linear space $\mc{G}$, define vectors $\mb{w}_1, \mb{w}_2, \cdots \mb{w}_{K-1}$ and positive real numbers $\beta_1, \beta_2, \cdots \beta_{K-1}$ as
$$\mb{w}_1 = \arg \min_{\lVert \mb{u} \rVert=1, \mb{u}\in \mc{G}} \alpha(\mb{u}), \ \ \beta_1 = \alpha(\mb{w}_1),$$
$$\mb{w}_r = \arg \min_{
\scriptsize\begin{array}{c}
\lVert \mb{u} \rVert=1, \mb{u}\in \mc{G}\\
\mb{u}\perp \mb{w}_j, \forall j < r
\end{array}
} \alpha(\mb{u}), \ \ \beta_r = \alpha(\mb{w}_r), \ \ \forall r > 1$$
$\forall r >{K-1}$, define $\beta_r=0$, and $\mb{w}_r$ as an arbitrary vector which has norm 1 and is orthogonal to $\mb{w}_1, \mb{w}_2, \cdots \mb{w}_{r-1}$. $\mb{w}_1, \mb{w}_2, \cdots $ are called feature vectors of $\mathcal{G}$, and $\beta_1, \beta_2, \cdots$ are called feature values of $\mathcal{G}$. 
\end{definition}

We give a condition for the regularizers. 
\begin{condition}\label{cond:invariant}
For a regularizer $\Omega_\phi(\cdot)$, there exists a unique matrix function $\varphi(\cdot)$ such that for any $\mb{A}^*$, 
$$\Omega_\phi(\mb{A}^*) = \varphi(\mb{A}^*\mb{A}^{*\top}) = \varphi(\bs\Lambda).$$
\end{condition}
The VND and LDD regularizer satisfy this condition. For the VND regularizer, $\varphi(\bs\Lambda) = \tr(\bs\Lambda \log \bs\Lambda - \bs\Lambda) + R$; for the LDD regularizer, $\varphi(\bs\Lambda) = \tr(\bs\Lambda) - \log \det (\bs\Lambda) - R$. The SFN regularizer does not satisfy this condition. 

Now we have enough preparation to give the following lemma. It shows that the linear space $\mc{G}$ can be recovered if the second moment of noise is smaller than a certain value.

\begin{lemma}\label{thm:recover feature space}
Suppose $R \ge K - 1$, $\max_{j\in k} \|\bs\mu_j - \bs\mu_k \|_2 \le B_0$, and the regularization parameter $\gamma$ and distance margin $\tau$ satisfy $\gamma\ge \gamma_0, \tau \ge \tau_0$. If $\xi \le \frac{-B_0 + \sqrt{B_0^2 + \gamma_{K-1}\beta_{K-1} / (2\tr(\bs\Lambda))}}{4}$, then
\begin{equation}\label{eq:feature space}
\mc{G} \subset \spann(\mb{A}^{*\top}).
\end{equation}
Here $\spann(\mb{A}^{*\top})$ denotes the column space of matrix $\mb{A}^{*\top}$. Both $\lambda_0$ and $\tau_0$ depend on $\bs\mu_1, \bs\mu_2, \cdots \bs\mu_K$ and $p_1, p_2, \cdots p_K$.
\end{lemma}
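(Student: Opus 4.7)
The plan is proof by contradiction. Suppose some feature vector $\mb{w}_r$ with $r \le K-1$ does not lie in $\spann(\mb{A}^{*\top})$; I will construct a competitor $\tilde{\mb{A}}$ with the same singular values as $\mb{A}^*$ that strictly improves the objective, contradicting optimality. Picking the smallest such $r$ lets me assume $\mb{w}_1,\ldots,\mb{w}_{r-1} \in \spann(\mb{A}^{*\top})$, which makes the Courant-Fischer characterization of $\beta_r$ clean.

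First I would rewrite the expected objective. Decomposing $\mb{x} = \bs\mu_{c(\mb{x})} + \bs\eta_x$, similar pairs yield $\mb{x}-\mb{y} = \bs\eta_x - \bs\eta_y$ (pure intra-class noise), while dissimilar pairs yield $\mb{x}-\mb{y} = (\bs\mu_j-\bs\mu_k) + (\bs\eta_x-\bs\eta_y)$. Writing $\mb{A}^*\mb{A}^{*\top} = \mb{V}\bs\Lambda\mb{V}^\top$ and using $\sup_{\|\mb{v}\|_2=1}|\mb{v}^\top(\mb{x}-\bs\mu_{c(\mb{x})})| \le \xi$, the expected dissimilar-pair quadratic form becomes the ``signal'' piece $\sum_r \lambda_r \alpha(\mb{v}_r)$ plus noise corrections bounded in absolute value by $\sum_r \lambda_r(2B_0\xi + 4\xi^2) = 2\tr(\bs\Lambda)(B_0\xi+2\xi^2)$. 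The threshold $\tau_0$ is chosen so the hinge is active at the optimum (this is why $\tau_0$ depends on the $\bs\mu_k$ and $p_k$), ensuring this signal enters $L$ with a favorable sign; the threshold $\gamma_0$ is chosen large enough to force $\mathrm{rank}(\mb{A}^*) \ge K-1$ (ruling out degenerate $\lambda_r = 0$) but not so large that orthogonality constraints distort the argument.

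Next I would construct the perturbation. Let $\mb{v}_R$ be a right singular vector corresponding to the smallest singular value of $\mb{A}^*$. Replace $\mb{v}_R$ by $\tilde{\mb{w}}$, the unit vector obtained by orthogonalizing $\mb{w}_r$ against $\mb{v}_1,\ldots,\mb{v}_{R-1}$ and normalizing — this is well-defined and nonzero because $\mb{w}_r \notin \spann(\mb{A}^{*\top})$. Define $\tilde{\mb{A}} = \mb{U}\sqrt{\bs\Lambda}\tilde{\mb{V}}^\top$ with $\tilde{\mb{V}} = [\mb{v}_1,\ldots,\mb{v}_{R-1},\tilde{\mb{w}}]$. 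Since singular values are unchanged, Condition 1 gives $\Omega_\phi(\tilde{\mb{A}}) = \Omega_\phi(\mb{A}^*)$. The key quantitative claim is $\alpha(\tilde{\mb{w}}) - \alpha(\mb{v}_R) \ge \beta_{K-1}/2$: because $\mb{w}_1,\ldots,\mb{w}_{r-1} \in \spann(\mb{v}_1,\ldots,\mb{v}_{R-1})$, the vector $\tilde{\mb{w}}$ is orthogonal to these earlier feature vectors while having unit projection on $\mb{w}_r$, so $\alpha(\tilde{\mb{w}}) \ge \beta_r \ge \beta_{K-1}$; meanwhile $\mb{v}_R$, after an intra-span orthogonal rotation (which costs nothing by the unitary invariance in Condition 1), can be chosen to realize $\min_{\mb{u}\perp\mb{v}_1,\ldots,\mb{v}_{R-1}, \|\mb{u}\|=1}\alpha(\mb{u}) \le \beta_{K-1}/2$.

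Combining the pieces, the change in expected loss obeys
\begin{equation*}
L(\tilde{\mb{A}}) - L(\mb{A}^*) \le -\lambda_R\,\beta_{K-1}/2 \;+\; 4\tr(\bs\Lambda)(B_0\xi + 2\xi^2),
\end{equation*}
and the hypothesis on $\xi$ is precisely the positive root of the quadratic $4\xi(B_0+\xi)\tr(\bs\Lambda) = \lambda_{K-1}\beta_{K-1}/2$, which makes the right-hand side strictly negative (using $\lambda_R \ge \lambda_{K-1}$ since $R \ge K-1$ and eigenvalues are ordered). This contradicts the optimality of $\mb{A}^*$, so every feature vector of $\mc{G}$ must lie in $\spann(\mb{A}^{*\top})$, which gives the claim. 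The main obstacle will be the last sharp estimate $\alpha(\mb{v}_R) \le \beta_{K-1}/2$ — when several $\mb{v}_j$'s already partially align with feature vectors of $\mc{G}$, the intra-span rotation must be chosen carefully to avoid spoiling the contributions of $\mb{v}_1,\ldots,\mb{v}_{R-1}$; I expect this to require a greedy diagonalization argument that simultaneously aligns the top $R$ right singular vectors with the feature basis of $\mc{G}$ extended to an orthonormal basis of $\mathbb{R}^D$.
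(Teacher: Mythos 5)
Your high-level skeleton matches the paper's: argue by contradiction, build a competitor with the same singular values so that Condition~1 kills the regularizer difference, split the loss into a mean-vector ``signal'' part and a noise part bounded via $\xi$, and recover the stated threshold as the positive root of a quadratic. However, the core quantitative step is broken, in three places.

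First, your competitor swaps only the single right singular vector $\mb{v}_R$, so the signal gain you can extract is $\lambda_R\bigl(\alpha(\tilde{\mb{w}})-\alpha(\mb{v}_R)\bigr)$, weighted by the \emph{smallest} eigenvalue. You then invoke ``$\lambda_R \ge \lambda_{K-1}$ since $R\ge K-1$,'' but the eigenvalues are ordered decreasingly, so $R\ge K-1$ gives $\lambda_R \le \lambda_{K-1}$ --- the inequality points the wrong way, and $\lambda_R$ can be arbitrarily smaller than $\lambda_{K-1}$. The paper instead replaces the \emph{entire} right singular basis by the feature vectors, $\tilde{\mb{A}}=\mb{U}\sqrt{\bs\Lambda}\mb{W}^\top$, and uses two rearrangement lemmas (the weighted-sum maximality of feature values, and the comparison of feature values of the hyperplane $\mc{H}=\{\mb{v}\in\mc{G}:\mb{v}\perp\mb{w}_0\}$ with those of $\mc{G}$) to get a gap of exactly $\lambda_{K-1}\beta_{K-1}$ in $\sum_{j\ne k}p_{jk}\|\cdot\|^2$; that is the quantity the stated $\xi$-threshold is calibrated to.

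Second, the two sharp estimates your swap relies on are unjustified. For $\alpha(\tilde{\mb{w}})\ge\beta_r$: the Gram--Schmidt vector $\tilde{\mb{w}}$ need not lie in $\mc{G}$, need not be orthogonal to $\mb{w}_1,\dots,\mb{w}_{r-1}$ (those live in $\spann(\mb{v}_1,\dots,\mb{v}_R)$ and may have components along $\mb{v}_R$, which you did not project out), and since $\alpha(\mb{u})$ only sees the component of $\mb{u}$ in $\mc{G}$, normalizing after orthogonalization can deflate $\alpha(\tilde{\mb{w}})$ well below $\beta_r$. For $\alpha(\mb{v}_R)\le\beta_{K-1}/2$: $\mb{v}_R$ is fixed by the optimal $\mb{A}^*$, not chosen by you; an ``intra-span rotation'' changes which matrix you are perturbing, and there is no reason for the factor $1/2$ --- if $\spann(\mb{A}^{*\top})$ already nearly contains $\mc{G}$, every unit vector in it can have $\alpha$ close to $\beta_{K-1}$ or above. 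You flag this as the main obstacle, but the whole contradiction hinges on it, and no greedy diagonalization will manufacture the constant.

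Third, two smaller accounting issues: the noise perturbation must be paid \emph{twice} (once for $\mb{A}^*$ and once for $\tilde{\mb{A}}$), which is why the paper's threshold has the $2\tr(\bs\Lambda)$ in the denominator; and you still need the paper's Lemma on $\tau_0,\gamma_0$ (large $\gamma$ forces $\bs\Lambda\approx\mb{I}_R$ so that all squared distances, for both matrices, stay below $\tau$ and the hinge is linear) --- your stated role for $\gamma_0$ (forcing rank $\ge K-1$) is not what the argument needs, since full row rank is already assumed.
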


The next lemma shows that if Eq.(\ref{eq:feature space}) holds, we can bound the imbalance factor $\eta$ with the condition number of $\mb{A}^*\mb{A}^{*\top}$ (denoted by $\cond(\mb{A}^*\mb{A}^{*\top})$). Note that the BMD regularizers $\Omega_\phi(\mb{A}^*)$ encourage $\mb{A}^*\mb{A}^{*\top}$ to be close to an identity matrix, i.e., encouraging the condition number to be close to 1.
\begin{lemma}\label{thm:inb}
If Eq.(\ref{eq:feature space}) holds, and there exists a real function $g$ such that
$$\cond(\mb{A}^*\mb{A}^{*\top}) \le g(\Omega_\phi(\mb{A}^*)),$$
then we have the following bound for the imbalance factor
$$\eta \le g(\Omega_\phi(\mb{A}^*))\frac{\max_{j \ne k} \lVert \bs\mu_j - \bs\mu_k \rVert^2}{\min_{j\ne k} \lVert \bs\mu_j - \bs\mu_k \rVert^2}.$$
\end{lemma}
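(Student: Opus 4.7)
\textbf{Proof plan for Lemma~\ref{thm:inb}.} The plan is to reduce the imbalance factor $\eta$ to an expression involving the condition number of $\mb{A}^*\mb{A}^{*\top}$ and the ratio of Euclidean norms of class-mean differences, and then apply the given hypothesis $\cond(\mb{A}^*\mb{A}^{*\top}) \le g(\Omega_\phi(\mb{A}^*))$. The key leverage is the subspace containment $\mc{G} \subset \spann(\mb{A}^{*\top})$, which lets us use the nonzero eigenvalues of $\mb{A}^{*\top}\mb{A}^*$ to sandwich $d_{jk}$.

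\textbf{Step 1: Reduce to the column space of $\mb{V}$.} Using the SVD $\mb{A}^* = \mb{U}\sqrt{\bs\Lambda}\mb{V}^\top$ from the setup, write $\mb{A}^{*\top}\mb{A}^* = \mb{V}\bs\Lambda\mb{V}^\top$, and note that $\spann(\mb{A}^{*\top}) = \spann(\mb{V})$ because $\bs\Lambda$ is invertible (full row rank $R$). Then, since $\bs\mu_j - \bs\mu_k \in \mc{G} \subset \spann(\mb{V})$, I would write $\bs\mu_j - \bs\mu_k = \mb{V}\mb{c}_{jk}$ for some coefficient vector $\mb{c}_{jk}\in\mathbb{R}^R$.

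\textbf{Step 2: Sandwich $d_{jk}$ using eigenvalues.} Because $\mb{V}$ has orthonormal columns, $\|\bs\mu_j - \bs\mu_k\|_2^2 = \|\mb{c}_{jk}\|_2^2$ and
\begin{equation*}
d_{jk} = \mb{c}_{jk}^\top \mb{V}^\top \mb{V}\bs\Lambda \mb{V}^\top \mb{V}\mb{c}_{jk} = \mb{c}_{jk}^\top \bs\Lambda \mb{c}_{jk}.
\end{equation*}
Standard Rayleigh-quotient bounds then give $\lambda_R\|\bs\mu_j-\bs\mu_k\|_2^2 \le d_{jk} \le \lambda_1\|\bs\mu_j-\bs\mu_k\|_2^2$. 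Taking the maximum for the numerator and the minimum for the denominator of $\eta$ yields
\begin{equation*}
\eta \le \frac{\lambda_1}{\lambda_R}\cdot\frac{\max_{j\ne k}\|\bs\mu_j-\bs\mu_k\|_2^2}{\min_{j\ne k}\|\bs\mu_j-\bs\mu_k\|_2^2} = \cond(\mb{A}^*\mb{A}^{*\top})\cdot\frac{\max_{j\ne k}\|\bs\mu_j-\bs\mu_k\|_2^2}{\min_{j\ne k}\|\bs\mu_j-\bs\mu_k\|_2^2}.
\end{equation*}

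\textbf{Step 3: Invoke the regularizer bound.} Finally, substitute the hypothesis $\cond(\mb{A}^*\mb{A}^{*\top}) \le g(\Omega_\phi(\mb{A}^*))$ to obtain the desired conclusion. There is no real obstacle here; the entire argument is a clean Rayleigh-quotient computation once the containment $\mc{G} \subset \spann(\mb{A}^{*\top})$ from Lemma~\ref{thm:recover feature space} is available. The nontrivial part was absorbed into the previous lemma (which certifies that the span recovers $\mc{G}$ under the noise condition on $\xi$), so here the work is essentially bookkeeping; the only subtle point to double-check is that orthonormality of $\mb{V}$'s columns really does turn $\|\mb{V}\mb{c}_{jk}\|_2$ into $\|\mb{c}_{jk}\|_2$, which is immediate.
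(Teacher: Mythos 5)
Your proposal is correct and follows essentially the same route as the paper: the paper likewise expands $\bs\mu_j-\bs\mu_k\in\mc{G}\subset\spann(\mb{A}^{*\top})$ in the orthonormal basis $\{\mb{v}_r\}$ from the SVD, sandwiches the quadratic form $d_{jk}$ between $\lambda_R\|\bs\mu_j-\bs\mu_k\|_2^2$ and $\lambda_1\|\bs\mu_j-\bs\mu_k\|_2^2$ by a Rayleigh-quotient argument, and then substitutes $\cond(\mb{A}^*\mb{A}^{*\top})=\lambda_1/\lambda_R\le g(\Omega_\phi(\mb{A}^*))$. No gaps.
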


Next, we derive the explicit forms of $g$ for the VND and LDD regularizers. 
\begin{lemma}\label{thm:cond regu}
For the VND regularizer $\Omega_{vnd}(\mb{A}^*)$, define $f(c) = c^{1/(c+1)}(1+1/c)$, then $f(c)$ is strictly increasing on $(0,1]$ and strictly decreasing on $[1, \infty)$. Define the inverse function of $f(\cdot)$ on $[1,\infty)$ as $f^{-1}(\cdot)$. Then if $\Omega_{vnd}(\mb{A}^*) <1$, we have
$$\cond(\mb{A}^*\mb{A}^{*\top}) \le f^{-1}(2-\Omega_{vnd}(\mb{A}^*)).$$
For the LDD regularizer $\Omega_{ldd}(\mb{A}^*)$, we have
$$\cond(\mb{A}^*\mb{A}^{*\top}) \le 4e^{\Omega_{ldd}(\mb{A}^*)}.$$
\end{lemma}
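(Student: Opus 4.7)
The plan is to exploit the fact that both regularizers are separable functions of the eigenvalues $\lambda_1 \ge \cdots \ge \lambda_R > 0$ of $\mb{A}^*\mb{A}^{*\top}$. I would write $\Omega_{vnd}(\mb{A}^*) = \sum_{j=1}^R \psi_{vnd}(\lambda_j)$ with $\psi_{vnd}(\lambda)=\lambda\log\lambda-\lambda+1$ and $\Omega_{ldd}(\mb{A}^*) = \sum_{j=1}^R \psi_{ldd}(\lambda_j)$ with $\psi_{ldd}(\lambda)=\lambda-\log\lambda-1$, then note that each $\psi$ is convex on $(0,\infty)$ with a unique zero minimum at $\lambda=1$. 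Since every summand is nonnegative, I can drop the intermediate eigenvalues and get $\Omega_\phi(\mb{A}^*) \ge \psi_\phi(\lambda_1)+\psi_\phi(\lambda_R)$, which reduces bounding $c := \cond(\mb{A}^*\mb{A}^{*\top}) = \lambda_1/\lambda_R$ to a two-variable problem.

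For fixed $c \ge 1$ the plan is to compute $\mu_\phi(c) := \min_{y>0}\bigl[\psi_\phi(cy)+\psi_\phi(y)\bigr]$ by calculus. For LDD, setting $\partial_y[\psi_{ldd}(cy)+\psi_{ldd}(y)] = 0$ gives $y^*=2/(c+1)$, and plugging back yields $\mu_{ldd}(c)=\log\bigl((c+1)^2/(4c)\bigr)$. Combining $\Omega_{ldd}(\mb{A}^*) \ge \mu_{ldd}(c)$ with the trivial bound $c^2 \le (c+1)^2$ for $c\ge 1$ gives $c \le 4\, e^{\Omega_{ldd}(\mb{A}^*)}$. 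For VND, the first-order condition $c\log c+(c+1)\log y = 0$ gives $y^*=c^{-c/(c+1)}$, and after the $\log c$ terms cancel one obtains $\mu_{vnd}(c)=2-(c+1)c^{-c/(c+1)} = 2 - c^{1/(c+1)}(1+1/c) = 2 - f(c)$. The inequality $\Omega_{vnd}(\mb{A}^*) \ge \mu_{vnd}(c)$ then reads $f(c) \ge 2-\Omega_{vnd}(\mb{A}^*)$, and inverting $f$ on its decreasing branch yields $c \le f^{-1}(2-\Omega_{vnd}(\mb{A}^*))$ whenever $2-\Omega_{vnd}(\mb{A}^*) > 1$, i.e.\ $\Omega_{vnd}(\mb{A}^*) < 1$.

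To close the VND half, I would verify the claimed monotonicity of $f$ by logarithmic differentiation: writing $\log f(c) = \log c/(c+1) + \log(c+1) - \log c$ and computing the derivative termwise, the algebra collapses to $(\log f)'(c) = -\log c/(c+1)^2$, which is positive on $(0,1)$ and negative on $(1,\infty)$. Together with $f(1)=2$ and the endpoint limits $f(c)\to 1$ as $c\to 0^+$ or $c\to\infty$, this shows $f$ attains its maximum $2$ at $c=1$, strictly increases on $(0,1]$, strictly decreases on $[1,\infty)$, and has a well-defined inverse $f^{-1}:(1,2]\to[1,\infty)$.

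The main obstacle I anticipate is justifying the reduction from $R$ eigenvalues down to two. One must check that the unconstrained minimizer $(y^*, cy^*)$ of $\psi_\phi(cy)+\psi_\phi(y)$ straddles $1$, i.e.\ $y^*\le 1 \le cy^*$; otherwise the minimizing configuration cannot be realized with the intermediate $\lambda_j$ lying in $[\lambda_R,\lambda_1]$ at their own zero-minimum value $\lambda_j=1$. For both regularizers a direct check settles this --- VND gives $y^* = c^{-c/(c+1)}\le 1$ and $cy^* = c^{1/(c+1)}\ge 1$, while LDD gives $y^* = 2/(c+1)\le 1$ and $cy^* = 2c/(c+1)\ge 1$ for all $c\ge 1$. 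Beyond this, the remaining steps --- the first-order condition for VND, the algebraic identity $(c+1)c^{-c/(c+1)}=c^{1/(c+1)}(1+1/c)$, and the monotonicity argument above --- are routine calculus.
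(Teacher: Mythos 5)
Your proof is correct and follows essentially the same route as the paper's: both drop the intermediate eigenvalues using nonnegativity of the per-eigenvalue summand $\psi_\phi(\lambda_j)$, optimize the remaining two-term expression over the scale for fixed condition number $c$ (your first-order conditions, the identity $\mu_{vnd}(c)=2-f(c)$, and the log-derivative $(\log f)'(c)=-\log c/(c+1)^2$ all match the paper's computation), and invert $f$ on its decreasing branch; for LDD you compute the exact two-variable minimum $\log((c+1)^2/(4c))$ before relaxing, whereas the paper relaxes slightly earlier, but both land on $c\le 4e^{\Omega_{ldd}}$. Your anticipated obstacle about the minimizer straddling $1$ is a non-issue---the two-term lower bound requires only that the dropped terms are nonnegative, not that the unconstrained minimizer be realizable---and, as a minor aside, your limit $f(c)\to 1$ as $c\to 0^+$ is the correct one (the paper states $0$), though this does not affect the argument since only the branch on $[1,\infty)$ is used.
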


Combining Lemma \ref{thm:recover feature space}, 
\ref{thm:inb} and \ref{thm:cond regu}, we finish the proof of Theorem \ref{thm:inb}.

\subsection{Proof of Lemma \ref{thm:recover feature space}}
In order to prove Lemma \ref{thm:recover feature space}, we first need some auxiliary lemmas on the properties of the function $\alpha(\cdot)$. Denote $\bs\mu_{jk} = \bs\mu_j - \bs\mu_k, \forall j \ne k$.

\begin{lemma}\label{lem:rotation invariant}
Suppose $\mb{u}_1, \mb{u}_2, \cdots \mb{u}_r$ and $\mb{v}_1, \mb{v}_2, \cdots \mb{v}_r$ are two sets of standard orthogonal vectors in $\mathbb{R}^d$, and $\spann(\mb{u}_1, \mb{u}_2, \cdots \mb{u}_r) = \spann(\mb{v}_1, \mb{v}_2, \cdots \mb{v}_r)$, then we have
$$\sum_{l=1}^r \alpha(\mb{u}_l) = \sum_{l=1}^r \alpha(\mb{v}_l).$$
\end{lemma}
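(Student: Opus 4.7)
The plan is to exploit the fact that $\alpha(\mb{u})$ is quadratic in $\mb{u}$ and that $\{\mb{u}_l\}$ and $\{\mb{v}_l\}$ are orthonormal bases of the same subspace, so the sum $\sum_l \alpha(\cdot_l)$ should equal a squared projection norm that depends only on the subspace.

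\textbf{Step 1: Swap the order of summation.} Starting from the definition $\alpha(\mb{u}) = \sum_{j,k=1}^K p_{jk}(\mb{u}^\top\bs\mu_{jk})^2$, I would write
\[
\sum_{l=1}^r \alpha(\mb{u}_l) = \sum_{j,k=1}^K p_{jk} \sum_{l=1}^r (\mb{u}_l^\top \bs\mu_{jk})^2,
\]
and similarly for the $\mb{v}_l$'s. So it suffices to prove that for every fixed vector $\mb{w}\in\mathbb{R}^d$,
\[
\sum_{l=1}^r (\mb{u}_l^\top \mb{w})^2 = \sum_{l=1}^r (\mb{v}_l^\top \mb{w})^2.
\]

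\textbf{Step 2: Identify the common sum with a projection norm.} Let $P$ denote the orthogonal projector onto the common subspace $\mc{U} := \spann(\mb{u}_1,\ldots,\mb{u}_r) = \spann(\mb{v}_1,\ldots,\mb{v}_r)$. Since $\{\mb{u}_l\}$ is an orthonormal basis of $\mc{U}$, $P = \sum_{l=1}^r \mb{u}_l \mb{u}_l^\top$, and hence $\sum_l(\mb{u}_l^\top \mb{w})^2 = \mb{w}^\top P \mb{w} = \|P\mb{w}\|_2^2$. The analogous identity holds for $\{\mb{v}_l\}$, giving the same projector $P$ and therefore the same value $\|P\mb{w}\|_2^2$.

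\textbf{Step 3: Assemble.} Applying this with $\mb{w} = \bs\mu_{jk}$ for each pair $(j,k)$ and plugging back into the sum from Step 1 yields
\[
\sum_{l=1}^r \alpha(\mb{u}_l) = \sum_{j,k=1}^K p_{jk}\,\|P\bs\mu_{jk}\|_2^2 = \sum_{l=1}^r \alpha(\mb{v}_l),
\]
which is the claim. The only substantive ingredient is the basis-independence of an orthogonal projector, so there is no real obstacle here; the argument is essentially bookkeeping once one recognizes $\alpha$ as a quadratic form evaluated on the column vectors.
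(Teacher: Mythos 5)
Your proof is correct and rests on the same essential fact as the paper's: that an orthonormal change of basis within the common subspace preserves the relevant sums of squared inner products. The paper carries this out by explicitly introducing the $r\times r$ orthogonal matrix $\mb{B}$ with $(\mb{u}_1,\ldots,\mb{u}_r)=(\mb{v}_1,\ldots,\mb{v}_r)\mb{B}$ and expanding the double sum, whereas you package the same content more cleanly as the basis-independence of the orthogonal projector $P=\sum_{l}\mb{u}_l\mb{u}_l^\top=\sum_{l}\mb{v}_l\mb{v}_l^\top$, so that $\sum_l\alpha(\mb{u}_l)=\sum_{j\ne k}p_{jk}\|P\bs\mu_{jk}\|_2^2$ manifestly depends only on the subspace; this is a tidier route to the identical conclusion.
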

\begin{proof}
By the definition of these two sets of vectors, there exists a $r\times r$ standard orthogonal matrix $\mb{B}= (b_{jk})$, such that $(\mb{u}_1, \mb{u}_2, \cdots \mb{u}_r) = (\mb{v}_1, \mb{v}_2, \cdots \mb{v}_r)\mb{B}$. Then we have
\begin{align*}
 \sum_{l=1}^r \alpha(\mb{u}_l) = & \sum_{l=1}^r \sum_{j\ne k} p_{jk} ((\sum_{s=1}^r b_{ls} \mb{v}_s)^\top \bs\mu_{jk})^2 \\
= & \sum_{l=1}^r \sum_{j\ne k} p_{jk} \sum_{s, t=1}^r b_{ls}b_{lt} \mb{v}_s^\top \bs\mu_{jk} \mb{v}_t^\top \bs\mu_{jk} \\
= & \sum_{s=1}^r \sum_{j\ne k} p_{jk} (\mb{v}_s^\top \bs\mu_{jk})^2 \sum_{l=1}^r b_{ls}^2 + \sum_{j\ne k} p_{jk} \sum_{s, t=1}^r \ \mb{v}_s^\top \bs\mu_{jk} \mb{v}_t^\top \bs\mu_{jk} \sum_{l=1}^r b_{ls}b_{lt}\\
\end{align*}
Since $\mb{B}$ is a standard orthogonal matrix, we have $\forall s$, $\sum_{l=1}^r b_{ls}^2 = 1$ and $\forall s \ne t$, $\sum_{l=1}^r b_{ls}b_{lt} = 0$. Further, we have 
$$\sum_{l=1}^r \alpha(\mb{u}_l) = \sum_{l=1}^r \alpha(\mb{v}_l).$$
\end{proof}

\begin{lemma}\label{lem:feature vector max}
For any positive integer $r$, any set of standard orthogonal vectors $\mb{u}_1, \mb{u}_2, \cdots \mb{u}_r \in \mathbb{R}^d$, and real numbers $\gamma_1 \ge \gamma_2 \ge \cdots \ge \gamma_r \ge 0$, we have 
\begin{equation}
\sum_{l=1}^r \gamma_l\alpha(\mb{u}_l) \le \sum_{l=1}^r \gamma_l \beta_l,
\end{equation}
where $\beta_l$ is the $l$-th feature value.
\end{lemma}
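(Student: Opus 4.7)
The plan is to identify $\alpha$ as the quadratic form induced by the symmetric PSD matrix $\mb{Q}=\sum_{j\ne k} p_{jk}\,\bs\mu_{jk}\bs\mu_{jk}^\top$ (where $\bs\mu_{jk}=\bs\mu_j-\bs\mu_k$), so that $\alpha(\mb{u})=\mb{u}^\top\mb{Q}\mb{u}$ and the range of $\mb{Q}$ is exactly $\mc{G}$. Under this identification, the iterative extremal construction of $\mb{w}_1,\mb{w}_2,\ldots$ over $\mc{G}$ is precisely the Rayleigh--Ritz procedure applied to $\mb{Q}$, so the $\mb{w}_r$ form an orthonormal system of eigenvectors of $\mb{Q}$, with $\beta_1,\beta_2,\ldots,\beta_{K-1}$ the ordered positive eigenvalues and $\beta_r=0$ (with $\mb{w}_r$ chosen in $\mc{G}^\perp$) for $r\ge K$. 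The lemma therefore reduces to a weighted Ky Fan trace inequality for the PSD matrix $\mb{Q}$.

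To prove the weighted inequality, I would first reduce it to a family of unweighted partial-sum bounds via Abel summation. Setting $\gamma_{r+1}:=0$,
\begin{equation*}
\sum_{l=1}^r \gamma_l\,\alpha(\mb{u}_l)
=\sum_{k=1}^r (\gamma_k-\gamma_{k+1})\sum_{l=1}^k \alpha(\mb{u}_l),
\end{equation*}
and identically with $\beta_l$ in place of $\alpha(\mb{u}_l)$. Since each increment $\gamma_k-\gamma_{k+1}\ge 0$ by the monotonicity of $\gamma$, it suffices to show $\sum_{l=1}^k \alpha(\mb{u}_l)\le\sum_{l=1}^k\beta_l$ for every $k=1,\ldots,r$. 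By Lemma~\ref{lem:rotation invariant}, this partial sum depends only on the subspace $V_k=\spann(\mb{u}_1,\ldots,\mb{u}_k)$ and equals $\tr(\mb{P}_{V_k}\mb{Q})$, where $\mb{P}_{V_k}$ is the orthogonal projector onto $V_k$. The bound $\tr(\mb{P}_{V}\mb{Q})\le\sum_{l=1}^k\beta_l$ over all $k$-dimensional subspaces $V$ is then the classical Ky Fan maximum principle, provable by a short induction on $k$: the base case is Rayleigh's quotient $\mb{u}^\top\mb{Q}\mb{u}\le\beta_1\|\mb{u}\|^2$, and the inductive step restricts $\mb{Q}$ to the orthogonal complement of $\mb{w}_1,\ldots,\mb{w}_{k-1}$, on which the top value becomes $\beta_k$.

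I expect the main obstacle to be careful bookkeeping rather than any substantive spectral difficulty: matching the recursively defined $(\mb{w}_r,\beta_r)$ to eigenpairs of $\mb{Q}$ in the correct monotone order, handling the degenerate range $r>K-1$ where $\beta_r=0$ and $\mb{w}_r$ is arbitrary in $\mc{G}^\perp$, and keeping the two Abel summations properly aligned so that nonnegativity of the increments $\gamma_k-\gamma_{k+1}$ faithfully transports the unweighted partial-sum bounds into the final weighted inequality.
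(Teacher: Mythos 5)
Your proposal is correct and follows essentially the same route as the paper: Abel summation with nonnegative increments $\gamma_k-\gamma_{k+1}$ reduces the weighted bound to the unweighted partial sums $\sum_{l\le k}\alpha(\mb{u}_l)\le\sum_{l\le k}\beta_l$, which by Lemma~\ref{lem:rotation invariant} depend only on $\spann(\mb{u}_1,\ldots,\mb{u}_k)$. The only cosmetic difference is that you recognize the partial-sum bound as the Ky Fan maximum principle for $\mb{Q}=\sum_{j\ne k}p_{jk}\,\bs\mu_{jk}\bs\mu_{jk}^\top$ (correctly reading the extremal definition of the feature vectors as successive maximization, as the surrounding arguments require despite the $\arg\min$ in the paper's Definition~1), whereas the paper proves that bound from scratch by induction and contradiction.
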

\begin{proof}
We first prove the situation that $\gamma_1 = \gamma_2 = \cdots =\gamma_r = 1$, i.e.,
\begin{equation}\label{eq:feature value}
\sum_{l=1}^r \alpha(\mb{u}_l) \le \sum_{l=1}^r \beta_l.
\end{equation}
We prove it by induction on $r$. For $r=1$, by the definition of feature values and feature vectors, Eq.(\ref{eq:feature value}) holds. Now supposing Eq.(\ref{eq:feature value}) holds for $r=s$, we prove it holds for $r=s+1$ by contradiction. If Eq.(\ref{eq:feature value}) does not hold, then there exist standard orthogonal vectors $\mb{u}_1, \mb{u}_2, \cdots \mb{u}_{s+1} \in \mathbb{R}^d$, such that
\begin{equation}\label{space>}
\sum_{l=1}^{s+1} \alpha(\mb{u}_l) > \sum_{l=1}^{s+1} \alpha(\mb{w}_l),
\end{equation}
where $\mb{w}_l$ are feature vectors. Since the dimension of $\spann(\mb{u}_1, \mb{u}_2, \cdots \mb{u}_{s+1})$ is $s+1$, there exists $\tilde{\mb{w}}_{s+1} \in \spann(\mb{u}_1, \mb{u}_2, \cdots \mb{u}_{s+1})$, such that $\tilde{\mb{w}}_{s+1} \perp \mb{w}_l, \forall 1\le l \le s$. By the definition of feature vector $\mb{w}_{s+1}$, we have
\begin{equation}\label{space>=}
\sum_{l=1}^{s+1} \alpha(\mb{w}_l) \ge \sum_{l=1}^s \alpha(\mb{w}_l) + \alpha(\tilde{\mb{w}}_{s+1}).
\end{equation}
Let $\tilde{\mb{w}}_1, \tilde{\mb{w}}_2, \cdots \tilde{\mb{w}}_{s+1}$ be a set of standard orthogonal basis of $\spann(\mb{u}_1, \mb{u}_2, \cdots \mb{u}_{s+1})$, by Lemma \ref{lem:rotation invariant}, we have
\begin{equation}\label{space=}
\sum_{l=1}^{s+1} \alpha(\mb{u}_l) = \sum_{l=1}^{s+1} \alpha(\tilde{\mb{w}}_l).
\end{equation}
Combine equation (\ref{space>}), (\ref{space>=}) and (\ref{space=}) we get
$$\sum_{l=1}^{s+1} \alpha(\tilde{\mb{w}}_l) > \sum_{l=1}^{s} \alpha(\mb{w}_l) + \alpha(\tilde{\mb{w}}_{s+1}).$$
Thus we have
$$\sum_{l=1}^{s} \alpha(\tilde{\mb{w}}_l) > \sum_{l=1}^{s} \alpha(\mb{w}_l).$$
This contradicts with our induction assumption. The proof for the $\gamma_1 = \gamma_2 = \cdots =\gamma_r = 1$ case completes.

Next, we prove the situation that $\gamma_l$ are not all equal to 1, by utilizing Eq.(\ref{eq:feature value}). 
\begin{align*}
\sum_{l=1}^r \gamma_l\alpha(\mb{u}_l) & = \sum_{l=1}^{r-1} [(\gamma_l - \gamma_{l+1}) \sum_{t=1}^l \alpha(\mb{u}_t)] + \gamma_r \sum_{t=1}^r \alpha(\mb{u}_t) \\
	& \le \sum_{l=1}^{r-1} [(\gamma_l - \gamma_{l+1}) \sum_{t=1}^l \beta_t] + \gamma_r \sum_{t=1}^r \beta_t \\
	& \le \sum_{l=1}^r \gamma_l \beta_l
\end{align*}
The proof completes. 
\end{proof}
Note that in Lemma \ref{lem:feature vector max}, $r$ can be larger than the number of nonzero feature values $K-1$. This will be used in the proof of Lemma \ref{thm:recover feature space} later.

Another auxiliary lemma needed to prove Lemma \ref{thm:recover feature space} is given below.  
\begin{lemma}\label{lem:constrained feature values}
Suppose $\mb{w}_0\in \mc{G}$, define linear space $\mathcal{H} = \{\mb{v}\in \mathcal{G}: \mb{v} \perp \mb{w}_0 \}$. Then there are $K-2$ nonzero feature values of $\mathcal{H}$. Denote them as $\beta'_1, \beta'_2, \cdots \beta'_{K-2}$, then $\forall \ r\le K-2$, $\forall \ \gamma_1\ge \gamma_2 \ge \cdots \ge \gamma_{r} \ge 0$,
$$\sum_{l=1}^r \gamma_l \beta'_l \le \sum_{l=1}^r \gamma_l\beta_l$$
\end{lemma}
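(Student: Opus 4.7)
}

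The plan is to show the result is essentially an immediate corollary of Lemma \ref{lem:feature vector max}, applied to the feature vectors of $\mc{H}$ regarded as standard orthogonal vectors in $\mathbb{R}^d$. First I would handle the dimension count. Since $\mc{G} = \spann\{\bs\mu_j - \bs\mu_k : j\ne k\}$ has dimension at most $K-1$ (the $\bs\mu_j-\bs\mu_1$ for $j\ge 2$ span it), and since the earlier definition implicitly uses $\dim\mc{G}=K-1$ in declaring $\beta_{K-1}>0$ and $\beta_r=0$ for $r>K-1$, the subspace $\mc{H}$, obtained by intersecting $\mc{G}$ with the orthogonal complement of a nonzero $\mb{w}_0\in\mc{G}$, has dimension exactly $K-2$. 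Hence at most $K-2$ feature values of $\mc{H}$ can be nonzero. To see they are all strictly positive, note $\alpha(\mb{v})=\sum_{j\ne k}p_{jk}(\mb{v}^\top\bs\mu_{jk})^2=0$ forces $\mb{v}\perp\bs\mu_{jk}$ for all $j\ne k$, i.e.\ $\mb{v}\perp\mc{G}$; since the feature vectors of $\mc{H}$ lie in $\mc{H}\subset\mc{G}$ and are nonzero (unit norm), they satisfy $\alpha(\mb{w}'_l)>0$, giving exactly $K-2$ positive feature values.

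Next I would prove the main inequality. Let $\mb{w}'_1,\mb{w}'_2,\ldots,\mb{w}'_{K-2}$ be the feature vectors of $\mc{H}$; by construction they form a standard orthogonal system in $\mathbb{R}^d$ (they happen to lie in $\mc{H}$, but orthonormality and ambient dimension are all that matter for invoking Lemma \ref{lem:feature vector max}). By definition $\alpha(\mb{w}'_l)=\beta'_l$. Fix $r\le K-2$ and weights $\gamma_1\ge\cdots\ge\gamma_r\ge 0$. Applying Lemma \ref{lem:feature vector max} directly to the standard orthogonal set $\mb{w}'_1,\ldots,\mb{w}'_r$ with weights $\gamma_1,\ldots,\gamma_r$ yields
\begin{equation*}
\sum_{l=1}^r \gamma_l \beta'_l \;=\; \sum_{l=1}^r \gamma_l \alpha(\mb{w}'_l) \;\le\; \sum_{l=1}^r \gamma_l \beta_l,
\end{equation*}
which is exactly the claim.

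There is no serious obstacle in this proof; the only subtlety is remembering that Lemma \ref{lem:feature vector max} was stated for \emph{any} standard orthogonal tuple in $\mathbb{R}^d$, not just orthonormal bases of $\mc{G}$, so the fact that the $\mb{w}'_l$ are constrained to live inside $\mc{H}$ costs us nothing when applying it. The only thing to be careful about is the bookkeeping of dimensions (so that ``$K-2$ nonzero feature values'' is correctly stated) and the monotonicity assumption $\gamma_1\ge\cdots\ge\gamma_r\ge 0$, which is passed through verbatim to Lemma \ref{lem:feature vector max}.
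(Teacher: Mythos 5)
Your proof is correct and follows essentially the same route as the paper's: observe that $\dim\mc{H}=K-2$, note that the feature vectors of $\mc{H}$ form a standard orthogonal system, and invoke Lemma \ref{lem:feature vector max} directly. Your added justification that the $K-2$ feature values are strictly positive is a small refinement the paper omits, but the core argument is identical.
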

\begin{proof}
Note that the dimension of $\mathcal{H}$ is $K-2$, then there are $K-2$ nonzero feature values. The feature vectors of $\mathcal{H}$ are also standard orthogonal vectors of the linear space $\mc{G}$. By Lemma \ref{lem:feature vector max}, we have $\sum_{l=1}^r \gamma_l \beta'_l \le \sum_{l=1}^r \gamma_l\beta_l, \ \forall \ r \le K-2$.
\end{proof}
Now we are ready to prove Lemma \ref{thm:recover feature space}.

\begin{proof}(of Lemma \ref{thm:recover feature space}) We conduct the proof by contradiction. Assuming Eq.(\ref{eq:feature space}) does not hold, we prove $\mb{A}^*$ can not be the global optimal solution of PDML. Let $\mb{U}\sqrt{\bs\Lambda}\mb{V}^\top$ be the SVD of $\mb{A}^*$. Define $\mb{W} = (\mb{w}_1, \mb{w}_2, \cdots \mb{w}_R)$ as a matrix whose columns contain the feature vectors. Let $\tilde{\mb{A}} = \mb{U}\sqrt{\bs\Lambda}\mb{W}^\top$. Then by Condition $\ref{cond:invariant}$, we have
$\Omega_\phi(\mb{A}^*) = \Omega_\phi(\tilde{\mb{A}})$. Define 
$$L(\mb{A}) = \E \Big[ \frac{1}{|\mathcal{S}|}\sum_{(\mb{x},\mb{y})\in \mathcal{S}}\|\mb{Ax}-\mb{Ay}\|_2^{2} +\frac{1}{|\mathcal{D}|} \sum_{(\mb{x},\mb{y})\in \mathcal{D}}\textrm{max}(0, \tau-\|\mb{Ax}-\mb{Ay}\|_2^{2}) \Big].$$ Assuming Eq.(\ref{eq:feature space}) does not hold, we prove $L(\mb{A}^*) > L(\tilde{\mb{A}})$, i.e., $\mb{A}^*$ is not the optimal solution. We consider two cases: $\xi = 0$ and $\xi \neq 0$. Define $h(\mb{A}^*, \xi) = L(\mb{A}^*)$ and $h(\tilde{\mb{A}}, \xi) = L(\tilde{\mb{A}})$. When $\xi=0$, we have:
\begin{align*}
h(\mb{A}^*, 0) & = \E \Big[ \frac{1}{|\mathcal{S}|}\sum_{(\mb{x}, \mb{y})\in \mathcal{S}}\|\mb{A}^*\bs\mu_{c(\mb{x})}-\mb{A}^*\bs\mu_{c(\mb{y})}\|_2^{2}+\frac{1}{|\mathcal{D}|} \sum_{(\mb{x}, \mb{y})\in \mathcal{D}} \max (0, \tau-\|\mb{A}^*\bs\mu_{c(\mb{x})}-\mb{A}^*\bs\mu_{c(\mb{y})}\|_2^{2}) \Big] \\
	& = \sum_{j\ne k} p_{jk} \max (0, \tau-\|\mb{A}^*(\bs\mu_j-\bs\mu_k)\|_2^2),
\end{align*}
and 
$$h(\tilde{\mb{A}}, 0) =\sum_{j\ne k} p_{jk} \max (0, \tau-\|\tilde{\mb{A}}(\bs\mu_j-\bs\mu_k)\|_2^2).$$

Since Eq.(\ref{eq:feature space}) does not hold by assumption, there exists $\mb{w}_0 \in \mc{G}$, $\mb{w}_0 \notin \mr{span}(\mb{A}^*)$. Denote $\mc{H} = \{\mb{v} \in \mc{G}: \mb{v} \perp \mb{w}_0\}$ and its $K-2$ nonzero feature values as $\beta'_1, \beta'_2, \cdots \beta'_{K-2}$. $\forall \mb{u} \in \mr{span}(\mb{A}^*)$, let $\mb{u}'$ be the projection of $\mb{u}$ to the space $\mc{H}$ and $\mb{u}'$ is rescaled to have norm 1. Then $\alpha(\mb{u}') \ge \alpha(\mb{u})$. Thus, $\forall r$, the $r$-th feature value of $\mr{span}(\mb{A}^*)$ is no larger than the $r$-th feature value of $\mc{G}$. By Lemma \ref{lem:constrained feature values}, we have $\sum_{l=1}^r \gamma_l \beta'_l \le \sum_{l=1}^r \gamma_l\beta_l$. By the definition of feature values, we have
$$\sum_{j\ne k}p_{jk}\|\mb{A}^*(\bs\mu_j-\bs\mu_k)\|_2^2 = \sum_{l=1}^R \gamma_l \alpha(\mb{a}_l)  \le \sum_{l=1}^R \gamma_l \beta'_l.$$ 
Since $\mc{H}$ has only $K-2$ nonzero feature values, we have
$$\sum_{l=1}^R \gamma_l\beta'_l = \sum_{l=1}^{K-2} \gamma_l \beta'_l \le \sum_{l=1}^{K-2} \gamma_l\beta_l = \sum_{l=1}^{K-1} \gamma_l \alpha(\mb{w}_l)  - \gamma_{K-1}\beta_{K-1} = \sum_{j\ne k}p_{jk}\|\tilde{\mb{A}}(\bs\mu_j-\bs\mu_k)\|_2^2 - \gamma_{K-1}\beta_{K-1}.$$
So we have
$$ \sum_{j\ne k}p_{jk}\|\tilde{\mb{A}}(\bs\mu_j-\bs\mu_k)\|_2^2 \ge \sum_{j\ne k}p_{jk}\|\mb{A}^*(\bs\mu_j-\bs\mu_k)\|_2^2 + \gamma_{K-1}\beta_{K-1}.$$
Next, we establish a relationship between $h(\mb{A}^*, 0)$ and $h(\tilde{\mb{A}},0)$, which is given in the following lemma.

\begin{lemma}\label{lem:lambda thre}
There exist constants $\tau_0, \gamma_0$ which are determined by $p_1, p_2, \cdots p_K$ and $\bs\mu_1, \bs\mu_2, \cdots,\bs\mu_K$, such that if $\tau \ge \tau_0, \gamma \ge \gamma_0$, then we have
$$h(\mb{A}^*, 0) - h(\tilde{\mb{A}},0) > \frac{1}{2}\gamma_{K-1}\beta_{K-1}.$$
\end{lemma}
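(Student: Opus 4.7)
My plan is to select $\tau_0,\gamma_0$ large enough (and suitably coupled) that the top eigenvalue $\lambda_1$ of $\mb{A}^*\mb{A}^{*\top}$ is uniformly bounded and the hinge is strictly active on every dissimilar pair under both $\mb{A}^*$ and $\tilde{\mb{A}}$. Once those two ingredients are in hand, the claim reduces immediately to the aggregated quadratic-form inequality derived just before the lemma statement.

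First I would bound the spectrum of $\mb{A}^*$. Using the reference matrix $\mb{A}_0=[\mb{I}_R,\mb{0}]\in\mathbb{R}^{R\times D}$ (for which $\Omega_\phi(\mb{A}_0)=0$ for both VND and LDD), and noting that at $\xi=0$ every similar pair contributes zero to $L$ and each hinge term is at most $\tau$, we have $L(\mb{A}_0)\le\tau$. Optimality of $\mb{A}^*$ then yields
\begin{equation*}
\gamma\,\Omega_\phi(\mb{A}^*)\ \le\ L(\mb{A}_0)+\gamma\,\Omega_\phi(\mb{A}_0)-L(\mb{A}^*)\ \le\ \tau .
\end{equation*}
Both $\Omega_{vnd}(\mb{A}^*)=\sum_r(\lambda_r\log\lambda_r-\lambda_r+1)$ and $\Omega_{ldd}(\mb{A}^*)=\sum_r(\lambda_r-\log\lambda_r-1)$ are sums of non-negative summands that blow up as $\lambda_r\to\infty$. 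Reading ``$\gamma$ and $\tau$ sufficiently large'' as a coupled condition $\gamma/\tau\ge c$ for a fixed large constant $c$ (the natural interpretation that keeps the bound usable), we obtain $\Omega_\phi(\mb{A}^*)\le 1/c$, which by coercivity of the summands forces $\lambda_1\le M$ for a constant $M=M(c)$ depending only on $c$ and the choice of regularizer.

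Second, since $\tilde{\mb{A}}=\mb{U}\sqrt{\bs\Lambda}\mb{W}^\top$ shares the singular values of $\mb{A}^*$, its top squared singular value is also $\lambda_1\le M$, so
\begin{equation*}
\|\mb{A}^*(\bs\mu_j-\bs\mu_k)\|_2^2,\ \|\tilde{\mb{A}}(\bs\mu_j-\bs\mu_k)\|_2^2\ \le\ \lambda_1 B_0^2\ \le\ M B_0^2 .
\end{equation*}
Taking $\tau_0\ge M B_0^2$ ensures that for every $\tau\ge\tau_0$ both squared distances are dominated by $\tau$, so every $\max(0,\cdot)$ in $h(\mb{A}^*,0)$ and $h(\tilde{\mb{A}},0)$ is active. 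Stripping the hinges and substituting the aggregate inequality just above the lemma,
\begin{equation*}
h(\mb{A}^*,0)-h(\tilde{\mb{A}},0)\ =\ \sum_{j\ne k}p_{jk}\bigl(\|\tilde{\mb{A}}(\bs\mu_j-\bs\mu_k)\|_2^2-\|\mb{A}^*(\bs\mu_j-\bs\mu_k)\|_2^2\bigr)\ \ge\ \gamma_{K-1}\beta_{K-1}\ >\ \tfrac{1}{2}\gamma_{K-1}\beta_{K-1}.
\end{equation*}

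The chief obstacle is the coupling between $\gamma_0$ and $\tau_0$ in the first step: the crude bound $\gamma\,\Omega_\phi(\mb{A}^*)\le\tau$ only controls the ratio $\tau/\gamma$, so if $\tau$ is allowed to grow independently of $\gamma$ the spectral bound degrades and the hinges may fail to be active. I would resolve this either by picking $\gamma_0$ as a sufficiently large multiple of $\tau_0$, or by sharpening the reference-matrix argument to yield a $\tau$-independent upper bound on $\Omega_\phi(\mb{A}^*)$ (for instance via a reference whose own regularizer value absorbs the $\tau$ dependence). Everything else is a routine consequence of the coercivity of $\Omega_{vnd}$ and $\Omega_{ldd}$ together with the fact that $\tilde{\mb{A}}$ shares the spectrum of $\mb{A}^*$.
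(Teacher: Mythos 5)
Your overall architecture coincides with the paper's: bound the spectrum of $\mb{A}^*\mb{A}^{*\top}$, choose $\tau_0$ so that every hinge is active for both $\mb{A}^*$ and $\tilde{\mb{A}}$ (which share singular values), and then read the gap off the aggregate inequality $\sum_{j\ne k}p_{jk}\|\tilde{\mb{A}}\bs\mu_{jk}\|_2^2\ge\sum_{j\ne k}p_{jk}\|\mb{A}^*\bs\mu_{jk}\|_2^2+\gamma_{K-1}\beta_{K-1}$ established just before the lemma. The last two steps are fine. The genuine gap is in the first step, and you have diagnosed it yourself: the reference-matrix comparison $\gamma\,\Omega_\phi(\mb{A}^*)\le L(\mb{A}_0)\le\tau$ only controls the ratio $\tau/\gamma$, so your spectral bound $\lambda_1\le M$ needs the coupled hypothesis $\gamma\ge c\,\tau$. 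That is strictly stronger than what the lemma grants: it asserts two \emph{independent} thresholds, so for fixed $\gamma=\gamma_0$ you must handle every $\tau\ge\tau_0$, including $\tau\gg\gamma_0$, where your bound $\Omega_\phi(\mb{A}^*)\le\tau/\gamma_0$ is vacuous and $M$ grows with $\tau$ faster than the requirement $\tau_0\ge MB_0^2$ can absorb. Neither of your proposed repairs closes this: ``take $\gamma_0$ a large multiple of $\tau_0$'' still leaves $\tau$ free to exceed $\tau_0$ by an arbitrary factor while $\gamma$ sits at $\gamma_0$, and the ``$\tau$-independent bound on $\Omega_\phi(\mb{A}^*)$'' is precisely the missing ingredient, not supplied.

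The paper obtains that ingredient from a different mechanism: it asserts that for the VND and LDD regularizers $\bs\Lambda\to\mb{I}_R$ as $\gamma\to\infty$, uniformly in $\tau$, so for $\gamma\ge\gamma_0$ every eigenvalue satisfies $|\lambda_j-1|\le\epsilon_0$ and hence $\tr(\bs\Lambda)B_0^2\le K(1+\epsilon_0)B_0^2\le\tau_0$. The reason this is $\tau$-independent (the paper does not spell it out) is a per-eigendirection optimality argument rather than a global objective comparison: the marginal reduction of the hinge loss from inflating $\lambda_j$ is at most $\sum_{j\ne k}p_{jk}(\mb{v}_j^\top\bs\mu_{jk})^2\le B_0^2$ regardless of $\tau$, because once a hinge saturates there is no further gain, while the marginal regularization cost is $\gamma\log\lambda_j$ for VND and $\gamma(1-1/\lambda_j)$ for LDD; balancing gives $\lambda_j\le e^{B_0^2/\gamma}$, respectively $(1-B_0^2/\gamma)^{-1}$, both of which tend to $1$ as $\gamma\to\infty$ with no reference to $\tau$. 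Replace your global comparison with this stationarity bound and the rest of your argument goes through essentially as the paper's does.
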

\begin{proof}
If $\|\tilde{\mb{A}}(\bs\mu_j-\bs\mu_k)\|_2^2 \le \tau$ and $\|\mb{A}^*(\bs\mu_j-\bs\mu_k)\|_2^2 \le \tau$ for all $j\ne k$, we have $h(\mb{A}^*, 0) - h(\tilde{\mb{A}},0) =\gamma_{K-1}\beta_{K-1}.$ Since $\max_{j\ne k} \|\bs\mu_j - \bs\mu_k\|_2 = B_0$, we have 
\begin{equation}\label{eq:mu up}\begin{aligned}
\|\mb{A}^*(\bs\mu_j - \bs\mu_k)\|_2^2 \le \tr(\bs\Lambda)B_0^2, &\\
\|\tilde{\mb{A}}(\bs\mu_j - \bs\mu_k)\|_2^2 \le \tr(\bs\Lambda)B_0^2, &\ \  \forall j \ne k.
\end{aligned}\end{equation}
Select $\tau_0$ such that $\tau_0 \ge K(1 + \epsilon_0)B_0^2$, where $\epsilon_0$ is any positive constant. For the VND and LDD regularizers, as $\gamma \to \infty$, $\bs\Lambda \to \mb{I}_R$. Thereby, there exists $\gamma_0$, such that if $\gamma \ge \gamma_0$, $\forall j, \ |\lambda_j - 1|\le \epsilon$. Hence, if $\gamma \ge \gamma_0, \tau \ge \tau_0$, $$\tr(\bs\Lambda)B_0^2 \le K(1+\epsilon_0)B_0^2 \le \tau_0.$$
Combining this inequality with Eq.(\ref{eq:mu up}), we finish the proof.
\end{proof}

Now we continue to prove Lemma \ref{thm:recover feature space}. In Lemma \ref{lem:lambda thre}, we have already proved that $h(\mb{A}^*, 0)$ is strictly larger than $h(\tilde{\mb{A}},0)$. We then prove that if the noise is smaller than a certain value, $h(\mb{A}^*, \xi)$ is strictly larger than $h(\tilde{\mb{A}},\xi)$. By the definition of $\xi$, we have
\begin{align}\label{eq:kappa}
	& |h(\mb{A}^*,\xi) - h(\mb{A}^*,0)| \nn \\
	\le & \E \frac{1}{|\mc{S}|} \sum_{(x,y)\in\mc{S}} \|\mb{A}^*(\mb{x}-\mb{y})\|_2^2 +\E \frac{1}{|\mc{D}|} \sum_{(x,y)\in\mc{D}} [\|\mb{A}^*(\mb{x}-\mb{y})\|_2^2 - \|\mb{A}^*(\bs\mu_{c(\mb{x})}-\bs\mu_{c(\mb{y})})\|_2^2] \nn \\
	\le & 4\tr(\bs\Lambda)\xi^2 + (4B_0\xi + 4\xi^2)\tr(\bs\Lambda) \nn \\
	= & 8\xi^2\tr(\bs\Lambda) + 4B_0\xi\tr(\bs\Lambda) . 
\end{align}
Similarly, we have
\begin{equation}\label{eq:*kappa}
|h(\mb{A}^*,\xi) - h(\mb{A}^*,0)| \le 8\xi^2\tr(\bs\Lambda) + 4B_0\xi\tr(\bs\Lambda).
\end{equation}
Combining Lemma \ref{lem:lambda thre} with Eq.(\ref{eq:kappa}) and Eq.(\ref{eq:*kappa}), we have if $\xi \le \frac{-B_0 + \sqrt{B_0^2 + \gamma_{K-1}\beta_{K-1} / (2\tr(\bs\Lambda))}}{4}$, then $L(\mb{A}^*) > L(\tilde{\mb{A}})$, i.e., $\mb{A}^*$ is not the global optimal solution. By contradiction, Eq.(\ref{eq:feature space}) holds. The proof completes.
\end{proof}

\subsection{Proof of Lemma \ref{thm:inb}}
\begin{proof}
For any vector $\mb{u} \in \mc{G}$, since the condition of Lemma \ref{thm:recover feature space} is satisfied, we have $\mb{u} \in \spann(\mb{A}^*)$. Recall $\mb{A}^{*\top}\mb{A}^* = \mb{V\Gamma V}^\top$ and $\mb{V} = [\mb{v}_1, \mb{v}_2, \cdots \mb{v}_R]$. We can denote $\mb{u}$ as $\mb{u} = \| \mb{u} \| \sum_{j=1}^R t_j \mb{v}_j$, where $\sum_{j=1}^R t_j^2 = 1$. Then we have $\forall \mb{u} \in \mc{G}$, 
$$\mb{u}^\top \mb{A}^{*\top}\mb{A}^* \mb{u} = \sum_{j=1}^R \langle \mb{v}_j, \mb{u} \rangle^2 \lambda_j = \sum_{j=1}^R \| \mb{u} \|^2 t_j^2 \lambda_j \le \| \mb{u} \|^2 \lambda_1.$$
Similarly, we have $\mb{u}^\top \mb{A}^{*\top}\mb{A}^* \mb{u} \ge \| \mb{u} \|^2 \lambda_R$. 
Noting $\forall j \ne k$, $\bs\mu_j - \bs\mu_k \in \mc{G}$, we have 
$$\eta \le \cond(\mb{A}^*\mb{A}^{*\top}) \frac{\max_{j \ne k} \lVert \bs\mu_j - \bs\mu_k \rVert^2}{\min_{j\ne k} \lVert \bs\mu_j - \bs\mu_k \rVert^2}.$$ 
Combining this inequality with $\cond(\mb{A}^*\mb{A}^{*\top}) \le g(\Omega_{\phi}(\mb{A}^*))$, we complete the proof.
\end{proof}

\subsection{Proof of Lemma \ref{thm:cond regu}}

\begin{proof}
We first prove the result about the VND regularizer. Define scalar function $s(x) = x \log x - x + 1$ and denote $\cond(\mb{A}^*\mb{A}^{*\top}) = c$. Since $s'(x) = \log x$, and $s(1) = 0$, we have
\begin{align*}
\Omega_{vnd}(\mb{A}^*) & = \sum_{j=1}^R s(\lambda_j) \\
	& \ge s(\lambda_1) + s(\lambda_R)\\
	& = \lambda_1 \log \lambda_1 - \lambda_1 + \frac{\lambda_1}{c}\log \frac{\lambda_1}{c} - \frac{\lambda_1}{c} + 2
\end{align*}
Define $F(\lambda_1, c) = \lambda_1 \log \lambda_1 - \lambda_1 + \frac{\lambda_1}{c}\log \frac{\lambda_1}{c} - \frac{\lambda_1}{c} + 2$. We aim to maximize $c$, so 
$$\frac{\partial}{\partial \lambda_1}F(\lambda_1, c) = 0.$$
This equation has a unique solution: $\log \lambda_1 = \frac{\log c}{c + 1}$. Therefore we have
$$c^{1/(c+1)}(1+\frac{1}{c}) \ge 2 - \Omega_{vnd}(\mb{A}^*).$$
Define $f(c) = c^{1/(c+1)}(1+\frac{1}{c})$. Its derivative is: $f'(c) = -\frac{\log c}{c(c+1)}c^{1/(c+1)}$.
Analyzing $f'(c)$, we know that $f(c)$ increases on $(0, 1]$, decreases on $[1, \infty)$, and $f(1)=2$. Also we have the following limits:
$$\lim_{c \to 0} f(c) = 0, \ \lim_{c \to \infty} f(c) = 1.$$
We denote the inverse function of $f(\cdot)$ on $[1,\infty)$ as $f^{-1}(\cdot)$. Then for any $\Omega_{vnd}(\mb{A}^*) < 1$, we have 
$$ \cond(\mb{A}^*\mb{A}^{*\top}) \le f^{-1}(2-\Omega_{vnd}(\mb{A}^*)).$$ 

Next we prove the result for the LDD regularizer $\Omega_{ldd}(\mb{A}^*)$. Define scalar function $s(x) = x  - \log x - 1$ and denote $\cond(\mb{A}^*\mb{A}^{*\top}) = c$. Since $s'(x) = 1 - \frac{1}{x}$ and $s(1) = 0$, we have
\begin{align*}
\Omega_{ldd}(\mb{A}^*) & = \sum_{j=1}^R s(\lambda_j) \\
	& \ge s(\lambda_1) + s(\lambda_R)\\
	& = \lambda_1  - \log \lambda_1 + \frac{\lambda_1}{c} - \log \frac{\lambda_1}{c} - 2
\end{align*}
Therefore we have
\begin{align*}
\log c & \le \Omega_{ldd}(\mb{A}^*) + 2\log \lambda_1 - \lambda_1 (1+ \frac{1}{c}) + 2 \\
	& \le \Omega_{ldd}(\mb{A}^*) + 2\log \lambda_1 - \lambda_1 + 2 \\
	& \le \Omega_{ldd}(\mb{A}^*) + 2\log 2 - 2 + 2 \\
	& = \Omega_{ldd}(\mb{A}^*) + 2 \log 2
\end{align*}
The third inequality is obtained from the following fact: the scalar function $\log x - x$ gets its maximum when $x = 2$. Further, we have 
$$c \le 4 e^{\Omega_{ldd}(\mb{A}^*)}.$$
The proof completes.

\end{proof}

\section{Proof of Theorem \ref{thm:gen vnd}}
\subsection{Proof Sketch}
Part of the proof is tailored to the CVND regularizer. Extensions to CSFN and CLDD are given later. The proof is based on Rademacher complexity (RC)~\cite{bartlett2003rademacher}, which measures the complexity of a hypothesis class. In MDML, the Rademacher complexity $\mc{R}(\mc{M})$ of the function class $\mc{M}$ is defined as:

$$\mc{R}(\mc{M}) = \E_{\mc{S}, \mc{D}, \sigma}  \sup_{\mb{M} \in \mathcal{M}} \frac{1}{m} \sum_{i=1}^m \sigma_i (\mb{x}_i-\mb{y}_i)^\top \mb{M}(\mb{x}_i-\mb{y}_i) $$ 
where $m$ is the number of data pairs in the training data ($m=|\mc{S}|+|\mc{D}|$), $\sigma_i\in\{-1,1\}$ is the Rademacher variable and $\sigma=(\sigma_1, \sigma_2, \cdots \sigma_m)$.

We first establish a upper bound of the generalization error based on RC. Intuitively, a less-complicated hypothesis class generalizes better on unseen data. Then we upper bound the RC based on the CBMD regularizers. Combining the two steps together, we establish upper bounds of the generalization error based on CBMD regularizers. 

The following lemma presents the RC-based upper bound of the generalization error. Its proof is adapted from \cite{bartlett2003rademacher}.  

\begin{lemma}\label{lem:bound lip} 
With probability at least $1-\delta$, we have
\begin{equation}\label{eq:bound lip}
\begin{aligned}
&\sup_{\mb{M} \in \mc{M}} (L(\mb{M}) - \hat{L}(\mb{M})) \le 2\mc{R}(\mc{M}) +  \max(\tau,\sup_{\scriptsize\begin{array}{c}
(\mb{x}, \mb{y}) \in \mc{S}\\
\mb{M} \in \mc{M}
\end{array}}
 (\mb{x}-\mb{y})^\top \mb{M} (\mb{x}-\mb{y})) \sqrt{\frac{2\log(1/\delta)}{m}}.
\end{aligned}
\end{equation}
\end{lemma}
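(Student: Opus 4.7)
The plan is to follow the standard Rademacher-complexity argument, combining McDiarmid's bounded-differences inequality with symmetrization and a Lipschitz contraction step. Let $\Phi(\mc{S},\mc{D}) := \sup_{\mb{M}\in\mc{M}}(L(\mb{M})-\hat{L}(\mb{M}))$. The proof splits into two conceptually distinct parts: (i) concentrate $\Phi$ around its mean using McDiarmid, and (ii) bound $\E[\Phi]$ by $2\mc{R}(\mc{M})$ via symmetrization and contraction.

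\textbf{Step 1: Bounded differences and McDiarmid.} For any $\mb{M}\in\mc{M}$, each summand in $\hat{L}(\mb{M})$ coming from $\mc{S}$ is a nonnegative linear distance bounded above by $B_s := \sup_{\mb{M}\in\mc{M},\,(\mb{x},\mb{y})\in\mc{S}}(\mb{x}-\mb{y})^\top\mb{M}(\mb{x}-\mb{y})$, and each summand from $\mc{D}$ is a hinge term in $[0,\tau]$. Replacing a single pair in $\mc{S}$ therefore perturbs $\hat{L}(\mb{M})$ by at most $B_s/|\mc{S}|$, and replacing one in $\mc{D}$ by at most $\tau/|\mc{D}|$. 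Since the supremum is non-expansive, the same bounds govern $\Phi$. Writing $B_\tau := \max(\tau, B_s)$ for the common upper bound and summing the squared per-coordinate perturbations, McDiarmid's inequality yields, with probability at least $1-\delta$,
$$\Phi \;\le\; \E[\Phi] + B_\tau\sqrt{\frac{2\log(1/\delta)}{m}}.$$

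\textbf{Step 2: Symmetrization and contraction.} To bound $\E[\Phi]$, I introduce an independent ghost sample $(\mc{S}',\mc{D}')$ so that $L(\mb{M}) = \E[\hat{L}'(\mb{M})]$, and push the supremum inside the outer expectation via Jensen:
$$\E[\Phi] \;\le\; \E\sup_{\mb{M}\in\mc{M}}\bigl(\hat{L}'(\mb{M})-\hat{L}(\mb{M})\bigr).$$
A standard symmetrization step inserts Rademacher signs $\sigma_i\in\{-1,+1\}$, legal by the exchangeability of the original and ghost samples, producing
$$\E[\Phi] \;\le\; 2\,\E_{\mc{S},\mc{D},\sigma}\sup_{\mb{M}\in\mc{M}}\frac{1}{m}\sum_{i=1}^m \sigma_i\,\widetilde{\ell}_i(\mb{M}),$$
where $\widetilde{\ell}_i$ is the per-pair loss: the linear distance on similar pairs, and the hinge loss on dissimilar ones. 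Since $z\mapsto\max(0,\tau-z)$ is $1$-Lipschitz, the Ledoux-Talagrand contraction inequality lets me replace every $\widetilde{\ell}_i$ on the dissimilar pairs by the raw linear distance $(\mb{x}_i-\mb{y}_i)^\top\mb{M}(\mb{x}_i-\mb{y}_i)$; the additive constant $\tau$ drops out in expectation against $\sigma_i$. The resulting expression is exactly $\mc{R}(\mc{M})$ as defined in the paper, giving $\E[\Phi]\le 2\mc{R}(\mc{M})$.

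Combining the two steps delivers the claim. The main technical care is in reconciling the asymmetric normalizers $1/|\mc{S}|$ and $1/|\mc{D}|$ that appear in $\hat{L}$ with the uniform $1/m$ that appears in the definition of $\mc{R}(\mc{M})$ and in the clean $\sqrt{2\log(1/\delta)/m}$ tail term. This is handled by splitting the symmetrization across the two losses, applying contraction only to the hinge part, and then absorbing the mismatch into the slack constant $B_\tau$ and into the single-sample sensitivities in McDiarmid. Everything else is a direct adaptation of the Bartlett-Mendelson Rademacher-complexity framework.
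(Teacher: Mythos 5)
Your proposal is correct and follows essentially the same route as the paper: the paper offers no standalone proof of this lemma, stating only that it is ``adapted from \cite{bartlett2003rademacher},'' and your McDiarmid-plus-symmetrization-plus-contraction argument is precisely that standard adaptation (your handling of the hinge via Ledoux--Talagrand contraction, with the constant $\tau$ vanishing under $\E_\sigma$ because it is $\mb{M}$-independent, is the right way to fill in the details the paper omits). The one spot where you are knowingly as loose as the paper is the reconciliation of the $1/|\mc{S}|$ and $1/|\mc{D}|$ normalizers in $\hat{L}$ with the uniform $1/m$ in $\mc{R}(\mc{M})$ and in the tail: your McDiarmid step actually yields a deviation of order $B_\tau\sqrt{(1/|\mc{S}|+1/|\mc{D}|)\log(1/\delta)/2}$, which matches the stated $B_\tau\sqrt{2\log(1/\delta)/m}$ only when $|\mc{S}|=|\mc{D}|=m/2$ (and is larger otherwise, by AM--HM), but since the paper's own statement silently bakes in this same slack, naming the mismatch rather than resolving it is consistent with the source.
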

For the second term in the bound, it is easy to verify
\begin{equation}\label{eq:c}
\sup_{\scriptsize\begin{array}{c}
(\mb{x}, \mb{y}) \in \mc{S}\\
\mb{M} \in \mc{M}
\end{array}}
(x-y)^\top \mb{M} (\mb{x}-\mb{y}) \le \sup_{\mb{M} \in \mc{M}} \tr(\mb{M}) \sup_{(\mb{x}, \mb{y}) \in \mc{S}}\|\mb{x}-\mb{y}\|_2^2.\end{equation}
Now we focus on the first term. We denote $\mb{z} = \mb{x}-\mb{y}$, $\mb{z}_i = \mb{x}_i - \mb{y}_i$. 

\begin{lemma}\label{lem:rad trace}
Suppose $\sup_{\|\mb{v}\|_2\le 1, \mb{z}} |\mb{v}^\top \mb{z}| \le B$, then we have
\begin{equation}\label{eq:rad trace}
\mc{R}(\mc{M}) \le \frac{2B^2}{\sqrt{m}} \sup_{\mb{M} \in \mc{M}}\tr(\mb{M}).
\end{equation}
\end{lemma}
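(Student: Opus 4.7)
\textbf{Proof plan for Lemma \ref{lem:rad trace}.}

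The plan is to decouple the supremum over $\mb{M}$ from the Rademacher average by exploiting the PSD structure, and then reduce the remaining stochastic quantity to an operator-norm bound on a Rademacher sum of rank-one matrices. Writing $\mb{z}_i := \mb{x}_i - \mb{y}_i$, the first step is the identity $\mb{z}_i^\top \mb{M} \mb{z}_i = \tr(\mb{M}\mb{z}_i\mb{z}_i^\top)$, which lets me pull $\mb{M}$ out of the Rademacher sum:
\begin{equation*}
\frac{1}{m}\sum_{i=1}^m \sigma_i \mb{z}_i^\top \mb{M}\mb{z}_i = \frac{1}{m}\tr\bigl(\mb{M}\,\mb{Z}_\sigma\bigr), \qquad \mb{Z}_\sigma := \sum_{i=1}^m \sigma_i \mb{z}_i \mb{z}_i^\top .
\end{equation*}

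Next, I would invoke the matrix H\"older inequality $|\tr(\mb{M}\mb{Z}_\sigma)| \le \|\mb{M}\|_{\mathrm{nuc}}\,\|\mb{Z}_\sigma\|_{op}$. Because $\mb{M}\succeq 0$, the nuclear norm collapses to $\tr(\mb{M})$, giving
\begin{equation*}
\sup_{\mb{M}\in\mc{M}} \frac{1}{m}\tr(\mb{M}\,\mb{Z}_\sigma) \;\le\; \frac{1}{m}\Bigl(\sup_{\mb{M}\in\mc{M}}\tr(\mb{M})\Bigr)\,\|\mb{Z}_\sigma\|_{op}.
\end{equation*}
Taking expectations over $\mc{S},\mc{D},\sigma$ and pulling the deterministic trace factor out reduces the lemma to controlling $\E\|\mb{Z}_\sigma\|_{op}$.

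To bound this, I would use $\|\mb{Z}_\sigma\|_{op}\le \|\mb{Z}_\sigma\|_F$ and then Jensen to get $\E\|\mb{Z}_\sigma\|_{op} \le \sqrt{\E\|\mb{Z}_\sigma\|_F^2}$. Expanding the Frobenius norm and using $\E[\sigma_i\sigma_j]=\delta_{ij}$ yields
\begin{equation*}
\E\|\mb{Z}_\sigma\|_F^2 = \sum_{i,j} \E[\sigma_i\sigma_j]\tr(\mb{z}_i\mb{z}_i^\top \mb{z}_j\mb{z}_j^\top) = \sum_{i=1}^m \|\mb{z}_i\|_2^4.
\end{equation*}
The hypothesis $\sup_{\|\mb{v}\|_2\le 1}|\mb{v}^\top \mb{z}|\le B$ implies $\|\mb{z}_i\|_2\le B$ (take $\mb{v}=\mb{z}_i/\|\mb{z}_i\|_2$), so $\E\|\mb{Z}_\sigma\|_F^2 \le mB^4$ and hence $\E\|\mb{Z}_\sigma\|_{op}\le B^2\sqrt{m}$. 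Substituting back gives a bound of the form $\frac{B^2}{\sqrt{m}}\sup_{\mb{M}\in\mc{M}}\tr(\mb{M})$, and the factor of $2$ in the statement is absorbed as slack (or arises from a standard symmetrization/contraction step bracketing the supremum of $|\cdot|$ by the supremum of the signed quantity).

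\textbf{Main obstacle.} The delicate point is bounding $\E\|\mb{Z}_\sigma\|_{op}$ tightly enough to recover the $m^{-1/2}$ rate. A naive triangle-inequality bound gives $\sum_i \|\mb{z}_i\|_2^2 \le mB^2$, which is too weak by a factor of $\sqrt{m}$; the gain comes from the cancellation induced by the Rademacher signs, captured here via the Frobenius-then-Jensen route. A sharper constant could be obtained from non-commutative Khintchine or matrix Bernstein inequalities, but these are not needed for the stated bound.
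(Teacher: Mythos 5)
Your proposal is correct, and it reaches the stated bound by a genuinely different route from the paper. The paper's proof spectrally decomposes $\mb{M}=\mb{V}\bs\Pi\mb{V}^\top$, bounds each eigendirection's contribution by $\sup_{\|\mb{v}\|_2\le 1}\sum_i\sigma_i(\mb{v}^\top\mb{z}_i)^2$ so that $\sup_{\mb{M}\in\mc{M}}\tr(\mb{M})$ factors out, and then invokes the Lipschitz composition (contraction) property of Rademacher complexity --- using that $t\mapsto t^2$ is $2B$-Lipschitz on $[-B,B]$ --- to reduce to the linear class $\mb{v}^\top\mb{z}$, which is handled by Cauchy--Schwarz and Jensen; the product $2B\cdot B$ is exactly where the constant $2B^2$ comes from. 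You instead write the Rademacher sum as $\frac{1}{m}\tr(\mb{M}\mb{Z}_\sigma)$ with $\mb{Z}_\sigma=\sum_i\sigma_i\mb{z}_i\mb{z}_i^\top$, apply trace duality $|\tr(\mb{M}\mb{Z}_\sigma)|\le\tr(\mb{M})\,\|\mb{Z}_\sigma\|_{op}$ (valid since $\mb{M}\succeq 0$ makes the nuclear norm equal the trace), and control $\E\|\mb{Z}_\sigma\|_{op}\le\sqrt{\E\|\mb{Z}_\sigma\|_F^2}=\sqrt{\sum_i\|\mb{z}_i\|_2^4}\le B^2\sqrt{m}$. Every step checks out, and your route avoids the contraction lemma entirely while actually producing the sharper constant $B^2/\sqrt{m}$ in place of $2B^2/\sqrt{m}$, so the lemma follows with room to spare; your closing remark that the factor $2$ might ``arise from'' a symmetrization step is unnecessary --- it is pure slack here. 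What the paper's contraction-based argument buys in exchange is genericity: it would extend unchanged to other Lipschitz losses composed with the linear class, whereas your argument exploits the specific quadratic/trace structure of $\mb{z}^\top\mb{M}\mb{z}$. One small point of care: the definition of $\mc{R}(\mc{M})$ in the paper has no absolute value inside the supremum, so passing to $|\tr(\mb{M}\mb{Z}_\sigma)|$ as you do is an upper bound in the right direction and costs nothing.
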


We next show that $\tr(\mb{M})$ can be bounded by the CVND regularizer $\hat{\Omega}_{vnd}(\mb{M})$.

\begin{lemma}\label{lem:tr vn}
For the convex VND regularizer $\hat{\Omega}_{vnd}(\mb{M})$, for any positive semidefinite matrix $\mb{M}$, we have
$$\tr(\mb{M}) \le \hat{\Omega}_{vnd}(\mb{M}).$$
\end{lemma}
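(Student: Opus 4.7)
The plan is to exploit the additive structure of the convex VND regularizer together with the non-negativity of Bregman matrix divergences. From equation (\ref{eq:cvx_vnm}) in the paper, we have the exact identity
\begin{equation*}
\widehat{\Omega}_{vnd}(\mb{M}) \;=\; \Gamma_{vnd}(\mb{M}+\epsilon\mb{I}_D,\mb{I}_D) \;+\; \tr(\mb{M}),
\end{equation*}
so the claim $\tr(\mb{M}) \le \widehat{\Omega}_{vnd}(\mb{M})$ is equivalent to showing $\Gamma_{vnd}(\mb{M}+\epsilon\mb{I}_D,\mb{I}_D) \ge 0$. This reduction is the key observation; no eigenvalue bookkeeping is needed.

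Next, I would invoke the standard non-negativity property of Bregman divergences: for any convex differentiable generator $\phi$, the first-order convexity inequality gives $\phi(\mb{X}) \ge \phi(\mb{Y}) + \tr(\nabla\phi(\mb{Y})^\top(\mb{X}-\mb{Y}))$, which is precisely $\Gamma_\phi(\mb{X},\mb{Y}) \ge 0$. For the von Neumann divergence, the generator $\phi(\mb{X}) = \tr(\mb{X}\log\mb{X} - \mb{X})$ is strictly convex on the positive definite cone, so the inequality applies as long as both arguments lie in its domain.

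I would then verify the domain condition, which is the only place any care is needed: since $\mb{M} \succeq 0$ and $\epsilon > 0$, we have $\mb{M} + \epsilon\mb{I}_D \succ 0$, so the matrix logarithm $\log(\mb{M}+\epsilon\mb{I}_D)$ is well-defined, and $\mb{I}_D \succ 0$ trivially. Hence $\Gamma_{vnd}(\mb{M}+\epsilon\mb{I}_D,\mb{I}_D) \ge 0$, which yields $\widehat{\Omega}_{vnd}(\mb{M}) \ge \tr(\mb{M})$ immediately.

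There is essentially no obstacle here; the lemma follows at once from the explicit additive decomposition of $\widehat{\Omega}_{vnd}$ and the non-negativity of Bregman divergences. The mild subtlety is that if one instead tried to work with the proportional form $\tr((\mb{M}+\epsilon\mb{I}_D)\log(\mb{M}+\epsilon\mb{I}_D))$ stated after the ``$\propto$'' in equation (\ref{eq:cvx_vnm}), one would have to reintroduce the discarded constants $D(1-\epsilon)$ to obtain a clean bound; using the first (equality) form of $\widehat{\Omega}_{vnd}$ sidesteps this entirely.
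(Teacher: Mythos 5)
Your proof is correct, but it takes a genuinely different (and cleaner) route than the paper's. The paper works at the eigenvalue level: it diagonalizes $\mb{M}$, writes $\hat{\Omega}_{vnd}(\mb{M})=\sum_{j=1}^D\bigl[(\pi_j+\epsilon)\log(\pi_j+\epsilon)-\epsilon+1\bigr]$, applies Jensen's inequality to pass to the average eigenvalue $\bar{\pi}=\tr(\mb{M})/D$, and then invokes the scalar inequality $x-1\le x\log x$ to conclude $\bar{\pi}\le\hat{\Omega}_{vnd}(\mb{M})/D$. You instead stay at the matrix level and observe that the claim is exactly the non-negativity of the Bregman divergence $\Gamma_{vnd}(\mb{M}+\epsilon\mb{I}_D,\mb{I}_D)\ge 0$, which follows from the first-order convexity inequality for the generator $\phi(\mb{X})=\tr(\mb{X}\log\mb{X}-\mb{X})$; your domain check ($\mb{M}+\epsilon\mb{I}_D\succ 0$) is the only point of care and you handle it. The two arguments rest on the same underlying fact --- the paper's scalar step $x\log x - x + 1\ge 0$ is precisely the scalar Bregman non-negativity, and its Jensen step is in fact superfluous since one can apply that scalar inequality to each eigenvalue directly --- but your version is shorter, avoids the eigenvalue bookkeeping entirely, and makes the structural reason for the bound transparent; the paper's computation has the minor advantage of being self-contained and of exhibiting the explicit eigenvalue form of $\hat{\Omega}_{vnd}$, which it reuses elsewhere. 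Your closing remark about the ``$\propto$'' form is also on point: the proportional expression $\tr((\mb{M}+\epsilon\mb{I}_D)\log(\mb{M}+\epsilon\mb{I}_D))$ differs from the equality form by the additive constant $D(1-\epsilon)$, and the lemma as stated is about the equality form, which is the one the paper's own computation uses.
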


Combining Lemma \ref{lem:bound lip}, \ref{lem:rad trace}, \ref{lem:tr vn} and Eq.(\ref{eq:c}) and noting that $\mc{E}=L(\hat{\mb{M}}^*)-\hat{L}(\hat{\mb{M}}^*)\leq \sup_{\mb{M} \in \mc{M}} (L(\mb{M}) - \hat{L}(\mb{M}))$ and $\hat{\Omega}_{vnd}(\mb{M})\leq C$ ($C$ is the upper bound in the hypothesis class $\mc{M}$), we complete the proof of the first bound in Theorem \ref{thm:inb}. 

In the sequel, we present detailed proofs of these lemmas and the extension to CSNF and CLDD.

\subsection{Proof of Lemma \ref{lem:rad trace}}\label{app:proof gen}
\begin{proof}
For any $\mb{M} \in \mc{M}$, denote its spectral decomposition as $\mb{M} = \mb{V}\mb{\Pi} \mb{V}^\top$, where $\mb{V}$ is standard orthogonal matrix and $\mb{\Pi}$ is diagonal matrix. Denote $\mb{V} = (\mb{v}_1, \mb{v}_2, \cdots \mb{v}_D)$, $\mb{\Pi} = \mathrm{diag}(\pi_1, \pi_2, \cdots \pi_D)$, then we have
\begin{align*}
\mc{R}(\mc{M})& = \E_{\mc{S},\mc{D}, \sigma} \sup_{\mb{M}\in \mathcal{M}} \left[ \frac{1}{m} \sum_{i=1}^m \sigma_i \mb{z}_i^T \mb{M} \mb{z}_i \right] \\
	& = \frac{1}{m} \E_{\mc{S},\mc{D}, \sigma} \sup_{\mb{M}\in \mathcal{M}}\left[  \sum_{i=1}^m \sigma_i \sum_{j=1}^D \pi_j (\mb{v}_j^\top \mb{z}_i)^2 \right] \\
	& = \frac{1}{m} \E_{\mc{S},\mc{D}, \sigma}  \sup_{\mb{M}\in \mathcal{M}} \left[ \sum_{j=1}^D \pi_j \sum_{i=1}^m \sigma_i (\mb{v}_j^\top \mb{z}_i)^2 \right] \\
	& = \frac{1}{m} \E_{\mc{S},\mc{D}, \sigma}  \sup_{\mb{M}\in \mathcal{M}} \left[ \sum_{j=1}^D \pi_j \sup_{\|\mb{v}\|_2 \le 1} \sum_{i=1}^m \sigma_i (\mb{v}^\top \mb{z}_i)^2 \right] \\
	& = \frac{1}{m} \E_{\mc{S},\mc{D}, \sigma}  \sup_{\mb{\Pi}}  \sum_{j=1}^D \pi_j  \sup_{\|\mb{v}\|_2 \le 1} \sum_{i=1}^m \sigma_i (\mb{v}^\top \mb{z}_i)^2  \\
	& \le \frac{1}{m} \sup_{\mb{M} \in \mc{M}}\tr(\mb{M}) \E_{\mc{S},\mc{D}, \sigma}  \sup_{\|\mb{v}\|_2 \le 1} \sum_{i=1}^m \sigma_i (\mb{v}^\top \mb{z}_i)^2.
\end{align*}
Since $(\mb{v}^\top \mb{z})^2$ is Lipschitz continuous w.r.t $\mb{v}^\top \mb{z}$ with constant $2\sup_{\|\mb{v}\|_2\le 1, \mb{z}}\mb{v}^\top \mb{z}$, according to the composition property~\cite{bartlett2003rademacher} of Rademacher complexity on Lipschitz continuous functions, we have
\begin{align*}
\mc{R}(\mc{M}) & \le \frac{1}{m} 2\sup_{\|\mb{v}\|_2\le 1, \mb{z}}(\mb{v}^\top \mb{z}) \ \sup_{\mb{M} \in \mc{M}}\tr(\mb{M})  \E_{\mc{S},\mc{D}, \sigma}  \sup_{\|\mb{v}\|_2 \le 1} \sum_{i=1}^m \sigma_i \mb{v}^\top \mb{z}_i \\
	& = 2\frac{B}{m}\sup_{\mb{M} \in \mc{M}}\tr(\mb{M}) \E_{\mc{S},\mc{D}, \sigma}  \sup_{\|\mb{v}\|_2 \le 1} \sum_{i=1}^m \sigma_i \mb{v}^\top \mb{z}_i \\
	& \le 2\frac{B}{m}\sup_{\mb{M} \in \mc{M}}\tr(\mb{M}) \E_{\mc{S},\mc{D}, \sigma} \sup_{\|\mb{v}\|_2 \le 1} \|\mb{v}\|_2 \|\sum_{i=1}^m \sigma_i \mb{z}_i\|_2 \\
	& = 2\frac{B}{m}\sup_{\mb{M} \in \mc{M}}\tr(\mb{M}) \E_{\mc{S},\mc{D}, \sigma} \sqrt{(\sum_{i=1}^m \sigma_i \mb{z}_i)^2}.
\end{align*}
By Jensen's inequality, we have
\begin{align*}
\mc{R}(\mc{M}) & \le 2\frac{B}{m}\sup_{\mb{M} \in \mc{M}}\tr(\mb{M}) \E_{\mc{S},\mc{D}} \sqrt{\E_{\sigma}(\sum_{i=1}^m \sigma_i \mb{z}_i)^2} \\
	& = \le 2\frac{B}{m}\sup_{\mb{M} \in \mc{M}}\tr(\mb{M}) \E_{\mc{S},\mc{D}} \sqrt{\sum_{i=1}^m \mb{z}_i^2}\\
	& \le \frac{2B^2}{\sqrt{m}} \sup_{\mb{M} \in \mc{M}}\tr(\mb{M}).
\end{align*}
\end{proof}

\subsection{Proof of lemma \ref{lem:tr vn}}
\begin{proof}
For any positive semidefinite matrix $\mb{M}$, we use notations $\mb{V}, \mb{\Pi}, \pi_j, 1\le j \le D$ as they are defined in Section~\ref{app:proof gen}. By the definition of the convex VND regularizer, we have
\begin{align*}
\hat{\Omega}_{vnd}(\mb{M}) = & \Gamma_{vnd}(\mb{M}+\epsilon \mb{I}_D, \mb{I}_D) + \tr(\mb{M}) \\
= & \tr[(\mb{M}+\epsilon \mb{I}_D)\log(\mb{M}+\epsilon \mb{I}_D) - (\mb{M} + \epsilon \mb{I}_D)\log \mb{I}_D - (\mb{M}+\epsilon) +\mb{I}_D]+\tr(\mb{M}) \\
= & \sum_{j=1}^D [(\pi_j + \epsilon) \log (\pi_j + \epsilon) - (\pi_j + \epsilon) + 1] + \sum_{j=1}^D \pi_j \\
= & \sum_{j=1}^D [(\lambda_j + \epsilon) \log (\lambda_j + \epsilon) - \epsilon + 1]
\end{align*}
Denote $\bar{\pi} = (\sum_{j=1}^D \pi_j) / D = \tr(\mb{M}) / D$, then by Jensen's inequality, we have
$$\sum_{j=1}^D (\lambda_j + \epsilon) \log (\lambda_j + \epsilon) \ge D (\bar{\pi} + \epsilon) \log (\bar{\pi} + \epsilon).$$
Since $\forall x \in \mathbb{R}_+, x - 1 \le x \log x$, so we have
\begin{align*}
\bar{\pi} + \epsilon - 1 & \le (\bar{\pi} + \epsilon) \log (\bar{\pi} + \epsilon)\\
	& \le \frac{1}{D} \sum_{j=1}^D (\lambda_j + \epsilon) \log (\lambda_j + \epsilon)\\
	& \le \frac{1}{D} \hat{\Omega}_{vnd}(\mb{M}) + \epsilon - 1.
\end{align*}
Therefore we have
$$\tr(\mb{M}) \le \hat{\Omega}_{vnd}(\mb{M}).$$
\end{proof}

\subsection{Generalization error bound for the convex SFN regularizer}
In this section we prove generalization error bounds for the convex SFN regularizer. The CSFN is composed of two parts. One is the squared Frobenius norm of $\mb{M} - \mb{I}_D$ and the other is the trace of $\mb{M}$. We have already established a relationship between $\tr(\mb{M})$ and $\mc{R}(\mc{M})$. Now we analyze the relationship between $\|\mb{M}-\mb{I}_D\|_F$ and $\mc{R}(\mc{M})$, which is given in the following lemma.
\begin{lemma}\label{lem:rad fro}
Suppose $\sup_{\|\mb{v}\|_2\le 1, \mb{z}} |\mb{v}^\top \mb{z}| \le B$, then we have
\begin{equation}\label{eq:rad fro}
\mc{R}(\mc{M}) \le \frac{B^2}{\sqrt{m}} \sup_{\mb{M} \in \mc{M}} \|\mb{M}-\mb{I}_D\|_F
\end{equation}
\end{lemma}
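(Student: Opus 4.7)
The plan is to isolate the $\mb{M}$-dependence from a purely data-dependent remainder via a centering trick. First, I would write $\mb{M} = (\mb{M}-\mb{I}_D) + \mb{I}_D$, so that $\sum_i \sigma_i \mb{z}_i^\top \mb{M}\mb{z}_i = \sum_i \sigma_i \mb{z}_i^\top (\mb{M}-\mb{I}_D)\mb{z}_i + \sum_i \sigma_i \|\mb{z}_i\|_2^2$. The second sum does not depend on $\mb{M}$, so it pulls out of $\sup_{\mb{M}}$, and since the Rademacher variables $\sigma_i$ are independent of $(\mc{S},\mc{D})$ with $\E\sigma_i = 0$, this residual contributes zero to $\mc{R}(\mc{M})$. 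This centering is the key structural step that causes $\|\mb{M}-\mb{I}_D\|_F$ to appear naturally rather than $\|\mb{M}\|_F$.

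Next, I would rewrite the quadratic form as a Frobenius inner product, $\mb{z}_i^\top(\mb{M}-\mb{I}_D)\mb{z}_i = \langle \mb{M}-\mb{I}_D,\, \mb{z}_i\mb{z}_i^\top\rangle_F$. Swapping the $\sigma_i$-sum into the inner product and applying Cauchy--Schwarz in the Frobenius norm then yields $\sup_{\mb{M}}\langle \mb{M}-\mb{I}_D,\, \tfrac{1}{m}\sum_i \sigma_i \mb{z}_i\mb{z}_i^\top\rangle_F \le \sup_{\mb{M}}\|\mb{M}-\mb{I}_D\|_F \cdot \|\tfrac{1}{m}\sum_i \sigma_i \mb{z}_i\mb{z}_i^\top\|_F$, cleanly separating the hypothesis-dependent factor from the data-dependent one.

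Finally, I would bound the expectation of the residual Frobenius norm. Jensen's inequality gives $\E\|\tfrac{1}{m}\sum_i \sigma_i \mb{z}_i\mb{z}_i^\top\|_F \le \sqrt{\E\|\tfrac{1}{m}\sum_i \sigma_i \mb{z}_i\mb{z}_i^\top\|_F^2}$. Expanding the squared norm into a double sum and using $\E_\sigma[\sigma_i\sigma_j]=\mathbf{1}[i=j]$ cancels all off-diagonal terms, leaving $\tfrac{1}{m^2}\sum_i \|\mb{z}_i\|_2^4$. The assumption $\sup_{\|\mb{v}\|_2\le 1}|\mb{v}^\top\mb{z}|\le B$ forces $\|\mb{z}_i\|_2\le B$, so this sum is at most $B^4/m$, and the square root produces the $B^2/\sqrt{m}$ factor in the stated bound.

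I do not expect a genuine obstacle here: after the initial centering that brings $\mb{I}_D$ into play, the argument reduces to standard tools (Cauchy--Schwarz, Jensen, and the independent-sign variance computation) analogous to those used in Lemma~\ref{lem:rad trace}. The one place to be careful is verifying that the data-only remainder $\tfrac{1}{m}\sum_i \sigma_i\|\mb{z}_i\|_2^2$ indeed vanishes in the full expectation $\E_{\mc{S},\mc{D},\sigma}$, which follows directly from the independence of $\sigma$ from $(\mc{S},\mc{D})$ and the zero-mean property of the Rademacher variables.
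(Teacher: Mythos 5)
Your proposal is correct and follows essentially the same route as the paper's proof: decompose $\mb{M}=(\mb{M}-\mb{I}_D)+\mb{I}_D$, discard the $\mb{M}$-independent residual (which vanishes in expectation since $\E\sigma_i=0$), apply Cauchy--Schwarz in the Frobenius norm, and finish with Jensen's inequality and $\E_\sigma[\sigma_i\sigma_j]=\mathbf{1}[i=j]$ together with $\|\mb{z}_i\|_2\le B$. The only difference is cosmetic (you work with Frobenius inner products where the paper writes everything in coordinates), and you are in fact slightly more explicit than the paper about why the centering residual contributes nothing.
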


\begin{proof}
Denote $M(j, k) = a_{jk}$, and $\delta_{jk} = \mathrm{I}_{\{j=k\}}$, $\mb{z}_i = (z_{i1}, z_{i2}, \cdots z_{id})$, then we have
\begin{align*}
\mc{R}(\mc{M}) = & \frac{1}{m} \E_{\mc{S}, \mc{D}, \sigma} \sup_{\mb{M} \in \mathcal{M}}\left[ \sum_{j, k} a_{jk} \sum_{i=1}^m \sigma_i z_{ij}z_{ik} \right] \\
	= & \frac{1}{m}  \E_{\mc{S}, \mc{D}, \sigma} \sup_{\mb{M} \in \mathcal{M}}\Bigg[ \sum_{j, k} (a_{jk}-\delta_{jk}) \sum_{i=1}^m \sigma_i z_{ij}z_{ik} + \sum_{j, k}\delta_{jk} \sum_{i=1}^m \sigma_i z_{ij}z_{ik} \Bigg] \\
	\le & \frac{1}{m}  \E_{\mc{S}, \mc{D}, \sigma} \sup_{\mb{M} \in \mathcal{M}}\left[ \|\mb{M}-\mb{I}_D\|_F \sqrt{\sum_{j, k} (\sum_{i=1}^m \sigma_iz_{ij}z_{ik})^2} \right]  
\end{align*}
Here the inequality is attained by Cauchy's inequality. Applying Jensen's inequality, we have
\begin{align*}
\mc{R}(\mc{M}) & \le \frac{1}{m} \sup_{\mb{M} \in \mathcal{M}} \|\mb{M}-\mb{I}_D\|_F \ \E_{\mc{S}, \mc{D}} \left[ \sqrt{\E_{\sigma} \sum_{j, k} (\sum_{i=1}^m \sigma_i z_{ij}z_{ik})^2} \right] \\
	& = \frac{1}{\sqrt{m}} \sup_{\mb{M} \in \mathcal{M}} \|\mb{M}-\mb{I}_D\|_F \ \E_{\mc{S}, \mc{D}} \left[ \sqrt{ \sum_{j, k} z_{ij}^2 z_{ik}^2} \right]
\end{align*}
Recalling the definition of $B$, we have
$$\mc{R}(\mc{M})  \le \frac{B^2}{\sqrt{m}} \sup_{\mb{M} \in \mathcal{M}} \|\mb{M}-\mb{I}_D\|_F.$$
\end{proof}

We now bound the generalization error with the convex SFN regularizer, which is given in the following lemma.
\begin{lemma}
\label{thm:gen sfn}
Suppose $\sup_{\|\mb{v}\|_2\le 1, \mb{z}} |\mb{v}^\top \mb{z}| \le B$, then with probability at least $1-\delta$, we have
\begin{align*}
  \sup_{\mb{M} \in \mc{M}} (L(\mb{M}) - \hat{L}(\mb{M})) \leq  \frac{2B^2}{\sqrt{m}}\min(2\hat{\Omega}_{sfn}(\mb{M}), \sqrt{\hat{\Omega}_{sfn}(\mb{M})})+ \max(\tau, \hat{\Omega}_{sfn}(\mb{M})) \sqrt{\frac{2\log(1/\delta)}{m}}.
\end{align*}
\end{lemma}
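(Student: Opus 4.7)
The plan is to use a standard Rademacher complexity argument, tailored to the matrix hypothesis class $\mathcal{M}=\{\mathbf{M}\succeq 0:\Omega_{\phi}(\mathbf{M})\le C\}$. First, by McDiarmid's / bounded differences inequality applied to the loss functional $\widehat{L}(\mathbf{M})$ (the hinge loss on $\mathcal{D}$ is $1$-Lipschitz and the squared-distance loss on $\mathcal{S}$ is bounded), I would show that with probability at least $1-\delta$,
\[
\sup_{\mathbf{M}\in\mathcal{M}}\big(L(\mathbf{M})-\widehat{L}(\mathbf{M})\big)\le 2\mathcal{R}(\mathcal{M})\;+\;\max\!\big(\tau,\sup_{(\mathbf{x},\mathbf{y})\in\mathcal{S},\mathbf{M}\in\mathcal{M}}(\mathbf{x}-\mathbf{y})^{\top}\mathbf{M}(\mathbf{x}-\mathbf{y})\big)\sqrt{\tfrac{2\log(1/\delta)}{m}} .
\]
Since $\mathcal{E}=L(\widehat{\mathbf{M}}^*)-\widehat{L}(\widehat{\mathbf{M}}^*)$ is upper bounded by the supremum on the left, it remains to control $\mathcal{R}(\mathcal{M})$ and the supremum in the concentration term. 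For the latter, I would use the elementary inequality $(\mathbf{x}-\mathbf{y})^{\top}\mathbf{M}(\mathbf{x}-\mathbf{y})\le \operatorname{tr}(\mathbf{M})\|\mathbf{x}-\mathbf{y}\|_{2}^{2}$ to reduce it to a bound on $\operatorname{tr}(\mathbf{M})$ over $\mathcal{M}$.

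For the Rademacher complexity, the key trick is the spectral decomposition $\mathbf{M}=\mathbf{V}\boldsymbol{\Pi}\mathbf{V}^{\top}$, which lets me write $\mathbf{z}^{\top}\mathbf{M}\mathbf{z}=\sum_{j}\pi_{j}(\mathbf{v}_{j}^{\top}\mathbf{z})^{2}$. Pulling the diagonal weights out and then replacing the unit eigenvectors with a sup over all unit vectors yields
\[
\mathcal{R}(\mathcal{M})\le \tfrac{1}{m}\sup_{\mathbf{M}\in\mathcal{M}}\operatorname{tr}(\mathbf{M})\cdot \mathbb{E}_{\sigma}\sup_{\|\mathbf{v}\|_{2}\le 1}\sum_{i=1}^{m}\sigma_{i}(\mathbf{v}^{\top}\mathbf{z}_{i})^{2}.
\]
Now $(\mathbf{v}^{\top}\mathbf{z})^{2}$ is Lipschitz in $\mathbf{v}^{\top}\mathbf{z}$ with constant $2B$ under the boundedness assumption, so by the contraction principle the inner expectation reduces to the Rademacher complexity of a linear class, which Jensen and Cauchy--Schwarz bound by $B\sqrt{m}$. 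This gives $\mathcal{R}(\mathcal{M})\le \frac{2B^{2}}{\sqrt{m}}\sup_{\mathbf{M}\in\mathcal{M}}\operatorname{tr}(\mathbf{M})$. For the CSFN case specifically, I would also derive a parallel bound in terms of $\|\mathbf{M}-\mathbf{I}_{D}\|_{F}$ by writing $\mathbf{z}^{\top}\mathbf{M}\mathbf{z}=\sum_{j,k}(a_{jk}-\delta_{jk})z_{ij}z_{ik}+\|\mathbf{z}\|_{2}^{2}$ and applying Cauchy--Schwarz in matrix form, yielding $\mathcal{R}(\mathcal{M})\le \frac{B^{2}}{\sqrt{m}}\sup\|\mathbf{M}-\mathbf{I}_{D}\|_{F}$.

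The last step is to translate the regularizer constraint $\Omega_{\phi}(\mathbf{M})\le C$ into an upper bound on $\operatorname{tr}(\mathbf{M})$ (and on $\|\mathbf{M}-\mathbf{I}_{D}\|_{F}$ in the CSFN case). Writing each regularizer as a sum over eigenvalues $\pi_{j}$ of $\mathbf{M}$, this becomes a scalar inequality. For CVND, $\widehat{\Omega}_{vnd}(\mathbf{M})=\sum_{j}[(\pi_{j}+\epsilon)\log(\pi_{j}+\epsilon)-\epsilon+1]$; applying $x\log x\ge x-1$ pointwise (or Jensen to the average of $\pi_{j}$) gives $\operatorname{tr}(\mathbf{M})\le \widehat{\Omega}_{vnd}(\mathbf{M})\le C$, which yields the first bound. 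For CLDD, $\widehat{\Omega}_{ldd}(\mathbf{M})=\sum_{j}[-\log(\pi_{j}+\epsilon)+\pi_{j}\log(1/\epsilon)]$, and since $-\log(\pi_{j}+\epsilon)\ge -(\pi_{j}+\epsilon)/\epsilon\cdot\epsilon\ge-\pi_{j}-\epsilon$ (more cleanly: $-\log(\pi_{j}+\epsilon)\ge \log(1/\epsilon)-\pi_{j}/\epsilon\cdot 0$ up to lower order, or simply $-\log(\pi_{j}+\epsilon)\ge -\pi_{j}$), one obtains $\operatorname{tr}(\mathbf{M})(\log(1/\epsilon)-1)\le C-D\epsilon$, giving the CLDD bound after substitution. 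For CSFN, the constraint $\|\mathbf{M}-\mathbf{I}_{D}\|_{F}^{2}+\operatorname{tr}(\mathbf{M})\le C$ immediately gives both $\operatorname{tr}(\mathbf{M})\le C$ and $\|\mathbf{M}-\mathbf{I}_{D}\|_{F}\le\sqrt{C}$, and taking the minimum of the two Rademacher bounds produces the $\min(2C,\sqrt{C})$ factor.

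The main obstacle I expect is the Rademacher bound in terms of $\operatorname{tr}(\mathbf{M})$: the straightforward bound $\|\mathbf{M}\|_{2}\le\operatorname{tr}(\mathbf{M})$ is loose, so one must carefully use the contraction property on the quadratic form $(\mathbf{v}^{\top}\mathbf{z})^{2}$ after isolating the trace factor via the spectral decomposition, rather than trying to bound $\mathbf{z}^{\top}\mathbf{M}\mathbf{z}$ directly. A secondary subtlety is ensuring that the scalar inequalities relating $\pi_{j}$ to the regularizers are tight enough to recover the correct constants (especially the $\log(1/\epsilon)-1$ denominator in the CLDD case), which forces the use of the pointwise inequality rather than Jensen applied to the average.
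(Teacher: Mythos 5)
Your proposal is correct and follows essentially the same route as the paper: the Rademacher-complexity generalization bound (the paper's Lemma~\ref{lem:bound lip}), the trace-based bound $\mc{R}(\mc{M})\le \frac{2B^2}{\sqrt{m}}\sup\tr(\mb{M})$ via spectral decomposition and contraction (Lemma~\ref{lem:rad trace}), the parallel Frobenius-norm bound $\mc{R}(\mc{M})\le \frac{B^2}{\sqrt{m}}\sup\|\mb{M}-\mb{I}_D\|_F$ via Cauchy--Schwarz (Lemma~\ref{lem:rad fro}), and finally the observations $\tr(\mb{M})\le \widehat{\Omega}_{sfn}(\mb{M})$ and $\|\mb{M}-\mb{I}_D\|_F\le\sqrt{\widehat{\Omega}_{sfn}(\mb{M})}$ combined by taking the minimum. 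The extra material on CVND and CLDD is not needed for this particular lemma, but the CSFN argument itself matches the paper's proof step for step.
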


\begin{proof}
For the convex SFN regularizer $\hat{\Omega}_{sfn}(\mb{M})$, we have $\tr(\mb{M}) \le \hat{\Omega}_{sfn}(\mb{M})$ and $\|\mb{M}-\mb{I}_D\| \le \hat{\Omega}_{sfn}(\mb{M})$. By Eq.(\ref{eq:c}), we have
\begin{equation}\label{eq:sfn 2}
\sup_{\scriptsize\begin{array}{c}
(\mb{x}, \mb{y}) \in \mc{S}\\
\mb{M} \in \mc{M}
\end{array}}
(x-y)^\top \mb{M} (\mb{x}-\mb{y}) \le \sup_{\mb{M} \in \mc{M}}\hat{\Omega}_{sfn}(\mb{M}) B^2.
\end{equation}
By Lemma \ref{lem:rad trace} and \ref{lem:rad fro}, we have
\begin{equation}\label{eq:sfn 1}
\mc{R}(\mc{M}) \le \frac{B^2}{\sqrt{m}} \min(2\hat{\Omega}_{sfn}(\mb{M}), \sqrt{\hat{\Omega}_{sfn}(\mb{M})}).
\end{equation}
Substituting Eq.(\ref{eq:sfn 1}) and Eq.(\ref{eq:sfn 2}) into Lemma \ref{lem:bound lip}, we have
\begin{align*}
  \sup_{\mb{M} \in \mc{M}} (L(\mb{M}) - \hat{L}(\mb{M})) \leq  \frac{2B^2}{\sqrt{m}}\min(2\hat{\Omega}_{sfn}(\mb{M}), \sqrt{\hat{\Omega}_{sfn}(\mb{M})})+ \max(\tau, \hat{\Omega}_{sfn}(\mb{M})) \sqrt{\frac{2\log(1/\delta)}{m}}.
\end{align*}
\end{proof}

Noting that $\mc{E}=L(\hat{\mb{M}}^*)-\hat{L}(\hat{\mb{M}}^*)\leq \sup_{\mb{M} \in \mc{M}} (L(\mb{M}) - \hat{L}(\mb{M}))$ and $\hat{\Omega}_{sfn}(\mb{M})\leq C$, we conclude 
$ \mc{E}\leq  \frac{2B^2}{\sqrt{m}}\min(2C, \sqrt{C})+ \max(\tau, C) \sqrt{\frac{2\log(1/\delta)}{m}}$.

\subsection{Generalization error bound for the convex LDD regularizer}
Starting from Lemma \ref{lem:bound lip}, we bound $\mc{R}(\mc{M})$ and $\sup_{\mb{M} \in \mc{M}} \tr(\mb{M})$ which are given in the following two lemmas.

\begin{lemma}\label{lem:ldd rad}
Suppose $\sup_{\|\mb{v}\|_2\le 1, \mb{z}} |\mb{v}^\top \mb{z}| \le B$, then we have
$$\mc{R}(\mc{M}) \le \frac{B}{\sqrt{m}} \frac{\hat{\Omega}_{ldd}(\mb{M})}{\log(1/\epsilon)-1}.$$
\end{lemma}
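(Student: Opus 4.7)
The plan is to piggyback on Lemma \ref{lem:rad trace}, which already reduces the Rademacher complexity to a supremum of $\tr(\mb{M})$ over the hypothesis class, and then bound $\tr(\mb{M})$ by $\hat{\Omega}_{ldd}(\mb{M})/(\log(1/\epsilon)-1)$. Concretely, Lemma \ref{lem:rad trace} yields $\mc{R}(\mc{M}) \le \frac{2B^2}{\sqrt{m}}\sup_{\mb{M}\in\mc{M}}\tr(\mb{M})$, so it suffices to establish, for every PSD matrix $\mb{M}$ and for $\epsilon$ small enough that $\log(1/\epsilon)>1$, the scalar inequality $\tr(\mb{M}) \le \hat{\Omega}_{ldd}(\mb{M})/(\log(1/\epsilon)-1)$.

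To prove that inequality, I would diagonalize $\mb{M}$ with eigenvalues $\pi_1,\ldots,\pi_D\ge 0$ and rewrite $\hat{\Omega}_{ldd}(\mb{M}) = \sum_{j=1}^D [-\log(\pi_j+\epsilon) + \log(1/\epsilon)\,\pi_j]$, which follows from the definition of the convex LDD regularizer in Eq.~(\ref{eq:cvx_logdet}) after dropping the additive constants that do not appear in the regularizer used for optimization. The problem then decouples across eigenvalues, and the crux reduces to the one-dimensional inequality
\begin{equation*}
\log(\pi+\epsilon) \le \pi \qquad (\pi \ge 0,\; 0<\epsilon < 1).
\end{equation*}
I would verify this by considering $h(\pi)=\log(\pi+\epsilon)-\pi$: its derivative $1/(\pi+\epsilon)-1$ vanishes at $\pi = 1-\epsilon$, so its maximum on $[0,\infty)$ is $h(1-\epsilon)= -(1-\epsilon)<0$. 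Hence $-\log(\pi+\epsilon)\ge -\pi$ for each eigenvalue, and summing gives $-\sum_j \log(\pi_j+\epsilon)\ge -\tr(\mb{M})$, whence
\begin{equation*}
\hat{\Omega}_{ldd}(\mb{M}) \ge -\tr(\mb{M}) + \log(1/\epsilon)\tr(\mb{M}) = (\log(1/\epsilon)-1)\,\tr(\mb{M}),
\end{equation*}
which rearranges to the desired bound on $\tr(\mb{M})$.

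Chaining this with Lemma \ref{lem:rad trace} immediately gives $\mc{R}(\mc{M}) \le \frac{2B^2}{\sqrt{m}}\cdot\frac{\hat{\Omega}_{ldd}(\mb{M})}{\log(1/\epsilon)-1}$, matching (up to the constant that the paper later absorbs into the $4B^2C$ factor of Theorem \ref{thm:gen vnd}) the statement of Lemma \ref{lem:ldd rad}. There is no real obstacle beyond being careful about constants: the main subtlety is noting that the bound requires $\log(1/\epsilon)>1$ (i.e.\ $\epsilon<1/e$), which is consistent with the $\epsilon = 10^{-5}$ choice used in the experiments, and making sure the additive constants that were dropped in moving from $\Gamma_{ldd}(\mb{M}+\epsilon\mb{I}_D,\mb{I}_D)-(1+\log\epsilon)\tr(\mb{M})$ to $-\log\det(\mb{M}+\epsilon\mb{I}_D)+\log(1/\epsilon)\tr(\mb{M})$ do not affect the monotone bound on $\tr(\mb{M})$.
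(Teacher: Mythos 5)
Your proof is correct, and it is organized differently from the paper's. The paper's own proof of Lemma~\ref{lem:ldd rad} does not pass through $\tr(\mb{M})$: it multiplies $\mc{R}(\mc{M})$ by $\log(1/\epsilon)$, decomposes the eigenvalue sum as $\sum_j[(\log(1/\epsilon)\pi_j-\log(\pi_j+\epsilon))+\log(\pi_j+\epsilon)]$, bounds the auxiliary quantity $A=\sum_j\log(\pi_j+\epsilon)$ by $\hat{\Omega}_{ldd}(\mb{M})/(\log(1/\epsilon)-1)-D(1-\epsilon)$ via Jensen's inequality, and then divides back by $\log(1/\epsilon)$. You instead compose the generic trace-based Rademacher bound of Lemma~\ref{lem:rad trace} with the per-eigenvalue inequality $\log(\pi+\epsilon)\le\pi$, obtaining $\tr(\mb{M})\le\hat{\Omega}_{ldd}(\mb{M})/(\log(1/\epsilon)-1)$ directly; this trace bound is essentially the paper's own Lemma~\ref{lem:ldd tr} (which the paper proves separately anyway, via Jensen plus $-\log y\ge 1-y$, for the second term of the generalization bound), so your route reuses one lemma twice instead of running a bespoke LDD-specific Rademacher computation, and is the cleaner of the two. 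Both routes land on the same final bound $\frac{2B^2}{\sqrt{m}}\cdot\frac{\sup_{\mb{M}\in\mc{M}}\hat{\Omega}_{ldd}(\mb{M})}{\log(1/\epsilon)-1}$; the constant $B/\sqrt{m}$ in the lemma statement is evidently a typo, since the paper's own proof also ends with $2B^2/\sqrt{m}$ and Theorem~\ref{thm:gen vnd} consumes exactly that constant in its $4B^2C$ term, so your constant is the one actually needed. Two bookkeeping remarks: (i) your condition $\epsilon<1/e$ is genuinely required for the denominator to be positive and is implicit throughout the paper (it holds for the experimental choice $\epsilon=10^{-5}$); (ii) you work with the second ($\propto$) form of Eq.~(\ref{eq:cvx_logdet}) whereas the paper's appendix works with the first form, and the two differ by the additive constant $D(1-\epsilon)$ --- this only shifts the bound by a harmless constant (it is the origin of the $-D\epsilon$ in Lemma~\ref{lem:ldd tr} and of the $C-D\epsilon$ in Theorem~\ref{thm:gen vnd}) but you should state explicitly which normalization the constraint $\hat{\Omega}_{ldd}(\mb{M})\le C$ refers to.
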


\begin{proof}
We first perform some calculation on the convex LDD regularizer. 
\begin{equation}
\label{eq:cldd_reformu}
\begin{array}{lll}
\hat{\Omega}_{ldd}(\mb{M})& = & \Gamma_{ldd}(\mb{M} + \epsilon \mb{I}_D, \mb{I}_D) - (1 + \log \epsilon) \tr(\mb{M}) \\
&	= & \tr( (\mb{M} + \epsilon \mb{I}_D) \mb{I}_D^{-1}) - \log \det ((\mb{M} + \epsilon \mb{I}_D) \mb{I}_D^{-1}) - D - (1+ \log \epsilon)\tr(\mb{M}) \\
&	= & \sum_{j=1}^D (\pi_j+\epsilon) - \sum_{j=1}^D \log (\pi_j + \epsilon) - D - (1 + \log \epsilon) \sum_{j=1}^D \pi_j \\
&	= & \log(\frac{1}{\epsilon})\sum_{j=1}^D \pi_j - \sum_{j=1}^D \log (\pi_j + \epsilon) - D(1 - \epsilon).
\end{array}
\end{equation}

Now we upper bound the Rademacher complexity using the CLDD regularizer.
\begin{align*}
\log(\frac{1}{\epsilon})\mc{R}(\mc{M}) = & \frac{\log(\frac{1}{\epsilon})}{m} \E_{\mc{S},\mc{D}, \sigma}  \sup_{\mb{M} \in \mc{M}} \left[ \sum_{j=1}^D \pi_j \sum_{i=1}^m \sigma_i (\mb{v}_j^\top \mb{z}_i)^2 \right] \\
	\le & \frac{1}{m}  \E_{\mc{S},\mc{D}, \sigma} \sup_{\mb{\Pi}} \sum_{j=1}^D [(\log(\frac{1}{\epsilon})\pi_j - \log(\pi_j + \epsilon)) +  \log(\pi_j + \epsilon)] \sup_{\|\mb{v}\|_2 \le 1} \sum_{i=1}^m \sigma_i (\mb{v}^\top \mb{z}_i)^2 
\end{align*}
Similar to the proof of Lemma \ref{lem:rad trace}, we have
\begin{equation}
\label{eq:rc_bound}
\begin{array}{lll}
\log(\frac{1}{\epsilon})\mc{R}(\mc{M}) & \le& \frac{2B^2}{\sqrt{m}} \sup_{\mb{\Pi}} \sum_{j=1}^D [(\log(\frac{1}{\epsilon})\pi_j - \log(\pi_j + \epsilon)) +  \log(\pi_j + \epsilon)] \\
	& \le &\frac{2B^2}{\sqrt{m}}  [\sup_{\mb{M} \in \mc{M}} \hat{\Omega}_{ldd}(\mb{M}) +  \sup_{\mb{M} \in \mc{M}} \sum_{j=1}^D \log(\pi_j + \epsilon)].
\end{array}
\end{equation}
Denoting $A =  \sum_{j=1}^D \log(\pi_j + \epsilon)$, we bound $A$ with $\hat{\Omega}_{ldd}(\mb{M})$. Denoting $\bar{\pi} = (\sum_{j=1}^D \pi_j) / D = \tr(\mb{M}) / D$, by Jensen's inequality, we have
\begin{equation}\label{eq:A pi}
A \le D \log (\bar{\pi} + \epsilon), 
\end{equation}
then $\bar{\pi} \ge e^{A/D} - \epsilon$. Replacing $\bar{\pi}$ with $A$ in Eq.(\ref{eq:cldd_reformu}), we have
\begin{align*}
\hat{\Omega}_{ldd}(\mb{M}) \ge & D \log(1/\epsilon)(e^{A/D} - \epsilon) - A - D(1 - \epsilon)\\
	\ge & D \log(1/\epsilon)(\frac{A}{D} + 1 - \epsilon) - A - D(1 - \epsilon)\\
	= & (\log(1/\epsilon) - 1) A + [\log(\frac{1}{\epsilon}) - 1] D (1 - \epsilon).
\end{align*}
Further, 
\begin{equation}\label{eq:A up}
A \le \frac{\hat{\Omega}_{ldd}(\mb{M})}{\log(\frac{1}{\epsilon}) - 1} - D(1 - \epsilon).
\end{equation}
Substituting this upper bound of $A$ into Eq.(\ref{eq:rc_bound}),
we have
$$\mc{R}(\mc{M}) \le \frac{2B^2}{\sqrt{m}} \frac{\sup_{\mb{M} \in \mc{M}}\hat{\Omega}_{ldd}(\mb{M})}{\log(1/\epsilon)-1}. 
$$
\end{proof}

The next lemma shows the bound of $\tr(\mb{M})$.
\begin{lemma}\label{lem:ldd tr}
For any positive semidefinite matrix $\mb{M}$, we have
$$\tr(\mb{M}) \le \frac{\hat{\Omega}_{ldd}(\mb{M}) - D \epsilon}{\log(\frac{1}{\epsilon}) - 1}.$$
\end{lemma}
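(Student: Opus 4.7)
The plan is to mirror the strategy already deployed in Lemma~\ref{lem:ldd rad} and to reuse the eigenvalue representation of the CLDD regularizer obtained there. Writing the eigenvalues of $\mb{M}$ as $\pi_1,\dots,\pi_D$, that derivation gives
\begin{equation*}
\hat{\Omega}_{ldd}(\mb{M}) \;=\; \log(1/\epsilon)\,\tr(\mb{M}) \;-\; A \;-\; D(1-\epsilon), \qquad A := \sum_{j=1}^{D}\log(\pi_j+\epsilon).
\end{equation*}
Solving for $\tr(\mb{M})$ reduces the claim to an upper bound on $A$ that is linear in $\tr(\mb{M})$, so the entire proof boils down to controlling the quantity $A$ in terms of $\tr(\mb{M})$ and then rearranging.

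For the bound on $A$ the natural tool is Jensen's inequality applied to the concave function $\log(\cdot)$, which yields $A \le D\log(\bar{\pi}+\epsilon)$ with $\bar{\pi} := \tr(\mb{M})/D$; this is exactly the step already performed inside the proof of Lemma~\ref{lem:ldd rad}, so the groundwork is in place. To linearize the right-hand side I would then plug in the elementary tangent bound $\log y \le y - 1$ (valid for all $y>0$), giving $D\log(\bar{\pi}+\epsilon) \le \tr(\mb{M}) + D\epsilon - D$. Inserting this upper bound on $A$ into the identity above and combining the resulting $D(1-\epsilon)$ terms produces an inequality of the form $\hat{\Omega}_{ldd}(\mb{M}) \ge (\log(1/\epsilon)-1)\,\tr(\mb{M}) + \text{(explicit constant in }D,\epsilon\text{)}$, from which the claimed bound follows by dividing through by $\log(1/\epsilon)-1>0$ (a quantity that is positive as soon as $\epsilon<1/e$, which is consistent with the small-$\epsilon$ regime assumed throughout the paper).

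No deep obstacle arises; both Jensen and $\log y\le y-1$ are elementary. The only step requiring genuine care is the bookkeeping of the $D\epsilon$ and $D(1-\epsilon)$ constants: the precise form stated in the lemma (with the $-D\epsilon$ correction in the numerator) suggests that the tangent linearization for $\log$ should be taken at a point slightly different from $y=1$, or equivalently that one should use the tighter pointwise inequality $\log(\pi_j+\epsilon)\le \pi_j$ (valid for $\pi_j\ge 0$ whenever $\epsilon\le 1$) instead of Jensen followed by $\log y \le y-1$. I would therefore work out two variants of this linearization step in parallel and pick the one whose residual constant matches the stated $-D\epsilon$; the rest of the argument is purely algebraic rearrangement of the identity from Lemma~\ref{lem:ldd rad}.
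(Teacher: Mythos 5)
Your proposal is precisely the paper's own argument: start from the eigenvalue identity $\hat{\Omega}_{ldd}(\mb{M})=\log(1/\epsilon)\,\tr(\mb{M})-\sum_{j}\log(\pi_j+\epsilon)-D(1-\epsilon)$, bound the log-sum by $D\log(\bar{\pi}+\epsilon)$ via Jensen, linearize with the tangent inequality $\log y\le y-1$, and rearrange. The constant-bookkeeping worry you flag is real but the fault lies with the paper, not with you: the paper's second inequality implicitly uses $\log(\bar{\pi}+\epsilon)\le\bar{\pi}-1$, which fails (e.g.\ at $\bar{\pi}=1$, where the stated lemma itself is violated by an $O(D\epsilon)$ margin when $\mb{M}=\mb{I}_D$), whereas the correct bound $\log(\bar{\pi}+\epsilon)\le\bar{\pi}+\epsilon-1$ yields exactly your cleaner conclusion $\tr(\mb{M})\le\hat{\Omega}_{ldd}(\mb{M})/(\log(1/\epsilon)-1)$ without the $-D\epsilon$ term, a discrepancy that is immaterial to Theorem~\ref{thm:gen vnd} since $\epsilon$ is taken to be tiny.
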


\begin{proof}
\begin{align*}
\hat{\Omega}_{ldd}(\mb{M}) \ge & D \log(1/\epsilon)\bar{\pi} - D\log(\bar{\pi} + \epsilon) - D(1 - \epsilon)\\
	\ge & D \log(1/\epsilon) \bar{\pi} + D (1 - \bar{\pi}) - D(1 -\epsilon)\\
	= & D[\log(1/\epsilon) - 1] \bar{\pi} + D \epsilon.
\end{align*}
Then
$$\tr(\mb{M}) = D \bar{\pi} \le \frac{\hat{\Omega}_{ldd}(\mb{M}) - D \epsilon}{\log(\frac{1}{\epsilon}) - 1}.$$
\end{proof}

Combining Lemma \ref{lem:ldd rad}, \ref{lem:ldd tr}, and \ref{lem:bound lip}, we get the following generalization error bound w.r.t the convex LDD regularizer.
\begin{lemma}
Suppose $\sup_{\|\mb{v}\|_2\le 1, \mb{z}} |\mb{v}^\top \mb{z}| \le B$, then with probability at least $1-\delta$, we have
\begin{align*}
 \sup_{\mb{M} \in \mc{M}} (L(\mb{M}) - \hat{L}(\mb{M})) 
\le  \frac{4B^2\ \hat{\Omega}_{ldd}(\mb{M})}{\sqrt{m}\ [\log(1/\epsilon) - 1]} + \max\Big(\tau,  \frac{\hat{\Omega}_{ldd}(\mb{M}) - D \epsilon}{\log(\frac{1}{\epsilon}) - 1}\Big) \sqrt{\frac{2\log(1/\delta)}{m}}.
\end{align*}
\end{lemma}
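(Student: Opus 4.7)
The plan is to carry out a standard Rademacher complexity analysis, specialized to each of the three convex BMD regularizers. The overall skeleton proceeds in three stages: (i) upper bound $\mc{E} = L(\widehat{\mb{M}}^*) - \widehat{L}(\widehat{\mb{M}}^*)$ by $\sup_{\mb{M}\in\mc{M}}(L(\mb{M}) - \widehat{L}(\mb{M}))$ and then apply McDiarmid's inequality together with a symmetrization argument to yield a bound of the form $2\mc{R}(\mc{M}) + c_{\max}\sqrt{2\log(1/\delta)/m}$, where $c_{\max}$ is an almost-sure upper bound on the per-pair loss; (ii) bound $\mc{R}(\mc{M})$ by a ``structural'' quantity such as $\sup_{\mb{M}\in\mc{M}}\tr(\mb{M})$ or $\sup_{\mb{M}\in\mc{M}}\|\mb{M}-\mb{I}_D\|_F$; (iii) bound these structural quantities by the regularizer bound $C$.

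For stage (ii), I would diagonalize $\mb{M} = \mb{V}\mb{\Pi}\mb{V}^\top$ and rewrite $\mb{z}_i^\top \mb{M}\mb{z}_i = \sum_j \pi_j(\mb{v}_j^\top \mb{z}_i)^2$. Pulling $\pi_j \geq 0$ outside the sup gives a factor of $\tr(\mb{M})$ times $\E_\sigma \sup_{\|\mb{v}\|_2 \le 1}\sum_i \sigma_i (\mb{v}^\top\mb{z}_i)^2$. Since $t \mapsto t^2$ is $2B$-Lipschitz on $[-B, B]$, the Ledoux--Talagrand contraction principle together with Cauchy--Schwarz and Jensen's inequality reduces this to $2B \cdot B/\sqrt{m}$, yielding $\mc{R}(\mc{M}) \le (2B^2/\sqrt{m})\sup_\mc{M}\tr(\mb{M})$. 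The loss upper bound is $c_{\max} \le \max(\tau, B^2 \sup_\mc{M}\tr(\mb{M}))$ since the hinge term is capped at $\tau$ while the similar-pair term is at most $B^2\tr(\mb{M})$.

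For stage (iii), I would establish, regularizer by regularizer, the key inequality relating the spectral quantities to the regularizer: \textbf{CVND}: letting $\bar\pi = \tr(\mb{M})/D$, Jensen's inequality on the convex function $x\mapsto (x+\epsilon)\log(x+\epsilon)$ gives $\sum_j(\pi_j+\epsilon)\log(\pi_j+\epsilon) \ge D(\bar\pi+\epsilon)\log(\bar\pi+\epsilon)$; combining with $x\log x \ge x-1$ yields $\tr(\mb{M}) \le \widehat{\Omega}_{vnd}(\mb{M}) \le C$. \textbf{CLDD}: analogous manipulation on $\widehat{\Omega}_{ldd}(\mb{M}) = \log(1/\epsilon)\tr(\mb{M}) - \sum_j \log(\pi_j+\epsilon) - D(1-\epsilon)$ gives $\tr(\mb{M}) \le (C - D\epsilon)/(\log(1/\epsilon)-1)$; a parallel computation also lets me bound $\mc{R}(\mc{M})$ directly by $C/(\log(1/\epsilon)-1)$. \textbf{CSFN}: I need both $\tr(\mb{M}) \le C$ (trivial from $\widehat{\Omega}_{sfn}(\mb{M}) = \|\mb{M}-\mb{I}_D\|_F^2 + \tr(\mb{M})$) and $\|\mb{M}-\mb{I}_D\|_F \le \sqrt{C}$. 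The $\sqrt{C}$ route requires a separate Rademacher bound via Cauchy--Schwarz on the entrywise form $\sum_{j,k}(M_{jk}-\delta_{jk})\sum_i \sigma_i z_{ij}z_{ik}$, yielding $\mc{R}(\mc{M}) \le (B^2/\sqrt{m})\sup_\mc{M}\|\mb{M}-\mb{I}_D\|_F$. Taking the minimum of the two bounds produces the $\min(2C,\sqrt{C})$ factor.

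The main obstacle is the CSFN case: unlike CVND and CLDD, the CSFN regularizer does not directly control the spectrum of $\mb{M}$ in a single inequality, so one must run \emph{two} distinct Rademacher bounds (the trace-based one and the Frobenius-based one) and reconcile them via the $\min$. A secondary subtlety throughout is verifying that the chosen $c_{\max}$ is indeed an almost-sure bound needed for McDiarmid to apply with bounded differences of order $c_{\max}/m$; this requires combining the trace bounds from stage (iii) with the hypothesis $\sup_{\|\mb{v}\|_2 \le 1, (\mb{x},\mb{y}) \in \mc{S}}|\mb{v}^\top(\mb{x}-\mb{y})| \le B$. Once these pieces are in place, the three claimed bounds follow by straightforward substitution into the symmetrization inequality.
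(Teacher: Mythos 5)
Your proposal follows essentially the same route as the paper's proof: the symmetrization/McDiarmid skeleton $2\mc{R}(\mc{M})+c_{\max}\sqrt{2\log(1/\delta)/m}$ (the paper's Lemma \ref{lem:bound lip}), the contraction-based bound $\mc{R}(\mc{M})\le (2B^2/\sqrt{m})\sup_{\mb{M}\in\mc{M}}\tr(\mb{M})$ (Lemma \ref{lem:rad trace}), and the Jensen-plus-$\log x\le x-1$ argument giving $\tr(\mb{M})\le(\hat{\Omega}_{ldd}(\mb{M})-D\epsilon)/(\log(1/\epsilon)-1)$ (Lemma \ref{lem:ldd tr}). The only cosmetic difference is that the paper bounds $\mc{R}(\mc{M})$ by a separate direct computation in terms of $\hat{\Omega}_{ldd}$ (Lemma \ref{lem:ldd rad}) rather than just composing the trace-based Rademacher bound with the trace-versus-regularizer bound as you do; your shortcut yields the same (in fact marginally tighter) constant, so the argument goes through.
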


\section{Experiments}

\subsection{Details of Datasets and Feature Extraction}
\label{sec:data}

\paragraph{MIMIC-III}
We extract 7207-dimensional features: (1) 2 dimensions from demographics, including age and gender; (2) 5300 dimensions from clinical notes, including 5000-dimensional bag-of-words (weighted using \ti{tf-idf}) and 300-dimensional Word2Vec \cite{mikolov2013distributed}; (3) 1905-dimensions from lab tests where the zero-order, first-order and second-order temporal features are extracted for each of the 635 lab items. In the extraction of bag-of-words features from clinical notes, we remove stop words, then count the document frequency (DF) of the remaining words. Then we select the largest 5000 words to form the dictionary. Based on this dictionary, we extract \ti{tfidf} features. In the extraction of word2vec features, we train 300-dimensional embedding vector for each word using an open source word2vec tool\footnote{https://code.google.com/archive/p/word2vec/}. To represent a clinical note, we average the embeddings of all words in this note. In lab tests, there are 635 test items in total. An item is tested at different time points for each admission. For an item, we extract three types of temporal features: (1) \ti{zero-order}: averaging the values of this item measured at different time points; (2) \ti{first-order}: taking the difference of values at every two consecutive time points $t$ and $t-1$, and averaging these differences; (3) \ti{second-order}: for the sequence of first-order differences generated in (2), taking the difference (called second-order difference) of values at every two consecutive time points $t$ and $t-1$, and averaging these second-order differences. If an item is missing in an admission, we set the zero-order, first-order and second-order feature values to 0.

\paragraph{EICU}
There are 474 lab test items and 48 vital sign items. Each admission has a past medical history, which is a collection of diseases. There are 2644 unique past diseases. We extract the following features: (1) age and gender; (2) zero, first and second order temporal features of lab test and vital signs; (3) past medical history: we use a binary vector to encode them; if an element in the vector is 1, then the patient had the corresponding disease in the past.

\paragraph{Statistics of imbalanced datasets} Table \ref{tb:im} shows the statistics of imbalanced datasets. 

\begin{table}
\small
\centering
\begin{tabular}{l|c|c|c}
\hline
&MIMIC&EICU&Reuters\\
\hline
Num. of all classes &2833&2175&49\\
Num. of frequent classes &150&120&2\\
Num. of infrequent classes &2683&2055&47\\
Percentage of infrequent classes &94.7\%&94.5\%&95.9\%\\
\hline
Num. of all examples &58K&92K&5931\\
Num. of frequent examples &32712&59064&4108\\
Num. of infrequent examples &25288&32936&1823\\
Percentage of infrequent examples &43.6\%&35.8\%&30.7\%\\
\hline
\end{tabular}
\caption{ Statistics of Imbalanced Datasets.}
\label{tb:im}
\end{table}

\subsection{Additional Experimental Settings}
\label{sec:appen-set}

\begin{table}
\small
\centering
\begin{tabular}{l|c|c|c|c|c|c|c}
\hline
&MIMIC  &EICU  &Reuters&News&Cars&Birds&Act  \\
\cline{1-8}
MDML-$\ell_2$   & 0.01 & 0.01 & 0.001 &  0.1 & 0.01  &0.001 & 0.1\\
MDML-$\ell_1$~\cite{qi2009efficient}   & 0.01 & 0.01 & 0.1&0.01   & 0.01 &0.01 & 0.001 \\
MDML-$\ell_{2,1}$~\cite{ying2009sparse}   &0.001  & 0.01 &0.001&  0.001 & 0.1  & 0.1& 0.1 \\
MDML-Tr~\cite{liu2015low}   & 0.01 & 0.01 & 0.01& 0.001  & 0.1  & 0.01& 0.1 \\
MDML-IT~\cite{davis2007information}   &  0.001& 0.1 &0.1 & 0.01  &  0.1 &0.001 &0.01  \\
MDML-Drop~\cite{qian2014distance}   &  0.01& 0.01 &0.1 & 0.001  & 0.1  &0.1 & 0.01 \\
\hline
PDML-DC~\cite{malkin2008ratio}    & 0.01 & 0.1 &0.01 & 0.01  & 0.1  &0.1 & 0.01 \\
PDML-CS~\cite{yu2011diversity}    & 0.01 & 0.1 &0.01 & 1  & 0.001  &0.001 & 0.1 \\
PDML-DPP~\cite{zou2012priors}   & 0.1 & 0.01 & 0.001&0.1   & 0.1  &0.01 & 0.1 \\
PDML-IC~\cite{bao2013incoherent}    & 0.01 & 0.001 &0.01&  0.1 &  0.01 &0.1 &0.01  \\
PDML-DeC~\cite{cogswell2015reducing}   &  0.1& 0.001 & 0.01& 0.1  & 0.01  &0.1 & 1 \\
PDML-VGF~\cite{jalali2015variational}   & 0.01 & 0.01 & 0.1&  0.1 & 0.1  &0.001 & 0.01 \\
PDML-MA~\cite{xie2015learning}   & 0.001 & 1 &0.01 & 0.01 & 0.1  & 0.01& 0.01 \\
PDML-SFN~\cite{wang2012semi,chen2017diversity}    &  0.01& 0.01 &0.01 &  0.1 & 0.1  &0.01 &  0.1\\
PDML-VND~\cite{xienear}    & 0.01 & 0.1 & 0.01& 0.001  & 0.001  &0.1 & 0.01 \\
PDML-LDD~\cite{xienear}    &0.001  & 0.01 &0.1 &  0.001 &  0.01 &0.01 & 0.01 \\
\hline
\hline
MDML-CSFN    &0.01  & 0.001 & 0.01& 0.001  & 0.1 & 0.01& 0.1 \\
MDML-CVND    & 0.01 & 0.01 &0.1 & 0.1  & 0.1  &0.01 & 0.01 \\
MDML-CLDD    & 0.01 & 0.1 &0.01 &  0.001 &  0.01 &0.001 & 0.1 \\
\hline
\end{tabular}
\caption{ Best tuned regularization parameters via cross validation.}
\label{tb:hyperpara}
\end{table}

\begin{table}
\small
\centering
\begin{tabular}{lcccccccc}
\hline
&MIMIC & EICU & Reuters &News &Cars &Birds  &Act \\
\hline
LMNN  & 3.8 &4.0&0.4 &0.7 &0.6   & 0.7  &  0.3 \\
ITML  & 12.6 &11.4& 1.2&3.2 &3.0   & 2.7  & 0.8  \\
LDML  & 3.7 &3.4&0.3 & 0.6& 0.5 &0.6 &  0.2    \\
MLEC  & 0.4 &0.4& 0.026&0.049 & 0.043  &  0.044 & 0.018  \\
GMML  & 0.5 &0.4& 0.035&0.056 &0.052   &0.049   & 0.022  \\
\hline
MDML-$\ell_2$  & 3.4 &3.5& 0.3& 0.6& 0.5  &0.6  & 0.2  \\
MDML-$\ell_1$  & 3.4 &3.6& 0.5& 0.6& 0.5  &0.6   & 0.2  \\
MDML-$\ell_{2,1}$  & 3.5 &3.7& 0.3& 0.5&  0.5 &0.6   & 0.1  \\
MDML-Tr  & 3.4 &3.7& 0.3& 0.6&0.6   &0.4   & 0.3  \\
MDML-IT  & 5.2 &5.5& 0.5&0.9 &0.8   & 1.0  & 0.4  \\
MDML-Drop  & 9.5 & 10.4& 1.2&1.7 & 1.9  & 1.7  & 0.6  \\
\hline
\end{tabular}
\caption{ Training time (hours) of additional baselines.}
\label{tb:rt}
\end{table}

\begin{table}
\small
\centering
\begin{tabular}{l|c|c|c}
\hline
&MIMIC  &EICU  &Reuters  \\
\cline{2-4}
\hline
PDML&0.654$\pm$ 0.015&0.690$\pm$ 0.009&0.963$\pm$ 0.012\\
MDML&0.659$\pm$ 0.014&0.691$\pm$ 0.005&0.962$\pm$ 0.008\\
\hline
EUC&0.558$\pm$ 0.007&0.584$\pm$ 0.008&0.887$\pm$ 0.009\\
LMNN~\cite{davis2007information} &0.643$\pm$ 0.011&0.678$\pm$ 0.007&0.951$\pm$ 0.020\\
LDML~\cite{guillaumin2009you} &0.638$\pm$ 0.017&0.678$\pm$ 0.020&0.946$\pm$ 0.009\\
MLEC~\cite{kostinger2012large} &0.633$\pm$ 0.018&0.692$\pm$ 0.008&0.936$\pm$ 0.007\\
GMML~\cite{zadehgeometric} &0.621$\pm$ 0.017&0.679$\pm$ 0.006&0.938$\pm$ 0.011\\
ILHD~\cite{carreira2016ensemble} &0.590$\pm$ 0.006&0.652$\pm$ 0.018&0.919$\pm$ 0.014\\
\hline
MDML-$\ell_2$ &0.664$\pm$ 0.019&0.706$\pm$ 0.006&0.966$\pm$ 0.012\\
MDML-$\ell_1$~\cite{qi2009efficient} &0.664$\pm$ 0.017&0.715$\pm$ 0.015&0.967$\pm$ 0.005\\
MDML-$\ell_{2,1}$~\cite{ying2009sparse} &0.658$\pm$ 0.008&0.727$\pm$ 0.016&0.970$\pm$ 0.008\\
MDML-Tr~\cite{liu2015low} &0.672$\pm$ 0.011&0.709$\pm$ 0.004&0.969$\pm$ 0.015\\
MDML-IT~\cite{davis2007information} &0.673$\pm$ 0.009&0.705$\pm$ 0.007&0.964$\pm$ 0.007\\
MDML-Drop~\cite{qian2014distance} &0.660$\pm$ 0.016&0.718$\pm$ 0.006&0.968$\pm$ 0.010\\
MDML-OS &0.665$\pm$ 0.009 & 0.711$\pm$ 0.007 & 0.968$\pm$ 0.012\\
\hline
PDML-DC~\cite{malkin2008ratio}&0.662$\pm$ 0.005&0.717$\pm$ 0.012&0.976$\pm$ 0.007\\
PDML-CS~\cite{yu2011diversity}&0.676$\pm$ 0.019&0.736$\pm$ 0.007&0.973$\pm$ 0.011\\
PDML-DPP~\cite{zou2012priors} &\tb{0.679}$\pm$ 0.008&0.725$\pm$ 0.010&0.972$\pm$ 0.015\\
PDML-IC~\cite{bao2013incoherent}&0.674$\pm$ 0.010&0.726$\pm$ 0.005&0.984$\pm$ 0.019\\
PDML-DeC~\cite{cogswell2015reducing} &0.666$\pm$ 0.007&0.711$\pm$ 0.015&0.977$\pm$ 0.011\\
PDML-VGF~\cite{jalali2015variational} &0.674$\pm$ 0.007&0.730$\pm$ 0.011&0.988$\pm$ 0.008\\
PDML-MA~\cite{xie2015learning} &0.670$\pm$ 0.009&0.731$\pm$ 0.006&0.983$\pm$ 0.007\\
PDML-SFN~\cite{wang2012semi,fu2014nokmeans,ge2014graph,chen2017diversity}&0.677$\pm$ 0.011&0.736$\pm$ 0.013&0.984$\pm$ 0.009\\
PDML-OC~\cite{liu2008output,wang2015deep}&0.663$\pm$ 0.005&0.716$\pm$ 0.010
&0.966$\pm$ 0.017\\
PDML-OS &0.658$\pm$ 0.006&0.691$\pm$ 0.004& 0.965$\pm$ 0.009\\
PDML-VND~\cite{xienear}&0.676$\pm$ 0.013&0.748$\pm$ 0.020&0.983$\pm$ 0.007\\
PDML-LDD~\cite{xienear}&0.674$\pm$ 0.012&0.743$\pm$ 0.006&0.981$\pm$ 0.009\\
\hline
MDML-CSFN&\tb{0.679}$\pm$ 0.009&0.741$\pm$ 0.011&0.991$\pm$ 0.010\\
MDML-CVND&0.678$\pm$ 0.007&0.744$\pm$ 0.005&\tb{0.994}$\pm$ 0.008\\
MDML-CLDD&0.678$\pm$ 0.012&\tb{0.750}$\pm$ 0.006&0.991$\pm$ 0.006\\
\hline
\end{tabular}
\caption{ Mean AUC and standard errors on frequent classes.}
\label{tb:auc_f}
\end{table}

\paragraph{Retrieval settings}  For each test example, we use it to query the rest of test examples based on the learned distance metric. If the distance between $\mb{x}$ and $\mb{y}$ is smaller than a threshold $s$ and they have the same class label, then this is a true positive. By choosing different values of $s$, we obtain a receiver operating characteristic (ROC) curve. For AUC on infrequent classes, we use examples belonging to infrequent classes to query the entire test set (excluding the query). AUC on frequent classes is measured in a similar way. \begin{table}
\scriptsize
\centering
\captionsetup{font=footnotesize}
\begin{tabular}{l||ccc|ccc|ccc||c|c|c|c}
\hline
&\multicolumn{3}{c|}{MIMIC}  &\multicolumn{3}{c|}{EICU}  &\multicolumn{3}{c||}{Reuters}&News&Cars&Birds&Act  \\
\cline{2-14}
&A-All & A-IF & BS &A-All & A-IF & BS &A-All & A-IF & BS &A-All&A-All&A-All& A-All\\
\hline
PDML  	&0.008&	0.019&	0.014&	0.007&	0.009&	0.010&	0.005&	0.022&	0.017&	0.005&	0.021&	0.006&	0.016\\
MDML  	&0.020&	0.006&	0.024&	0.009&	0.016&	0.009&	0.011&	0.015&	0.012&	0.008&	0.017&	0.013&	0.021\\
\hline
EUC	&0.008&	0.005&	0.012&	0.010&	0.006&	0.015&	0.017&	0.006&	0.008&	0.024&	0.016&	0.021&	0.010\\
LMNN~\cite{davis2007information} 	&0.013&	0.022&	0.009&	0.011&	0.016&	0.009&	0.014&	0.018&	0.022&	0.020&	0.011&	0.017&	0.008\\
LDML~\cite{guillaumin2009you} 	&0.025&	0.014&	0.023&	0.008&	0.005&	0.012&	0.024&	0.007&	0.011&	0.010&	0.008&	0.011&	0.005\\
MLEC~\cite{kostinger2012large} 	&0.012&	0.018&	0.016&	0.011&	0.017&	0.020&	0.005&	0.021&	0.007&	0.019&	0.007&	0.023&	0.013\\
GMML~\cite{zadehgeometric} 	&0.008&	0.011&	0.020&	0.021&	0.024&	0.013&	0.016&	0.011&	0.009&	0.008&	0.013&	0.007&	0.010\\
ILHD~\cite{carreira2016ensemble} 	&0.013&	0.017&	0.007&	0.010&	0.022&	0.004&	0.013&	0.020&	0.006&	0.018&	0.011&	0.015&	0.012\\
\hline
MDML-$\ell_2$ 	&0.016&	0.011&	0.013&	0.021&	0.005&	0.013&	0.007&	0.023&	0.016&	0.022&	0.007&	0.021&	0.025\\
MDML-$\ell_1$~\cite{qi2009efficient} 	&0.018&	0.020&	0.006&	0.013&	0.018&	0.014&	0.023&	0.006&	0.013&	0.017&	0.022&	0.018&	0.009\\
MDML-$\ell_{2,1}$~\cite{ying2009sparse} 	&0.012&	0.008&	0.017&	0.016&	0.012&	0.022&	0.015&	0.014&	0.020&	0.012&	0.024&	0.019&	0.015\\
MDML-Tr~\cite{liu2015low} 	&0.011&	0.024&	0.009&	0.022&	0.007&	0.011&	0.012&	0.007&	0.015&	0.013&	0.009&	0.018&	0.010\\
MDML-IT~\cite{davis2007information} 	&0.013&	0.009&	0.017&	0.020&	0.016&	0.021&	0.015&	0.017&	0.013&	0.019&	0.011&	0.008&	0.016\\
MDML-Drop~\cite{qian2014distance} 	&0.005&	0.014&	0.008&	0.027&	0.013&	0.016&	0.005&	0.023&	0.009&	0.008&	0.006&	0.024&	0.025\\
\hline
PDML-DC~\cite{malkin2008ratio}  	&0.008&	0.017&	0.019&	0.006&	0.015&	0.009&	0.011&	0.012&	0.018&	0.014&	0.017&	0.023&	0.008\\
PDML-CS~\cite{yu2011diversity}  	&0.019&	0.022&	0.017&	0.021&	0.023&	0.010&	0.007&	0.020&	0.016&	0.012&	0.013&	0.014&	0.022\\
PDML-DPP~\cite{zou2012priors} 	&0.014&	0.006&	0.011&	0.009&	0.008&	0.017&	0.018&	0.007&	0.013&	0.011&	0.006&	0.022&	0.005\\
PDML-IC~\cite{bao2013incoherent}  	&0.007&	0.009&	0.011&	0.006&	0.014&	0.015&	0.006&	0.017&	0.023&	0.007&	0.005&	0.019&	0.008\\
PDML-DeC~\cite{cogswell2015reducing} 	&0.019&	0.024&	0.021&	0.008&	0.006&	0.009&	0.015&	0.018&	0.006&	0.014&	0.008&	0.012&	0.018\\
PDML-VGF~\cite{jalali2015variational} 	&0.009&	0.008&	0.017&	0.013&	0.019&	0.010&	0.015&	0.009&	0.014&	0.008&	0.022&	0.021&	0.008\\
PDML-MA~\cite{xie2015learning} 	&0.021&	0.014&	0.009&	0.005&	0.019&	0.021&	0.011&	0.014&	0.016&	0.013&	0.011&	0.007&	0.009\\
PDML-SFN~\cite{wang2012semi,fu2014nokmeans,ge2014graph,chen2017diversity}  	&0.015&	0.021&	0.006&	0.022&	0.007&	0.017&	0.013&	0.010&	0.008&	0.023&	0.016&	0.024&	0.012\\
PDML-OC~\cite{liu2008output,wang2015deep}  	&0.016&	0.010&	0.011&	0.007&	0.018&	0.008&	0.019&	0.023&	0.016&	0.015&	0.011&	0.005&	0.009\\
PDML-VND~\cite{xienear}  	&0.009&	0.018&	0.007&	0.024&	0.011&	0.019&	0.021&	0.017&	0.022&	0.014&	0.006&	0.012&	0.025\\
PDML-LDD~\cite{xienear}  	&0.021&	0.012&	0.008&	0.018&	0.017&	0.013&	0.011&	0.007&	0.009&	0.007&	0.012&	0.006&	0.016\\
\hline
MDML-CSFN  	&0.011&	0.009&	0.013&	0.007&	0.008&	0.014&	0.009&	0.012&	0.008&	0.025&	0.007&	0.004&	0.011\\
MDML-CVND  	&0.006&	0.007&	0.011&	0.012&	0.014&	0.009&	0.012&	0.013&	0.006&	0.009&	0.011&	0.014&	0.013\\
MDML-CLDD  	&0.009&	0.012&	0.011&	0.010&	0.005&	0.013&	0.018&	0.005&	0.012&	0.011&	0.015&	0.008&	0.010\\
\hline
\end{tabular}
\caption{ Standard errors.}
\label{tb:se}
\end{table}

\begin{table}
\scriptsize
\centering
\captionsetup{font=footnotesize}
\begin{tabular}{l||c|c|c||c|c|c|c}
\hline
&MIMIC &EICU  &Reuters&News&Cars&Birds&Act  \\
\hline
PDML    &  0.175  &  0.145  &  0.043  &  0.095  &  0.149  &  0.075  &  0.045\\
MDML    &  0.187  &  0.142  &  0.045  &  0.087  &  0.124  &  0.066  &  0.042\\
\hline
LMNN   &  0.183  &  0.153  &  0.031  &  0.093  &  0.153  &  0.073  &  0.013\\
LDML   &  0.159  &  0.139  &  0.034  &  0.079  &  0.131  &  0.072  &  0.068\\
MLEC   &  0.162  &  0.131  &  0.042  &  0.088  &  0.151  &  0.039  &  0.043\\
GMML   &  0.197  &  0.157  &  0.051  &  0.063  &  0.118  &  0.067  &  0.036\\
ILHD   &  0.164  &  0.162  &  0.048  &  0.077  &  0.117  &  0.045  &  0.059\\
\hline
MDML-$\ell_2$   &  0.184  &  0.136  &  0.037  &  0.072  &  0.105  &  0.053  &  0.041\\
MDML-$\ell_1$   &  0.173  &  0.131  &  0.042  &  0.064  &  0.113  &  0.061  &  0.026\\
MDML-$\ell_{2,1}$   &  0.181  &  0.129  &  0.034  &  0.073  &  0.121  &  0.044  &  0.024\\
MDML-Tr   &  0.166  &  0.138  &  0.024  &  0.076  &  0.111  &  0.058  &  0.037\\
MDML-IT   &  0.174  &  0.134  &  0.033  &  0.061  &  0.109  &  0.036  &  0.013\\
MDML-Drop   &  0.182  &  0.140  &  0.021  &  0.076  &  0.114  &  0.063  &  0.024\\
MDML-OS   &  0.166  &  0.133  &  0.032  &  0.063  &  0.108  &  0.057  &  0.031\\
\hline
PDML-DC    &  0.159  &  0.131  &  0.035  &  0.069  &  0.127  &  0.064  &  0.035\\
PDML-CS    &  0.163  &  0.135  &  0.031  &  0.083  &  0.103  &  0.045  &  0.033\\
PDML-DPP   &  0.147  &  0.140  &  0.038  &  0.067  &  0.117  &  0.072  &  0.041\\
PDML-IC    &  0.155  &  0.127  &  0.018  &  0.075  &  0.116  &  0.074  &  0.029\\
PDML-DeC   &  0.164  &  0.123  &  0.023  &  0.082  &  0.125  &  0.051  &  0.033\\
PDML-VGF   &  0.158  &  0.136  &  0.014  &  0.064  &  0.136  &  0.035  &  0.028\\
PDML-MA   &  0.143  &  0.128  &  0.023  &  0.078  &  0.102  &  0.031  &  0.042\\
PDML-OC    &  0.161  &  0.142  &  0.032  &  0.061  &  0.111  &  0.063  &  0.034\\
PDML-OS   &  0.169  &  0.137  &  0.015  &  0.083  &  0.119  &  0.058  &  0.042\\
PDML-SFN    &  0.153  &  0.126  &  0.022  &  0.069  &  0.127  &  0.043  &  0.028\\
PDML-VND    &  0.148  &  0.135  &  0.019  &  0.078  &  0.116  &  0.067  &  0.035\\
PDML-LDD    &  0.146  &  0.121  &  0.017  &  0.054  &  0.111  &  0.036  &  0.021\\
\hline
MDML-CSFN    &  0.142  &  0.124  &  0.019  &  0.062  &  0.092  &  0.043  &  0.019\\
MDML-CVND    &  0.137  &  0.115  &  0.008  &  0.055  &  0.094  &  0.038  &  0.013\\
MDML-CLDD    &  0.131  &  0.118  &  0.012  &  0.058  &  0.089  &  0.026  &  0.016\\
\hline
\end{tabular}
\caption{ The gap of training AUC and testing AUC (training-AUC minus testing-AUC)}
\label{tb:gap}
\end{table}

For computational efficiency, in MDML-based methods, we do not use $(\mb{x}-\mb{y})^\top\mb{M}(\mb{x}-\mb{y})$ to compute distance directly. Given the learned matrix $\mb{M}$ (which is of rank $k$), we can decompose it into $\mb{L}^\top\mb{L}$ where $\mb{L}\in\mathbb{R}^{k\times d}$. Let $\mb{U}\bs\Lambda\mb{U}^\top$ be the eigen-decomposition of $\mb{M}$. Let $\lambda_1,\cdots,\lambda_k$ denote the $k$ nonzero eigenvalues and $\mb{u}_i,\cdots,\mb{u}_k$ denote the corresponding eigenvectors. Then $\mb{L}$ is the transpose of $[\sqrt{\sigma_1}\mb{u}_1,\cdots,\sqrt{\sigma_k}\mb{u}_k]$. Given $\mb{L}$, we can use it to transform each input $d$-dimensional feature vector $\mb{x}$ into a new $k$-dimensional vector $\mb{Lx}$, then perform retrieval on the new vectors based on Euclidean distance. Note that only when computing Euclidean distance between $\mb{Lx}$ and $\mb{Ly}$, we have that $\|\mb{Lx}-\mb{Ly}\|_2^2$ is equivalent to $(\mb{x}-\mb{y})^\top\mb{M}(\mb{x}-\mb{y})$. For other distances or similarity measures between $\mb{Lx}$ and $\mb{Ly}$, such as L1 distance and cosine similarity, this does not hold. Performing retrieval based on $\|\mb{Lx}-\mb{Ly}\|_2^2$ is more efficient than that based on $(\mb{x}-\mb{y})^\top\mb{M}(\mb{x}-\mb{y})$ when $k$ is smaller than $d$. Given $m$ test examples, the computation complexity of $\|\mb{Lx}-\mb{Ly}\|_2^2$ based retrieval is $O(mkd+m^2k)$, while that of $(\mb{x}-\mb{y})^\top\mb{M}(\mb{x}-\mb{y})$ based retrieval is $O(m^2d^2)$. 

\paragraph{Additional details of baselines}
In the Large Margin Nearest Neighbor (LMNN) DML method~\cite{weinberger2005distance}, there is a nonconvex formulation and a convex formulation. We used the convex one.  

Though the variational Gram function (VGF)~\cite{jalali2015variational} is convex, when it is used to regularize PDML, the overall problem is non-convex and it is unclear how to seek a convex relaxation. In Geometric Mean Metric Learning (GMML)~\cite{zadehgeometric}, the prior matrix was set to an identity matrix. In Independent Laplacian Hashing with Diversity (ILHD)~\cite{carreira2016ensemble}, we use the ILTitf variant. The hash function is kernel SVM with a radial basis function (RB) kernel. We did not compare with unsupervised hashing methods~\cite{weiss2009spectral,kong2012isotropic,gong2013iterative,ge2014optimized,ji2014batch}.

\paragraph{Hyperparameters}
Table \ref{tb:hyperpara} shows the best tuned regularization parameters on different datasets. In Orthogonal Constraints (OR)~\cite{liu2008output,wang2015deep}, there is no regularization parameter. In dropout~\cite{qian2014distance}, the regularization parameter designates the probability of dropping elements in the Mahalanobis matrix. In LMNN, the weighting parameter $\mu$ was set to 0.5. In GMML~\cite{zadehgeometric}, the regularization parameter $\lambda$ was set to 0.1. The step length $t$ of geodesic was set to 0.3. In ILHD~\cite{carreira2016ensemble}, the scale parameter of the RBF kernel was set to 0.1.

\subsection{Additional Experimental Results}
\label{sec:res}

\paragraph{Training time of other baselines} Table \ref{tb:rt} shows the training time of additional baselines.

\paragraph{AUC on frequent classes} Table \ref{tb:auc_f}
shows the mean AUC and standard errors on frequent classes. MDML-(CSFN,CVND,CLDD) achieve better mean AUC than the baselines.

\paragraph{Standard errors} 
Table~\ref{tb:se} shows the standard errors of AUC on all classes and infrequent classes and standard errors of balance scores.

\paragraph{Gap between training AUC and testing AUC}
Table~\ref{tb:gap} shows the gap between training AUC and testing AUC (training-AUC minus testing-AUC).

\paragraph{Convex regularizers versus the nonconvex regularizers} We verify that the convex regularizers including CSFN, CLDD and CVND are good approximations of the nonconvex regularizers including SFN, LDD and VND. Figure \ref{fig:approx} shows that these nonconvex regularizers consistently decrease as we increase the regularization parameter in MDML-(CSFN,CLDD,CVND). Since MDML-(CSFN,CLDD,CVND) are to be minimized, increasing the regularization parameter decreases CSFN,CLDD,CVND. In other words, these figures show that smaller CSFN,CLDD,CVND leads to smaller SFN, LDD and VND. This demonstrates that CSFN,CLDD,CVND are good approximations of SFN,LDD and VND.

\begin{figure*}[t]
\begin{center}
\includegraphics[width=0.3\textwidth]{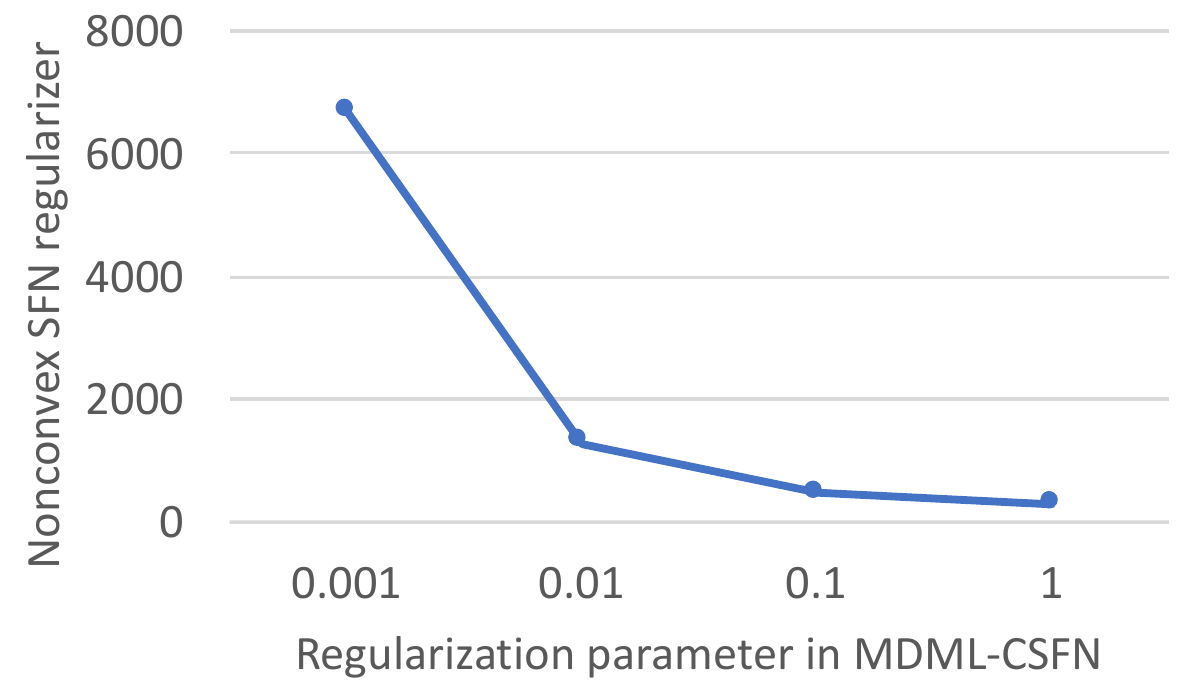}
\hspace{0.1in}
\includegraphics[width=0.33\textwidth]{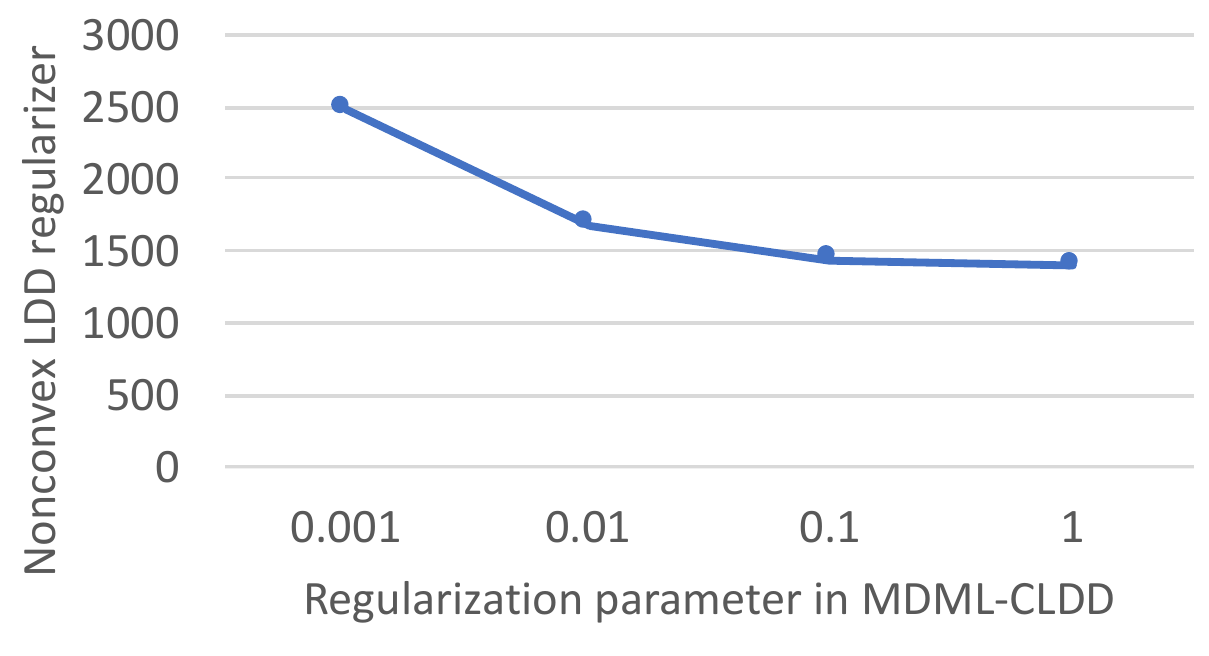}
\hspace{0.1in}
\includegraphics[width=0.3\textwidth]{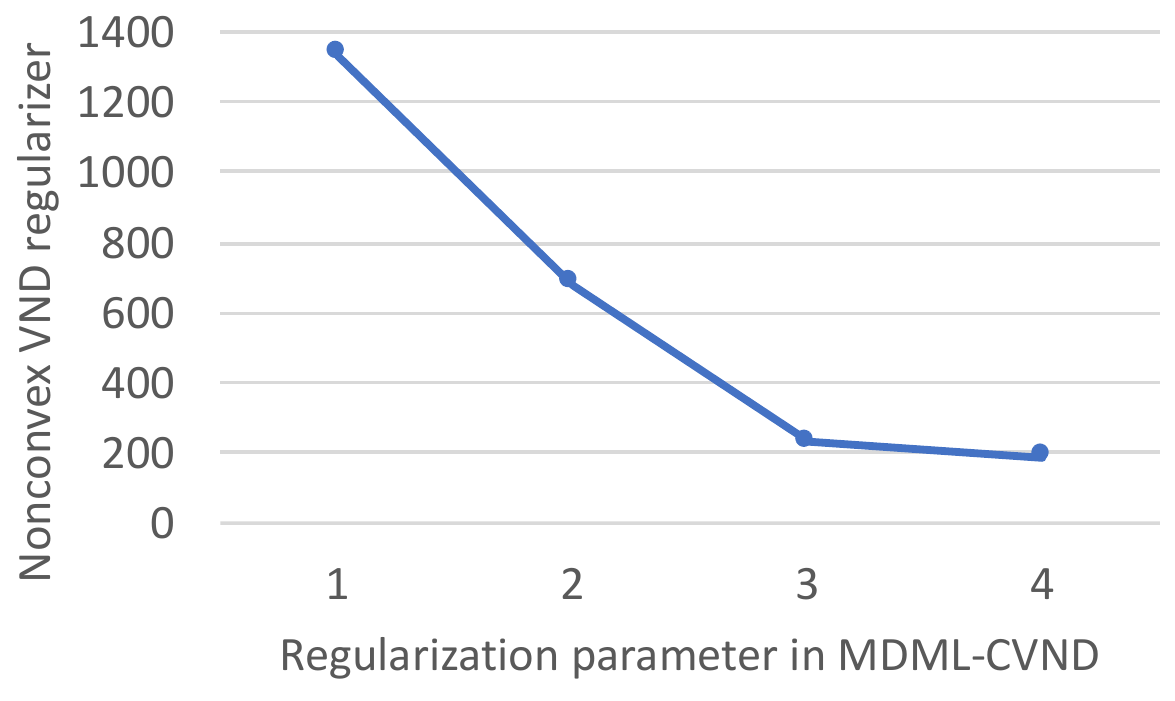}
\end{center}
\caption{(Left) Nonconvex SFN regularizer versus the regularization parameter in MDML-CSFN; (Middle) Nonconvex LDD regularizer versus the regularization parameter in MDML-CLDD; (Right) Nonconvex VND regularizer versus the regularization parameter in MDML-CVND.}
\label{fig:approx}
\end{figure*}

\paragraph{Additional experimental analysis
}
\begin{itemize}
    \item \tb{Training time} Unregularized PDML runs faster that regularized PDML methods because it has no need to tune the regularization parameter, which reduces the number of experimental runs by 4 times. Unregularized MDML runs faster than regularized MDML methods because it has no need to tune the regularization parameter or the number of projection vectors, which reduces the number of experimental runs by 12 times. PDML-(DC,DPP,VND,LDD) takes longer time than other regularized PDML methods since they need eigendecomposition to compute the gradients. PDML-OC has no regularization parameter to tune, hence its number of experimental runs is 4 times fewer than other regularized PDML methods.
    \item \tb{Balancedness} In most DML methods, the AUC on infrequent classes is worse than that on frequent classes, showing that DML is sensitive to the imbalance of pattern-frequency, tends to be biased towards frequent patterns and is less capable to capture infrequent patterns. This is in accordance with previous study~\cite{xie2015diversifying}.
\end{itemize}

{\small
\bibliographystyle{plain}
\bibliography{refs}
}

\end{document}